\newif\ifprivatemode
\pgfplotsset{compat=1.18}
\definecolor{ETHBlue}{RGB}{33,92,175}	
\definecolor{ETHGreen}{RGB}{98,115,19}		
\definecolor{ETHPurpleDark}{RGB}{140,10,89}	
\definecolor{ETHPurple}{RGB}{163,7,116}	
\definecolor{ETHGray}{RGB}{111,111,111}	
\definecolor{ETHRed}{RGB}{183,53,45}	
\definecolor{ETHPetrol}{RGB}{0,120,148}	
\definecolor{ETHBronze}{RGB}{142,103,19}	
\colorlet{ETHdarkblue}{ETHBlue!80!black}
\colorlet{ETHdarkgreen}{ETHGreen!80!black}
\colorlet{ETHpink}{ETHPurple}
\colorlet{ETHgray}{ETHGray}
\colorlet{ETHred}{ETHRed}
\colorlet{ETHgreenblue}{ETHPetrol}
\colorlet{ETHbrown}{ETHBronze}
\definecolor{TextBlack}{RGB}{51,51,51}
\definecolor{BackgroundWhite}{RGB}{255,255,255}
\definecolor{AccentBlue}{RGB}{0,122,204}
\definecolor{LightBlue}{RGB}{173,216,230}
\definecolor{DarkBlue}{RGB}{0,51,102}
\definecolor{AccentGreen}{RGB}{70,160,73}
\definecolor{LightGreen}{RGB}{144,238,144}
\definecolor{DarkGreen}{RGB}{0,100,0}
\definecolor{AccentRed}{RGB}{255,0,0}
\definecolor{LightRed}{RGB}{255,99,71}
\definecolor{DarkRed}{RGB}{139,0,0}
\definecolor{AccentOrange}{RGB}{255,165,0}
\definecolor{LightOrange}{RGB}{255,204,153}
\definecolor{DarkOrange}{RGB}{255,140,0}
\definecolor{NeutralLightGray}{RGB}{204,204,204}
\definecolor{NeutralMediumGray}{RGB}{102,102,102}
\definecolor{NoteYellow}{RGB}{255,255,0}
\definecolor{DiversePurple}{RGB}{128,0,128}
\definecolor{DiverseTeal}{RGB}{0,128,128}
\definecolor{DiverseOlive}{RGB}{128,128,0}
\definecolor{DiverseCyan}{RGB}{0,128,192}
\definecolor{DiverseMagenta}{RGB}{192,0,128}
\colorlet{MacroColor}{ETHPetrol}
\colorlet{MACROCOLOR}{MacroColor}
\crefname{section}{\S}{\S\S} 
\Crefname{section}{\S}{\S\S} 
\crefname{table}{Table}{Tables}
\crefname{figure}{Figure}{Figures}
\crefname{algorithm}{Algorithm}{}
\crefname{equation}{Eq.}{Eqs.}
\crefname{appendix}{App.}{}
\crefname{line}{Line}{Lines}
\crefname{definition}{Def.}{Definitions}
\crefname{lemma}{Lemma}{Lemmas}
\newtheorem{definition}{Definition}
\newtheorem{theorem}{Theorem}
\newtheorem{lemma}{Lemma}
\algrenewcommand\algorithmicindent{1.0em}%
\newcommand{\rightcomment}[1]{{\color{gray} \(\triangleright\){\footnotesize\textit{#1}}}}
\algrenewcommand{\algorithmiccomment}[1]{\hfill \rightcomment{#1}}  
\algnewcommand{\LinesComment}[1]{\State {\color{black!50!green}\rightcomment{\parbox[t]{.95\linewidth-\leftmargin-\widthof{\(\triangleright\) }}{#1}}}}
\algnewcommand{\LineComment}[1]{\State {\color{black!50!green}\smaller \(\triangleright\) \parbox[t]{\linewidth-\leftmargin-\widthof{\(\triangleright\) }}{\it #1}\smallskip}} 
\algnewcommand{\InlineComment}[1]{\hfill {\color{black!50!green}\(\triangleright\) {\scriptsize \it #1}}}
\algrenewcommand\algorithmicindent{1.0em}%
\algrenewcommand\alglinenumber[1]{{\tiny\color{black!50}#1.}\hspace{-2pt}}
\newcommand{\algorithmicfunc}[1]{\textbf{def} {#1}:}
\tikzset{
-{Latex[length=1.8mm, width=1.4mm]},
transition/.style={-{Latex[length=1.8mm, width=1.4mm]}}, 
every loop/.append style=-{Latex[length=1.8mm, width=1.4mm]},
initial text=$ $,
node distance=1.5in,
every state/.style={thick, minimum size=6pt},
every initial by arrow/.style={text=ETHGreen,-{Latex[length=1.8mm, width=1.4mm]}},
initial text = $ $,
every edge/.style={draw,->,>=stealth',auto}
}
\newcommand{\doi}[1]{\href{https://doi.org/#1}{{doi: \urlstyle{rm}\nolinkurl{#1}}}}
    \newcommand{\macro}[1]{{\textcolor{ETHPetrol}{#1}}}
    \newcommand{\macro}[1]{{#1}}
\newcommand{\defn}[1]{\textbf{#1}}
\newcommand{\tablemeanstddev}[2]{#1\textsubscript{\ensuremath{\pm}#2}}
\newcommand{\newsemiringname}{\macro{binning semiring}}
\newcommand{\ethz}{1}
\newcommand{\duke}{2}
\newcommand{\copenhagen}{3}
\newcommand{\numruns}{10}
\newcommand{\numrunstimeslossfunctions}{{\pgfmathtruncatemacro{\result}{\numruns*4}\result}}
\newcommand{\languageevenpairs}{Even Pairs}
\newcommand{\languagerepeatzeroone}{Repeat 01}
\newcommand{\languageparity}{Parity}
\newcommand{\languagecyclenavigation}{Cycle Navigation}
\newcommand{\languagemodulararithmeticsimple}{Modular Arithmetic}
\newcommand{\languagedycktwothree}{Dyck-\ensuremath{(2, 3)}}
\newcommand{\languagefirst}{First}
\newcommand{\languagemajority}{Majority}
\newcommand{\languagestackmanipulation}{Stack Manipulation}
\newcommand{\languagemarkedreversal}{Marked Reversal}
\newcommand{\languageunmarkedreversal}{Unmarked Reversal}
\newcommand{\languagemarkedcopy}{Marked Copy}
\newcommand{\languagemissingduplicatestring}{Missing Duplicate}
\newcommand{\languageoddsfirst}{Odds First}
\newcommand{\languagebinaryaddition}{Binary Addition}
\newcommand{\languagebinarymultiplication}{Binary Multiplication}
\newcommand{\languagecomputesqrt}{Compute Sqrt}
\newcommand{\languagebucketsort}{Bucket Sort}
\newcommand{\transformerabbrev}{Tf}
\newcommand{\rnnabbrev}{RNN}
\newcommand{\lstmabbrev}{LSTM}
\newcommand{\recognitionabbrev}{R}
\newcommand{\languagemodelingabbrev}{LM}
\newcommand{\nextsymbolsabbrev}{NS}
\newcommand{\validationshortabbrev}{S}
\newcommand{\validationlongabbrev}{L}
\newcommand{\defeq}{\mathrel{\stackrel{\textnormal{\tiny def}}{=}}}
\newcommand{\realset}{\macro{\R}}
\newcommand{\nonnegativeset}{{\macro{\realset_{\geq 0}}}}
\newcommand{\naturalnumberset}{\macro{\mathbb{Z}_{\geq 0}}}
\newcommand{\bigo}[1]{{\macro{O}(#1)}}
\newcommand{\vecvar}[1]{\bm{#1}}
\newcommand{\matvar}[1]{\bm{#1}}
\newcommand{\indicator}[1]{{\mathbbm{1}\!\left[#1\right]}}
\newcommand{\logistic}[1]{\sigmoid(#1)}
\newcommand{\elementwisemultiply}{\mathbin{\macro{\odot}}}
\newcommand{\samplefrom}{\sim}
\newcommand{\emptystring}{\macro{\varepsilon}}
\newcommand{\reverse}[1]{#1^\macro{R}}
\newcommand{\sym}[1]{\texttt{#1}}
\newcommand{\kleene}[1]{{#1^\macro{\ast}}}
\newcommand{\stringlength}[1]{|#1|}
\newcommand{\boxedsym}[1]{{\setlength{\fboxsep}{1pt}\fbox{\raisebox{0pt}[1.5ex][0.25ex]{\ensuremath{#1}}}}}
\newcommand{\binaryencoding}[1]{\langle{#1}\rangle}
\newcommand{\substringcount}[2]{c_{#2}(#1)}
\newcommand{\bos}{\macro{\textsc{bos}}}
\newcommand{\eos}{\macro{\textsc{eos}}}
\newcommand{\symbolpop}{\boxedsym{\textsc{pop}}}
\newcommand{\symbolpush}{\boxedsym{\textsc{push}}}
\newcommand{\symbolunderscore}{\texttt{\char`\_}}
\newcommand{\monoidop}{\mathbin{\macro{\odot}}}
\newcommand{\monoididentity}{\macro{\textbf{I}}}
\newcommand{\semiringzero}{\macro{\bm{0}}}
\newcommand{\semiringone}{\macro{\bm{1}}}
\newcommand{\semiringexp}[2]{#1^{\otimes #2}}
\newcommand{\thesemiringset}{{\macro{\mathbb{K}}}}
\newcommand{\stringeditdistance}[2]{\macro{\mathcal{D}}(#2, #1)}
\newcommand{\languageeditdistance}[2]{\macro{\mathcal{D}}(#2, #1)}
\newcommand{\vectorparam}[2]{\vecvar{#1}_{\mathrm{#2}}}
\newcommand{\vectorparaml}[3]{\vectorparam{#1}{#2}^{(#3)}}
\newcommand{\lineartransform}[2]{\weightparam{#1} #2}
\newcommand{\lineartransforml}[3]{\weightparaml{#1}{#2} #3}
\newcommand{\affine}[2]{\lineartransform{#1}{#2} + \biasparam{#1}}
\newcommand{\affinetoscalar}[2]{\scalarweightparam{#1} \cdot #2 + \scalarbiasparam{#1}}
\newcommand{\affinel}[3]{\lineartransforml{#1}{#2}{#3} + \biasparaml{#1}{#2}}
\newcommand{\weightparamletter}{\macro{\matvar{W}}}
\newcommand{\weightparam}[1]{\macro{\weightparamletter_{\mathrm{#1}}}}
\newcommand{\weightparaml}[2]{\macro{\weightparamletter^{(#2)}_{\mathrm{#1}}}}
\newcommand{\scalarweightparam}[1]{\macro{\vecvar{w}_{\mathrm{#1}}}}
\newcommand{\biasparamletter}{\macro{\vecvar{b}}}
\newcommand{\biasparam}[1]{\macro{\biasparamletter_{\mathrm{#1}}}}
\newcommand{\biasparaml}[2]{\macro{\biasparamletter^{(#2)}_{\mathrm{#1}}}}
\newcommand{\scalarbiasparam}[1]{\macro{b}_{\mathrm{#1}}}
\newcommand{\binarycrossentropy}[2]{\macro{H}_{#2}\!\left(#1\right)}
\newcommand{\dropout}[1]{\textsc{Dropout}(#1)}
\newcommand{\themodel}{\macro{M}}
\newcommand{\thelayerno}{\macro{\ell}}
\newcommand{\thenumlayers}{\macro{L}}
\newcommand{\thehiddensize}{\macro{d}}
\newcommand{\thebatchsize}{\macro{B}}
\newcommand{\thelanguage}{\macro{L}}
\newcommand{\thealphabet}{\macro{\Sigma}}
\newcommand{\thealphabeteos}{\macro{\thealphabet_{\eos}}}
\newcommand{\thestring}{\macro{w}}
\newcommand{\theotherstring}{\macro{u}}
\newcommand{\theotherotherstring}{\macro{v}}
\newcommand{\thestringrandomvar}{\macro{\textnormal{W}}}
\newcommand{\thesymbol}{\macro{a}}
\newcommand{\thesymboli}[1]{\macro{\thestring_{#1}}}
\newcommand{\thelength}{\macro{n}}
\newcommand{\theotherlength}{\macro{m}}
\newcommand{\thetimestep}{\macro{t}}
\newcommand{\theweight}{\macro{w}}
\newcommand{\thenumexamples}{\macro{N}}
\newcommand{\theminlength}{n_{\mathrm{min}}}
\newcommand{\themaxlength}{n_{\mathrm{max}}}
\newcommand{\thelengthrange}{\macro{[\theminlength, \themaxlength]}}
\newcommand{\thenumedits}{\macro{K}}
\newcommand{\theperturbedstring}{\macro{\thestring'}}
\newcommand{\theperturbedstringi}[1]{\macro{\thestring_{#1}'}}
\newcommand{\thestringposition}{\macro{i}}
\newcommand{\thestate}{\macro{q}}
\newcommand{\thestatefrom}{\macro{\thestate}}
\newcommand{\thestateto}{\macro{r}}
\newcommand{\thestateset}{\macro{Q}}
\newcommand{\thenumstates}{\macro{|\thestateset|}}
\newcommand{\thenumtransitions}{\macro{|\thetransitionfunc|}}
\newcommand{\themaxtransitionoutdegree}{\macro{\thenumtransitions_{\mathrm{out}}}}
\newcommand{\theautomaton}{{\macro{\mathcal{A}}}}
\newcommand{\thedfa}{\macro{\mathcal{A}}}
\newcommand{\thepdfa}{\macro{\thedfa'}}
\newcommand{\thepdfadist}{\macro{p_{\thepdfa}}}
\newcommand{\theconditionaldist}{\macro{\thepdfadist(\thestringrandomvar \mid \stringlength{\thestringrandomvar} = \thelength)}}
\newcommand{\thecountingdfa}{\macro{\thedfa_{\thecountingsize}}}
\newcommand{\theweightpushedwdfa}{\macro{\thedfa'_{\thecountingsize}}}
\newcommand{\thevalidlengthset}{\macro{N_{\thepdfa}}}
\newcommand{\thetransitionfunc}{\macro{\delta}}
\newcommand{\thestartstate}{{\macro{q_0}}}
\newcommand{\theacceptstateset}{\macro{F}}
\newcommand{\theacceptweightfunc}{\macro{\rho}}
\newcommand{\fsatransition}[3]{#1 \xrightarrow{#2} #3}
\newcommand{\wfatransition}[4]{#1 \xrightarrow{#2 / #3} #4}
\newcommand{\nulltransition}{\macro{\emptyset}}
\newcommand{\thetransitionprobability}{\macro{p}}
\newcommand{\stringsum}[2]{#1(#2)}
\newcommand{\thepath}{\macro{\pi}}
\newcommand{\thepathstate}{\macro{r}}
\newcommand{\thepathlength}{\macro{m}}
\newcommand{\allpathset}[1]{\macro{\Pi}(#1)}
\newcommand{\stringpathset}[2]{\macro{\Pi}(#1, #2)}
\newcommand{\innerpathset}[3]{\macro{\Pi}(#1, #2 \rightsquigarrow #3)}
\newcommand{\pathweight}[1]{\macro{\mathbf{w}}(#1)}
\newcommand{\innerpathweight}[1]{\macro{\mathbf{w}_{\mathrm{I}}}(#1)}
\newcommand{\allsum}[1]{\macro{Z}(#1)}
\newcommand{\thecountingsize}{\macro{D}}
\newcommand{\thebasesemiring}{\macro{\mathcal{W}}}
\newcommand{\thebasesemiringset}{\macro{\mathbb{K}}}
\newcommand{\basesemiringadd}{\mathbin{\macro{\oplus}}}
\newcommand{\basesemiringmul}{\mathbin{\macro{\otimes}}}
\newcommand{\basesemiringzero}{\macro{\bm{0}}}
\newcommand{\basesemiringone}{\macro{\bm{1}}}
\newcommand{\basesemiringstar}[1]{#1^\macro{\ast}}
\newcommand{\basesemiringsum}{\bigoplus}
\newcommand{\circled}[1]{\tikz[baseline=(char.base)]{
\node[draw,circle,inner sep=0.05mm,outer sep=0.05mm] (char) {\footnotesize $#1$}}}
\newcommand{\thecountingsemiring}{\macro{\thebasesemiring^{\thecountingsize}}}
\newcommand{\thecountingsemiringset}{\macro{\thebasesemiringset^{\thecountingsize+1}}}
\newcommand{\countingadd}{\mathbin{\macro{\circled{\basesemiringadd}}}}
\newcommand{\countingmul}{\mathbin{\macro{\circled{\basesemiringmul}}}}
\newcommand{\countingzero}{\macro{\circled{\basesemiringzero}}}
\newcommand{\countingone}{\macro{\circled{\basesemiringone}}}
\newcommand{\countingstar}[1]{#1^{\macro{\circled{\ast}}}}
\DeclareMathOperator*{\countingsum}{\scalerel*{\countingadd}{\sum}}
\newcommand{\countingexp}[2]{#1^{\countingmul #2}}
\newcommand{\thecountingvector}{\macro{\bm{v}}}
\newcommand{\thecountingvectorleft}{\macro{\bm{u}}}
\newcommand{\thecountingvectorright}{\macro{\bm{v}}}
\newcommand{\thecountingindex}{\macro{i}}
\newcommand{\theothercountingindex}{\macro{j}}
\newcommand{\thehiddeni}[1]{\macro{\vecvar{h}_{#1}}}
\newcommand{\theinputembeddingi}[1]{\macro{\vecvar{x}_{#1}}}
\newcommand{\theembeddingmatrix}{\macro{\matvar{E}}}
\newcommand{\theinputembeddingmatrix}{\matvar{E}'}
\newcommand{\theprobability}{\macro{p}}
\newcommand{\thebooleanproposition}{\macro{\phi}}
\newcommand{\theacceptprob}{\macro{p_{\themodel}(\thestring \in \thelanguage)}}
\newcommand{\thelmprobi}[1]{\macro{p_{\themodel}(\thesymboli{#1} \mid \stringcontext{\thestring}{#1})}}
\newcommand{\thelmprobai}[1]{\macro{p_{\themodel}(\thesymbol \mid \stringcontext{\thestring}{#1})}}
\newcommand{\thenextsymbolprob}{\macro{p_{\themodel}(\thenextsymbolisvalid)}}
\newcommand{\stringcontext}[2]{#1_{<#2}}
\newcommand{\thestringcontext}{\macro{\stringcontext{\thestring}{\thetimestep}}}
\newcommand{\thenextsymbolsetu}[1]{\macro{\textsc{Next}}_{\thelanguage}(#1)}
\newcommand{\thenextsymbolset}{\macro{\thenextsymbolsetu{\thestringcontext}}}
\newcommand{\thenextsymbolisvalid}{\macro{\thesymbol \in \thenextsymbolset}}
\newcommand{\lossletter}{\macro{\mathcal{L}}}
\newcommand{\theloss}{\macro{\lossletter(\themodel, \thestring)}}
\newcommand{\therecognitionloss}{\macro{\lossletter_{\mathrm{\recognitionabbrev}}(\themodel, \thestring)}}
\newcommand{\thelmloss}{\macro{\lossletter_{\mathrm{\languagemodelingabbrev}}(\themodel, \thestring)}}
\newcommand{\thenextsymbolloss}{\macro{\lossletter_{\mathrm{\nextsymbolsabbrev}}(\themodel, \thestring)}}
\newcommand{\coefficientletter}{\macro{\lambda}}
\newcommand{\thelmcoefficient}{\macro{\coefficientletter_{\mathrm{\languagemodelingabbrev}}}}
\newcommand{\thenextsymbolcoefficient}{\macro{\coefficientletter_{\mathrm{\nextsymbolsabbrev}}}}
\newcommand{\thehiddenstateli}[2]{\macro{\vecvar{h}^{(#1)}_{#2}}}
\newcommand{\thehiddenstatedropoutli}[2]{\macro{\cancel{\vecvar{h}}^{(#1)}_{#2}}}
\newcommand{\initialhiddenstateparaml}[1]{\macro{\vectorparaml{w}{0}{#1}}}
\newcommand{\theinputgateli}[2]{\macro{\vecvar{i}^{(#1)}_{#2}}}
\newcommand{\theforgetgateli}[2]{\macro{\vecvar{f}^{(#1)}_{#2}}}
\newcommand{\thecandidateli}[2]{\macro{\vecvar{g}^{(#1)}_{#2}}}
\newcommand{\theoutputgateli}[2]{\macro{\vecvar{o}^{(#1)}_{#2}}}
\newcommand{\thememorycellli}[2]{\macro{\vecvar{c}^{(#1)}_{#2}}}
\newcommand{\thestringeditwfa}{\macro{\theautomaton_{\stringeditdistance{\cdot}{\thestring}}}}
\newcommand{\thetropicaldfa}{\macro{\theautomaton_{\cdot \in \thelanguage}}}
\newcommand{\theintersectedwfa}{\macro{\theautomaton_{\languageeditdistance{\thelanguage}{\cdot}}}}
\newcommand{\runtimeof}[1]{\macro{T}_{#1}}
\newcommand{\algorithmname}[1]{\macro{\textsc{#1}}}
\newcommand{\backwardtable}{\macro{\bm{\beta}}}
\newcommand{\theallsumweight}{\macro{\bm{z}}}
\newcommand{\lehmanntablepair}[2]{\macro{A}[#1, #2]}
\DeclareMathOperator*{\softmaxdim}{\softmax}
\newcommand{\thenextsymboltable}{\macro{\textsc{Next}}}
\newcommand{\thenextsymbolsetlist}{\macro{s}}
\newcommand{\preprocessingtimecomplexity}{\bigo{\thenumstates^3 \themaxlength^2 + \thenumtransitions \themaxlength^2}}
\newcommand{\mlregtesttimecomplexity}{\bigo{|\thealphabet| \thenumstates \themaxlength (\themaxlength - \theminlength) \log(\thenumstates \themaxlength)}}
\newcommand{\newterm}[1]{{\bf #1}}
\def\eqref#1{equation~\ref{#1}}
\def\ceil#1{\lceil #1 \rceil}
\def\floor#1{\lfloor #1 \rfloor}
\def\1{\bm{1}}
\def\vzero{{\bm{0}}}
\DeclareMathAlphabet{\mathsfit}{\encodingdefault}{\sfdefault}{m}{sl}
\SetMathAlphabet{\mathsfit}{bold}{\encodingdefault}{\sfdefault}{bx}{n}
\newcommand{\R}{\mathbb{R}}
\newcommand{\softmax}{\mathrm{softmax}}
\newcommand{\sigmoid}{\sigma}
\newcommand{\normltwo}{L^2}
\title{Training Neural Networks as Recognizers of Formal Languages}
\author{%
Alexandra Butoi$^{\ethz}$ \qquad Ghazal Khalighinejad$^{\duke}$ \qquad Anej Svete$^{\ethz}$ \\ \textbf{Josef Valvoda}$^{\copenhagen}$ \qquad
\textbf{Ryan Cotterell}$^{\ethz}$ \qquad \textbf{Brian DuSell}$^{\ethz}$ \\
$^{\ethz}$ETH Z{\"u}rich \qquad $^{\duke}$Duke University \qquad $^{\copenhagen}$University of Copenhagen \\
{\tt\{\href{mailto:alexandra.butoi@inf.ethz.ch}{alexandra.butoi}, \href{mailto:anej.svete@inf.ethz.ch}{anej.svete}, \href{mailto:ryan.cotterell@inf.ethz.ch}{ryan.cotterell}, \href{mailto:brian.dusell@inf.ethz.ch}{brian.dusell}\}@inf.ethz.ch} \\
\tt{\href{mailto:ghazal.khalighinejad@duke.edu}{ghazal.khalighinejad@duke.edu}} \qquad \tt{\href{mailto:jval@di.ku.dk}{jval@di.ku.dk}}
}
\begin{document}

\maketitle

\begin{abstract}
Characterizing the computational power of neural network architectures in terms of formal language theory remains a crucial line of research, as it describes lower and upper bounds on the reasoning capabilities of modern AI. However, when empirically testing these bounds, existing work often leaves a discrepancy between experiments and the formal claims they are meant to support. The problem is that formal language theory pertains specifically to recognizers: machines that receive a string as input and classify whether it belongs to a language. On the other hand, it is common instead to evaluate language models on proxy tasks, e.g., language modeling or sequence-to-sequence transduction, that are similar in only an informal sense to the underlying theory. We correct this mismatch by training and evaluating neural networks directly as binary classifiers of strings, using a general method that can be applied to a wide variety of languages. As part of this, we extend an algorithm recently proposed by \citet{snaebjarnarson-etal-2025-causally} for efficient length-controlled sampling of strings from regular languages. We provide results on a variety of languages across the Chomsky hierarchy for three neural architectures: a simple RNN, an LSTM, and a causally-masked transformer. We find that the RNN and LSTM often outperform the transformer, and that auxiliary training objectives such as language modeling can help, although no single objective uniformly improves performance across languages and architectures. Our contributions will facilitate theoretically sound empirical testing of language recognition claims in future work. We have released our datasets as a benchmark called FLaRe\footnote{\url{https://github.com/rycolab/flare}} (Formal Language Recognition), along with our code.\footnote{\url{https://github.com/rycolab/neural-network-recognizers}}
\end{abstract}

\section{Introduction}

Large language models (LLMs) based on neural networks have been hailed for their impressive reasoning abilities.
However, the precise scope of what they can and cannot do is hard to pin down. 
Fortunately, formal language theory gives us a vocabulary for ascribing hard limits to the kinds of computations neural networks can perform.
For example, results from formal language theory allow us to know with certainty that a transformer LM (without extra chain-of-thought steps) cannot determine whether two regular expressions with repetition operators are equivalent; a transformer LM runs in quadratic time, but the aforementioned problem provably requires exponential time \citep{sipser-2013-introduction}.\looseness=-1

A long line of research has attempted to precisely characterize the class of problems neural architectures can solve in terms of formal languages \citep[][\textit{inter alia}]{siegelmann-sontag-1995-computational,gers-schmidhuber-2001-lstm,weiss-etal-2018-practical,merrill-etal-2020-formal,strobl-etal-2024-what,yang-etal-2024-masked}.
Research in this line comes in two flavors: formal results that mathematically prove language class bounds, often under simplifying assumptions, and empirical results that, in complementary fashion, provide evidence of these bounds in practice.
Simplifying assumptions for the transformer architecture have included the absence of layer normalization, the use of hard attention, and the use of special positional encodings; see \citet{strobl-etal-2024-what} for a survey. 
Formal expressivity results also typically do not comment on whether solutions are practically reachable through training, even though the bias imposed by the training algorithm may render the set of solvable problems much smaller than suggested by formal expressivity results \citep{hahn-rofin-2024-why}.
Empirical results are therefore important for validating the practical relevance of formal results.

This paper aims to reconcile a subtle but important disconnect between empirical results and claims about computational power. Formal language theory deals in recognizers: machines that receive a string as input and classify whether it is a member of a language. 
For instance, the Chomsky hierarchy and the classes P and NP are defined in terms of recognizers.
However, some machine learning papers overlook the finer points of these definitions.
For instance, the recent study by \citet{deletang-etal-2023-neural} purports to relate the expressivity of neural language models to the Chomsky hierarchy with experiments that use neural networks as string-to-string functions.
However, the mismatch between the experimental setup and the actual definition of the Chomsky hierarchy means that the paper would provide evidence about an analogous hierarchy of \emph{functions}, which is not as well understood formally.
There are multiple ways to fix this mismatch: one could change the theoretical claims to those of a hierarchy of string-to-string functions \citep{strobl-etal-2024-transformers} without changing the experiments;\footnote{Hierarchies of language models have also been studied \citep{icard-2020-calibrating,borenstein-etal-2024-what}.} or, one could change the experiments to recognition to match the Chomsky hierarchy. In this paper, we explore the latter approach.

Specifically, we propose a method of training neural networks as recognizers that suits a wide variety of languages, and we use it to test simple RNNs, LSTMs, and transformers on 18 formal languages across the Chomsky hierarchy. Our method for generating data only requires two language-specific algorithms: randomly sampling positive examples within a particular length range, and membership testing. We provide efficient instantiations for the class of regular languages based on work by \citet{snaebjarnarson-etal-2025-causally}. We generate adversarial negative examples by modifying positive examples, obviating the need for language-specific negative sampling \citep[cf.][]{weiss-etal-2018-practical,someya-etal-2024-targeted,bhattamishra-etal-2024-separations}.
Like \citet{deletang-etal-2023-neural}, our evaluation focuses on length generalization, but we also include two sets of experiments that carefully distinguish between tests of inductive bias and expressivity.
We also compare multiple training objectives in a multi-task learning setup. We find that the transformer generally underperforms the RNN and LSTM, and all architectures are limited to low levels of the Chomsky hierarchy (\cref{fig:language-classes}). We have publicly released our datasets as a benchmark called \newterm{FLaRe} (\textbf{F}ormal \textbf{La}nguage \textbf{Re}cognition), along with our code.\looseness=-1

\begin{figure}[t]
        \definecolor{regular}{HTML}{ffeeaa}
    \definecolor{dcf}{HTML}{ffccaa}
    \definecolor{cf}{HTML}{f7d2cf}
    \definecolor{cs}{HTML}{eed7f4}

    \newcommand{\classlabel}[1]{{\small\textbf{#1}}}

    \newcommand{\setfillcolor}[2]{%
        \ifthenelse{\equal{#2}{true}}{%
            \edef#1{black} 
        }{%
            \edef#1{none} 
        }%
    }

    \newcommand{\languageclassdiagram}[2][]{%
    \pgfkeys{
        /languageclassdiagram/.cd,
        evenpairs/.is choice,
        evenpairs/.default=false,
        evenpairs/true/.code={\def\evenpairsfill{black}},
        evenpairs/false/.code={\def\evenpairsfill{none}},
        repeatzeroone/.is choice,
        repeatzeroone/.default=false,
        repeatzeroone/true/.code={\def\repeatzeroonefill{black}},
        repeatzeroone/false/.code={\def\repeatzeroonefill{none}},
        parity/.is choice,
        parity/.default=false,
        parity/true/.code={\def\parityfill{black}},
        parity/false/.code={\def\parityfill{none}},
        cyclenavigation/.is choice,
        cyclenavigation/.default=false,
        cyclenavigation/true/.code={\def\cyclenavigationfill{black}},
        cyclenavigation/false/.code={\def\cyclenavigationfill{none}},
        modulararithmetic/.is choice,
        modulararithmetic/.default=false,
        modulararithmetic/true/.code={\def\modulararithmeticfill{black}},
        modulararithmetic/false/.code={\def\modulararithmeticfill{none}},
        dycktwothree/.is choice,
        dycktwothree/.default=false,
        dycktwothree/true/.code={\def\dycktwothreefill{black}},
        dycktwothree/false/.code={\def\dycktwothreefill{none}},
        first/.is choice,
        first/.default=false,
        first/true/.code={\def\firstfill{black}},
        first/false/.code={\def\firstfill{none}},
        majority/.is choice,
        majority/.default=false,
        majority/true/.code={\def\majorityfill{black}},
        majority/false/.code={\def\majorityfill{none}},
        #1 
    }%
\begin{subfigure}[t]{0.2\textwidth}
    \centering
    \begin{tikzpicture}
    
        \def\texth{0.3}
        \def\dotsize{0.1}
        \def\pad{0.4}
        \def\boxspace{0.1}
        \def\cornerradius{0.25}
        \def\textoffset{0.05}
    
        \def\regularw{1}
        \pgfmathsetmacro{\dotspace}{(2*(\regularw-\pad))/3}
        \pgfmathsetmacro{\regularh}{\dotspace}
        \pgfmathsetmacro{\regulartopy}{\regularh+\texth}
    
        \pgfmathsetmacro{\dcfw}{\regularw+\boxspace}
        \pgfmathsetmacro{\dcfh}{\regularh+\boxspace}
        \pgfmathsetmacro{\dcftopy}{\regulartopy+\dotspace+\texth}
        \pgfmathsetmacro{\dcfdoty}{\regulartopy+\dotspace/2}
    
        \pgfmathsetmacro{\cfw}{\dcfw+\boxspace}
        \pgfmathsetmacro{\cfh}{\dcfh+\boxspace}
        \pgfmathsetmacro{\cftopy}{\dcftopy+\dotspace+\texth}
        \pgfmathsetmacro{\cfdoty}{\dcftopy+\dotspace/2}
    
        \pgfmathsetmacro{\csw}{\cfw+\boxspace}
        \pgfmathsetmacro{\csh}{\cfh+\boxspace}
        \pgfmathsetmacro{\cstopy}{\cftopy+2*\dotspace+\texth}
        \pgfmathsetmacro{\csdoty}{\cftopy+\dotspace}

        \draw[rounded corners=\cornerradius cm+3*\boxspace cm, thick, fill=cs] (-\csw,-\csh) rectangle (\csw,\cstopy);
        \node[anchor=north, yshift=\textoffset cm] at (0, \cstopy) {\classlabel{CS}};
        \draw[thick] (-\dotspace,\csdoty+0.5*\dotspace) circle (\dotsize);
        \draw[thick] (0,\csdoty+0.5*\dotspace) circle (\dotsize);
        \draw[thick] (\dotspace,\csdoty+0.5*\dotspace) circle (\dotsize);
        \draw[thick] (-1.5*\dotspace,\csdoty-0.5*\dotspace) circle (\dotsize);
        \draw[thick] (-0.5*\dotspace,\csdoty-0.5*\dotspace) circle (\dotsize);
        \draw[thick] (0.5*\dotspace,\csdoty-0.5*\dotspace) circle (\dotsize);
        \draw[thick] (1.5*\dotspace,\csdoty-0.5*\dotspace) circle (\dotsize);

        \draw[rounded corners=\cornerradius cm+2*\boxspace cm, thick, fill=cf] (-\cfw,-\cfh) rectangle (\cfw,\cftopy);
        \node[anchor=north, yshift=\textoffset cm] at (0, \cftopy) {\classlabel{CF}};
        \draw[thick] (0,\cfdoty) circle (\dotsize);

        \draw[rounded corners=\cornerradius cm+\boxspace cm, thick, fill=dcf] (-\dcfw,-\dcfh) rectangle (\dcfw,\dcftopy);
        \node[anchor=north, yshift=\textoffset cm] at (0, \dcftopy) {\classlabel{DCF}};
        \draw[thick, fill=\majorityfill] (-0.4,\dcfdoty) circle (\dotsize);
        \draw[thick] (0,\dcfdoty) circle (\dotsize);
        \draw[thick] (0.4,\dcfdoty) circle (\dotsize);

        \draw[rounded corners=\cornerradius cm, thick, fill=regular] (-\regularw,-\regularh) rectangle (\regularw,\regulartopy);
        \node[anchor=north, yshift=\textoffset cm] at (0, \regulartopy) {\classlabel{R}};
        \draw[thick, fill=\evenpairsfill] (-\dotspace,0.5*\dotspace) circle (\dotsize);
        \draw[thick, fill=\repeatzeroonefill] (0,0.5*\dotspace) circle (\dotsize);
        \draw[thick, fill=\parityfill] (\dotspace,0.5*\dotspace) circle (\dotsize);
        \draw[thick, fill=\cyclenavigationfill] (-1.5*\dotspace,-0.5*\dotspace) circle (\dotsize);
        \draw[thick, fill=\modulararithmeticfill] (-0.5*\dotspace,-0.5*\dotspace) circle (\dotsize);
        \draw[thick, fill=\dycktwothreefill] (0.5*\dotspace,-0.5*\dotspace) circle (\dotsize);
        \draw[thick, fill=\firstfill] (1.5*\dotspace,-0.5*\dotspace) circle (\dotsize);
    \end{tikzpicture}
    \subcaption{#2}
\end{subfigure}}

    \centering
    \languageclassdiagram[evenpairs=true, repeatzeroone=false, parity=false, cyclenavigation=false, modulararithmetic=false, dycktwothree=false, first=true, majority=false]{Transformer}
    \languageclassdiagram[evenpairs=true, repeatzeroone=true, parity=true, cyclenavigation=false, modulararithmetic=true, dycktwothree=true, first=true, majority=false]{Simple RNN}
    \languageclassdiagram[evenpairs=true, repeatzeroone=true, parity=true, cyclenavigation=false, modulararithmetic=true, dycktwothree=false, first=true, majority=true]{LSTM}
    \caption{Summary of our empirical expressivity results. Dots represent languages, which are listed in \cref{tab:languages}. A filled dot means that the architecture exhibits perfect length generalization (see \cref{tab:main-results} under ``Expressivity''). R = regular, DCF = deterministic context-free, CF = context-free, CS = context-sensitive. All architectures are limited to regular languages and the DCF language \languagemajority{}. The transformer is strictly less expressive than the RNN/LSTM on the languages we tested.}
    \label{fig:language-classes}
\end{figure}

\section{Background}

An \newterm{alphabet}, denoted $\thealphabet$ throughout this paper, is a non-empty, finite set of elements called \newterm{symbols}. 
A \newterm{string} over $\thealphabet$ is a finite sequence of symbols in $\thealphabet$. We use $\emptystring$ to denote the empty string. A \newterm{language} (or \newterm{formal language}) over $\thealphabet$ is a (possibly infinite) set of strings over $\thealphabet$.
Let $\kleene{\thealphabet}$ denote the language of all possible strings over $\thealphabet$. 
A \newterm{recognizer} is any computational device that receives a string as input and produces a decision to \newterm{accept} or \newterm{reject} it as output, and we say that it \newterm{recognizes} the language of strings that it accepts. A \newterm{language class} is a (possibly infinite) set of languages. Throughout the rest of this paper, let us assume we are dealing with a specific language $\thelanguage$ over alphabet $\thealphabet$.
For any string $\thestring \in \kleene{\thealphabet}$, we call the proposition $\thestring \in \thelanguage$ the \newterm{label} of $\thestring$.

\subsection{Finite automata and regular languages}

\begin{definition}
    \label{def:partial-dfa}
    A \defn{partial deterministic finite automaton (partial DFA)} is a tuple $\theautomaton = (\thestateset, \thealphabet, \thetransitionfunc, \thestartstate, \theacceptstateset)$ where
    \begin{enumerate*}[label=(\arabic*)]
        \item $\thestateset$ is a finite set of states,
        \item $\thealphabet$ is an alphabet,
        \item $\thetransitionfunc \colon \thestateset \times \thealphabet \rightarrow \thestateset \cup \{\nulltransition\}$ is the transition function, where $\nulltransition$ indicates the absence of a transition,
        \item $\thestartstate \in \thestateset$ is the start state, and
        \item $\theacceptstateset \subseteq \thestateset$ is the set of accepting states.
    \end{enumerate*}
    If $\thetransitionfunc(\thestatefrom, \thesymbol) = \thestateto \neq \nulltransition$, we say that $\theautomaton$ has a transition from $\thestatefrom$ to $\thestateto$ that scans $\thesymbol$, and we write $\fsatransition{\thestatefrom}{\thesymbol}{\thestateto} \in \thetransitionfunc$. We define $\thenumtransitions \defeq |\{ (\thestate, \thesymbol) \in \thestateset \times \thealphabet \mid \thetransitionfunc(\thestate, \thesymbol) \neq \nulltransition \}|$.
\end{definition}
Our definition of DFA is \emph{partial} in the sense that we do not require an outgoing transition to exist for all $(\thestatefrom, \thesymbol) \in \thestateset \times \thealphabet$. For simplicity, from now on, we will simply refer to partial DFAs as DFAs. We call a sequence $\thepath$ of connecting states and transitions a \newterm{path}. If $\thepath$'s transitions scan $a_1, \ldots, a_{\thelength}$, we say that $\thepath$ \newterm{scans} the string $w = a_1 \cdots a_{\thelength}$. For any string $\thestring \in \kleene{\thealphabet}$, if there is a path that starts in $\thestartstate$, scans $\thestring$, and ends in an accepting state, we say that $\theautomaton$ \newterm{accepts} $\thestring$ and \newterm{rejects} it otherwise. We say that $\theautomaton$ \newterm{recognizes} the language $\{ \thestring \in \kleene{\thealphabet} \mid \theautomaton \text{ accepts } \thestring \}$. A language is \newterm{regular} if there is a DFA that recognizes it. If all states in a DFA are reachable from the start state and can lead to an accepting state, we call it \newterm{trim}.\looseness=-1

Because we will present algorithms that use semirings and weighted DFAs, we introduce them here.
\begin{definition}
    \label{def:monoid}
    A \defn{monoid} is a tuple $(\thesemiringset, \monoidop, \monoididentity)$, where $\thesemiringset$ is a set, $\monoidop$ is an associative binary operation, and $\monoididentity \in \thesemiringset$, called the \defn{identity} element, satisfies $\monoididentity \monoidop a = a \monoidop \monoididentity = a$ for all $a \in \thesemiringset$. If $a \monoidop b = b \monoidop a$ for all $a, b \in \thesemiringset$, we say that the monoid is \defn{commutative}.
\end{definition}
\begin{definition}
    \label{def:semiring}
    A \defn{semiring} is a tuple $(\thesemiringset, \oplus, \otimes, \semiringzero, \semiringone)$ where $(\thesemiringset, \oplus, \semiringzero)$ is a commutative monoid and $(\thesemiringset, \otimes, \semiringone)$ is a monoid. Additionally, $\otimes$ distributes over $\oplus$: $a \otimes (b \oplus c) = (a \otimes b) \oplus (a \otimes c)$ and $(a \oplus b) \otimes c = (a \otimes c) \oplus (b \otimes c)$; furthermore, $\semiringzero$ is absorbing with respect to $\otimes$: $\semiringzero \otimes a = a \otimes \semiringzero = \semiringzero$.
\end{definition}
\begin{definition}
    \label{def:semiring-partial-dfa}
    A \defn{weighted deterministic finite automaton (WDFA)} over a semiring $(\thesemiringset, \oplus, \otimes, \semiringzero, \semiringone)$ is a tuple $\theautomaton = (\thestateset, \thealphabet, \thetransitionfunc, \thestartstate, \theacceptweightfunc)$ such that
    \begin{enumerate*}[label=(\arabic*)]
        \item $\thestateset$, $\thealphabet$, and $\thestartstate$ are defined as in \cref{def:partial-dfa},
        \item $\thetransitionfunc \colon \thestateset \times \thealphabet \rightarrow (\thestateset \times \thesemiringset) \cup \{\nulltransition\}$ is the transition function, and
        \item $\theacceptweightfunc \colon \thestateset \rightarrow \thesemiringset$ is the accept weight function.
    \end{enumerate*}
    If $\thetransitionfunc(\thestatefrom, \thesymbol) = (\thestateto, \theweight)$, we say that $\theautomaton$ has a transition from $\thestatefrom$ to $\thestateto$ that scans $\thesymbol$ with weight $\theweight$, and we write $\wfatransition{\thestatefrom}{\thesymbol}{\theweight}{\thestateto} \in \thetransitionfunc$.
\end{definition}

Rather than accepting or rejecting a string, a WDFA assigns it a weight. The weight of a path is the product of the weights of its transitions and the accept weight of its last state. The weight of a string is the weight of the path that scans it, or $\semiringzero$ if one does not exist. We can assign probabilities to strings with the \newterm{real semiring} $(\nonnegativeset, +, \times, 0, 1)$ or, for numerical stability, the \newterm{log semiring} $(\realset \cup \{-\infty\}, \log(\exp(\cdot) + \exp(\cdot)), +, -\infty, 0)$. 
A WDFA must constitute a probability distribution in order to allow sampling from it.
One way to achieve this is by locally normalizing the WFDA per state.\looseness=-1
\begin{definition}
    \label{def:pdfa}
    A WDFA $\theautomaton = (\thestateset, \thealphabet, \thetransitionfunc, \thestartstate, \theacceptweightfunc)$ over the real semiring is called a \defn{probabilistic DFA (PDFA)} if for all $\thestatefrom \in \thestateset$, $\left(\sum_{\wfatransition{\thestatefrom}{\thesymbol}{\theprobability}{\thestateto} \in \thetransitionfunc} \theprobability\right) + \theacceptweightfunc(\thestatefrom) = 1$, and $\theautomaton$ induces a probability distribution over $\thealphabet^*$.\footnote{This second condition, known as tightness, is easily checkable in the case of WDFAs \citep{du-etal-2023-measure}.}\looseness=-1
\end{definition}

\section{Method}
\label{sec:method}

We now give a method for generating datasets of formal languages and training neural networks as recognizers.
We address a number of challenges cited in past work, namely length-constrained sampling, negative sampling, and the paucity of the training signal when training on binary labels.

\subsection{Dataset generation}
\label{sec:dataset-sampling}

Here, we describe how to generate a set of $\thenumexamples$ string--label pairs for language $\thelanguage$, where the length of every string falls in the range $\thelengthrange$. To do this, we will require two language-specific algorithms:
\begin{enumerate*}[label=(\arabic*)]
    \item an algorithm for sampling a string $\thestring$ from $\thelanguage$ such that $\stringlength{\thestring} \in \thelengthrange$ (using a distribution to be described in \cref{sec:positive-distribution}), and\label{item:positive-sampling-algorithm}
    \item an algorithm for determining whether a string $\thestring \in \kleene{\thealphabet}$ is a member of $\thelanguage$.\label{item:membership-algorithm}
\end{enumerate*}\footnote{This is directly comparable to the classical learning theory of \citet{gold-1967-language}, which assumes the availability of a ``text'' of positive examples and an ``informant'' that provides labels.} To generate $\thenumexamples$ strings, we uniformly sample a label from $\{0, 1\}$ $\thenumexamples$ times, ensuring a balanced dataset, and we generate a positive (\cref{sec:positive-distribution}) or negative (\cref{sec:negative-sampling}) string accordingly.\looseness=-1

\subsubsection{Positive sampling}
\label{sec:positive-distribution}

The choice of probability distribution we sample from is important, as it affects the quality of training and evaluation; at a minimum, it should assign non-zero probability to all strings in $\thelanguage$ with lengths in $\thelengthrange$. We detail our choices below.

When $\thelanguage$ is regular, we assume we have a trim DFA $\thedfa = (\thestateset, \thealphabet, \thetransitionfunc, \thestartstate, \theacceptstateset)$ that recognizes it. The set of actions that may occur at each state consists of its outgoing transitions and, if it is an accepting state, acceptance. We convert $\thedfa$ to a PDFA $\thepdfa$ by assigning uniform probabilities to these actions at each state (details in \cref{alg:lift-weights}). Let $\thepdfadist$ denote the probability distribution defined by $\thepdfa$, and $\thestringrandomvar$ a $\thepdfadist$-distributed string-valued random variable. Let $\thevalidlengthset \defeq \{ \thelength \in \thelengthrange \mid \exists \thestring \in \thealphabet^{\thelength}, ~ \thepdfadist(\thestring) > 0 \}$. First, we sample a length $\thelength$ uniformly from $\thevalidlengthset$. Then, we sample a string from the conditional distribution $\theconditionaldist$. This amounts to sampling from the distribution
\begin{equation}
    p(\thestring) = \indicator{ \, \stringlength{\thestring} \in \thevalidlengthset} \, \frac{1}{|\thevalidlengthset|} \, \thepdfadist(\thestringrandomvar = \thestring \mid \stringlength{\thestringrandomvar} = \stringlength{\thestring}).
\end{equation}
We will show how to sample from $\thevalidlengthset$ and $\theconditionaldist$ efficiently in \cref{sec:dfa-sampling}. For non-regular languages, we use language-specific distributions detailed in \cref{sec:language-details}.

\subsubsection{Efficient length-constrained sampling for regular languages}
\label{sec:dfa-sampling}

How do we compute $\thevalidlengthset$ and sample from $\theconditionaldist$ efficiently? One approach would be to construct a PDFA for $\theconditionaldist$. We could do this by intersecting $\thepdfa$ with a DFA that recognizes $\thealphabet^{\thelength}$, which would take $\bigo{(\thenumstates + \thenumtransitions) \thelength}$ time. This would result in a WDFA that is not necessarily probabilistic, so as a prerequisite for sampling from it, we would need to redistribute its weights to locally sum to 1 (\cref{def:pdfa}), using a procedure known as weight pushing \citep{mohri-2009-algorithms} that would run in $\bigo{(\thenumstates \thelength)^3 + \thenumtransitions \thelength}$ time. Supposing $\theminlength = 0$, it would thus take $\bigo{\thenumstates^3 \themaxlength^4 + \thenumtransitions \themaxlength^2}$ time to do this for all lengths. This would not be scalable for our experiments, in which we sample strings up to length $\themaxlength = 500$. Fortunately, we can improve this by a factor of $\bigo{\themaxlength^2}$ using an object proposed by \citet{snaebjarnarson-etal-2025-causally} called the \newterm{\newsemiringname{}}. The key idea is to \emph{share} computation among the different lengths by running a version of the weight pushing algorithm---only once---that computes the normalized weights for all lengths. Instead of weighting the DFA with probabilities, we use \emph{vectors} that bin probabilities by each length in $[0, \themaxlength]$.
\begin{definition}
    \label{def:counting-semiring}
    Let $\thebasesemiring = (\thesemiringset, \basesemiringadd, \basesemiringmul, \basesemiringzero, \basesemiringone)$ be a semiring, and let $\thecountingsize \in \naturalnumberset$. For $\thecountingvector \in \thecountingsemiringset$, we write $\thecountingvector = (\thecountingvector_0, \thecountingvector_1, \ldots, \thecountingvector_{\thecountingsize})$. The \defn{$\thecountingsize$\textsuperscript{th}-order \newsemiringname{}} with respect to the \defn{base semiring} $\thebasesemiring$ is the semiring $\thecountingsemiring = (\thecountingsemiringset, \countingadd, \countingmul, \countingzero, \countingone)$, where:
    \begin{subequations}
    \begin{align}
        (\thecountingvectorleft \countingadd \thecountingvectorright)_{\thecountingindex} &\defeq \thecountingvectorleft_{\thecountingindex} \basesemiringadd \thecountingvectorright_{\thecountingindex} && (\thecountingvectorleft, \thecountingvectorright \in \thecountingsemiringset; 0 \leq \thecountingindex \leq \thecountingsize) \label{eq:counting-add} \\
        (\thecountingvectorleft \countingmul \thecountingvectorright)_{\thecountingindex} &\defeq \basesemiringsum_{\theothercountingindex = 0}^{\thecountingindex} \thecountingvectorleft_{\theothercountingindex} \basesemiringmul \thecountingvectorright_{\thecountingindex - \theothercountingindex} && (\thecountingvectorleft, \thecountingvectorright \in \thecountingsemiringset; 0 \leq \thecountingindex \leq \thecountingsize) \label{eq:counting-multiply} \\
        \countingzero &\defeq (\basesemiringzero, \ldots, \basesemiringzero) &&
        \countingone \defeq (\basesemiringone, \basesemiringzero, \ldots, \basesemiringzero)
    \end{align}
    \end{subequations}
\end{definition}
The indices of the vectors in the $\thecountingsize$\textsuperscript{th}-order \newsemiringname{} represent the values of an integer counter from $0$ to $\thecountingsize$; the meaning of the counter depends on how the semiring is used. We will use the counter to keep track of the number of symbols scanned by a DFA, i.e., the length of a sampled string. To do length-constrained sampling from the PDFA $\thepdfa$, we \newterm{lift} it to a $\themaxlength$\textsuperscript{th}-order \newsemiringname{}-weighted DFA $\thecountingdfa$ as follows. For every transition $\wfatransition{\thestatefrom}{\thesymbol}{\thetransitionprobability}{\thestateto}$ in $\thepdfa$, we set the weight of $\fsatransition{\thestatefrom}{\thesymbol}{\thestateto}$ in $\thecountingdfa$ to $(\basesemiringzero, \thetransitionprobability, \basesemiringzero, \ldots, \basesemiringzero)$, indicating that the transition scans exactly one symbol with probability $\thetransitionprobability$. For every state with accept weight $\thetransitionprobability$, we set its accept weight in $\thecountingdfa$ to $(\thetransitionprobability, \basesemiringzero, \ldots, \basesemiringzero)$, indicating that it accepts (and scans no symbols) with probability $\thetransitionprobability$. A similar interpretation applies to anything with a weight, including transitions, accepting states, paths, and strings. 

Running a semiring generalization of weight pushing on $\thecountingdfa$ allows us to compute exactly the quantities we need for efficient sampling from $\thepdfadist(\thestringrandomvar \mid \stringlength{\thestringrandomvar} = \thelength)$. The key idea is that for every state and $0 \leq \thecountingindex \leq \thecountingsize$, it computes a probability distribution over actions (transitioning or accepting) conditioned on scanning exactly $\thecountingindex$ symbols in the future. Then $\thevalidlengthset$ is the set of all $\thelength$ for which we can perform any action at $\thestartstate$ with nonzero probability, conditioned on scanning $\thelength$ symbols in the future. The preprocessing step now takes $\preprocessingtimecomplexity$ time, and sampling a string of length $\thelength$ takes $\bigo{\thelength \log \themaxtransitionoutdegree}$ time, where $\themaxtransitionoutdegree$ is the maximum out-degree of $\thedfa$. See \cref{sec:dfa-sampling-details} for details.

\subsubsection{Negative sampling and membership testing}
\label{sec:negative-sampling}

To generate a negative example, we repeatedly propose a random string $\thestring$ until we achieve that $\thestring \notin \thelanguage$ using a membership testing algorithm for $\thelanguage$. We propose $\thestring$ in one of two ways, with uniform probability. Half the time, we uniformly sample a length $\thelength$ from $\thelengthrange$, then uniformly sample $\thestring$ from $\thealphabet^{\thelength}$. For many languages, this is very likely to produce a negative example, but one that is so superficially dissimilar to positive examples that the classification problem becomes too easy and fails to demonstrate the underlying algorithm. Prior work, therefore, typically uses language-specific rules to generate adversarial negative samples \citep{weiss-etal-2018-practical,bhattamishra-etal-2024-separations,someya-etal-2024-targeted}. To keep our methodology more general, the other half of the time, we propose $\thestring$ by sampling a positive example and perturbing it with random edits \citep[cf.][]{weiss-etal-2018-extracting}. More precisely, we (1) sample the number of edits $\thenumedits$ from a geometric distribution that heavily favors small $\thenumedits$, and (2) iteratively apply $\thenumedits$ uniformly-sampled edits (single-symbol insertion, replacement, or deletion). See \cref{sec:perturbation-sampling} for details. Typically, this is much more likely to produce negative examples that are difficult to distinguish from positive ones (see also our analysis in \cref{fig:edit-distance}).\looseness=-1

Membership testing is necessary because it is possible that a random string $\thestring$ is inadvertently a member of $\thelanguage$. DFAs have a straightforward $\bigo{\thelength}$-time membership testing algorithm: start in the initial state, follow the unique $\thestring$-scanning path (if it exists), and accept if it ends in an accepting state. For non-regular languages, we use language-specific membership testing algorithms detailed in \cref{sec:language-details}. A limitation of this rejection sampling approach is that it is impractical when the complement of $\thelanguage$ is small, i.e., when the expected probability of successfully sampling a negative string is small. For many languages of interest, including all those we test in this paper, this probability is reasonably large regardless of $\thelength$, in which case it is not an issue.

\subsection{Neural network architectures}
\label{sec:neural-network-architectures}

We compare three neural network architectures: 
\begin{enumerate*}[label=(\arabic*)]
    \item a multi-layer \newterm{simple RNN} \citep{elman-1990-finding} with a $\tanh$ activation function and learned initial hidden states,
    \item a multi-layer \newterm{LSTM} \citep{gers-schmidhuber-2001-lstm} with decoupled input and forget gates and learned initial hidden states, and
    \item a causally masked \newterm{transformer} encoder \citep{vaswani-etal-2017-attention} with pre-norm \citep{wang-etal-2019-learning,nguyen-salazar-2019-transformers} and sinusoidal positional encodings.
\end{enumerate*}
In all cases, the model, $\themodel$, receives a string of symbols $\thestring = \thesymboli{1} \cdots \thesymboli{\thelength} \in \thealphabet^\thelength$ as input and produces a sequence of \newterm{hidden vectors} $\thehiddeni{0}, \thehiddeni{1}, \ldots, \thehiddeni{\thelength} \in \realset^{\thehiddensize}$, which are used to compute logits for the loss terms.
For the simple RNN and LSTM, the hidden vectors are the hidden states of the last layer.
For the transformer, the hidden vectors are the outputs of the last layer, and we always prepend a reserved $\bos$ symbol to the input to compute $\thehiddeni{0}$.
See \cref{sec:neural-network-architecture-details} for details.

We apply a learned affine transformation, called the \newterm{recognition head}, to the last hidden vector to classify whether the string is a member of the language.
Let $\logistic{\cdot}$ be the pointwise logistic function.
Then, we model the probability of acceptance as
\begin{equation}
    \theacceptprob \defeq \logistic{\affinetoscalar{\recognitionabbrev}{\thehiddeni{\thelength}}},
\end{equation}
where $\scalarweightparam{\recognitionabbrev} \in \realset^{\thehiddensize}$ and $\scalarbiasparam{\recognitionabbrev} \in \realset$ are learnable parameters. 
We say that the model \newterm{accepts} $\thestring$ if $\theacceptprob \geq \frac{1}{2}$ and \newterm{rejects} it otherwise.

\subsection{Training objectives}
\label{sec:loss-function}

In all experiments, we train a neural network to classify whether an input string is in $\thelanguage$ by minimizing the binary cross-entropy of the recognition head \citep[cf.][]{weiss-etal-2018-practical,bhattamishra-etal-2023-simplicity,van-der-poel-2024-mlregtest,hahn-rofin-2024-why,bhattamishra-etal-2024-separations}. For any probability $\theprobability$ and proposition $\thebooleanproposition$, we define the binary cross-entropy $\binarycrossentropy{\theprobability}{\thebooleanproposition}$ as follows:
\begin{equation}
    \binarycrossentropy{\theprobability}{\thebooleanproposition} \defeq \begin{cases}
        -\log(\theprobability) & \text{if }\thebooleanproposition \\
        -\log(1-\theprobability) & \text{otherwise}.
    \end{cases}
\end{equation}
We minimize the following loss function:
\begin{equation}
    \therecognitionloss \defeq \binarycrossentropy{\theacceptprob}{\thestring \in \thelanguage}.
\end{equation}
However, using $\therecognitionloss$ as the only term in the training objective might not be enough.
For one, it might be difficult to learn to orchestrate a large number of internal computational steps given only a single bit of information per example. Moreover, the gradient originates only from the last timestep, which is problematic for the RNN and LSTM, which are susceptible to the exploding and vanishing gradient problems. It is presumably for these reasons, in addition to the need for negative sampling, that most prior work has shied away from a pure language recognition training objective. One way to alleviate these issues is to provide the model with hints about intermediate computational steps. Indeed, a common setup in past work is to have the model predict, for each timestep $\thetimestep$, the set of symbols that may appear at position $\thetimestep$ given the prefix $\thestringcontext \defeq \thesymboli{1} \cdots \thesymboli{\thetimestep-1}$ \citep{cleermans-etal-1989-finite,gers-schmidhuber-2001-lstm,schmidhuber-etal-2002-learning,rodriguez-wiles-1997-recurrent,suzgun-etal-2019-lstm,suzgun-etal-2019-evaluating,suzgun-etal-2019-memory,bhattamishra-etal-2020-ability,bhattamishra-etal-2020-practical,bhattamishra-etal-2020-computational,ebrahimi-etal-2020-how}. Other work uses language modeling, eschewing negative sampling entirely \citep{merrill-2019-sequential,hewitt-etal-2020-rnns,dusell-chiang-2020-learning,dusell-chiang-2022-learning,dusell-chiang-2023-surprising,dusell-chiang-2024-stack,liu-etal-2023-transformers,akyurek-etal-2024-in-context,someya-etal-2024-targeted,borenstein-etal-2024-what}.\looseness=-1

To this end, we include experiments that add one or both of the following auxiliary loss terms to the training objective for positive examples:
\begin{enumerate*}[label=(\arabic*)]
    \item a \newterm{language modeling} loss term, $\thelmloss$, which requires the model to learn a distribution over the next symbol at each position, and
    \item a \newterm{next-symbol prediction}\footnote{This is often called next character prediction in prior work.} loss term, $\thenextsymbolloss$, which requires the model to predict \emph{whether} each symbol may appear next at each position, given the prefix of $\thestring$ seen so far.
\end{enumerate*}

When we include the language modeling loss term, we add a \newterm{language modeling head} to the model that computes conditional probabilities $\thelmprobai{\thetimestep}$ for each timestep $\thetimestep$ and symbol $\thesymbol$.
We average cross-entropy over timesteps to get the loss term.
We now define this formally. Let $\thealphabeteos \defeq \thealphabet \cup \{\eos\}$ and $\theembeddingmatrix \in \realset^{|\thealphabeteos| \times \thehiddensize}$ be a matrix of output embeddings, which is tied to the input embeddings of the model. Then we define the loss term as follows:
\begin{subequations}
\begin{align}
    \thelmprobai{\thetimestep} &\defeq \softmax(\theembeddingmatrix \thehiddeni{\thetimestep - 1})_{\thesymbol} && (1 \leq \thetimestep \leq \thelength+1; \thesymbol \in \thealphabeteos) \\
    \thelmloss &\defeq \frac{1}{\thelength+1} \sum_{\thetimestep = 1}^{\thelength+1} -\log \thelmprobi{\thetimestep} &&
    (\thesymboli{\thelength+1} \defeq \eos).
\end{align}
\end{subequations}
Likewise, when we include the next-symbol prediction loss term, we add a \newterm{next-symbol prediction head}. For any string $\theotherstring \in \kleene{\thealphabet}$, we define the set of valid next symbols $\thenextsymbolsetu{\cdot}$ as follows:
\begin{equation}
    \thenextsymbolsetu{\theotherstring} \defeq \{ \thesymbol \in \thealphabet \mid \exists \theotherotherstring \in \kleene{\thealphabet}, ~ \theotherstring \thesymbol \theotherotherstring \in \thelanguage \} \cup \{ \eos \mid \theotherstring \in \thelanguage \}.
\end{equation}
We use binary cross-entropy to train the model to discern whether each symbol $\thesymbol \in \thealphabeteos$ is in $\thenextsymbolset$, by computing the probability $\thenextsymbolprob$. We average cross-entropy over symbol types and timesteps to get the loss term, which we denote formally as
\begin{align}
    \thenextsymbolprob &\defeq \logistic{\affine{\nextsymbolsabbrev}{\thehiddeni{\thetimestep-1}}}_{\thesymbol} && (1 \leq \thetimestep \leq \thelength+1; \thesymbol \in \thealphabeteos)
\end{align}
\begin{equation}
    \thenextsymbolloss \defeq \frac{1}{\thelength+1} \sum_{\thetimestep = 1}^{\thelength+1} \frac{1}{|\thealphabeteos|} \sum_{\thesymbol \in \thealphabeteos} \binarycrossentropy{\thenextsymbolprob}{\thenextsymbolisvalid},
\end{equation}
where $\weightparam{\nextsymbolsabbrev} \in \realset^{|\thealphabeteos| \times \thehiddensize}$ and $\biasparam{\nextsymbolsabbrev} \in \realset^{|\thealphabeteos|}$ are learned parameters. We say that at each timestep $\thetimestep$, the model predicts the set $\{\thesymbol \in \thealphabeteos \mid \thenextsymbolprob \geq \frac{1}{2}\}$. Unlike language modeling, next-symbol prediction adds additional information to the training data, as $\thenextsymbolset$ can include information about unobserved strings.
For trim partial DFAs, we can easily precompute the set of valid next symbols for each state based on the outgoing transitions (\cref{alg:precompute-dfa-next-symbol-sets}). We use language-specific rules for non-regular languages (\cref{sec:language-details}). The full loss function with all terms is
\begin{equation}
    \theloss \defeq \therecognitionloss + \thelmcoefficient \thelmloss + \thenextsymbolcoefficient \thenextsymbolloss,
\end{equation}
where $\thelmcoefficient, \thenextsymbolcoefficient \in \nonnegativeset$ are coefficients that determine the importance of each auxiliary loss term. Whenever an auxiliary loss term is not included, this is equivalent to setting $\thelmcoefficient$ or $\thenextsymbolcoefficient$ to 0. Note that we never include them for negative examples. When computing the loss for a whole minibatch of examples, we average the loss $\theloss$ of the individual examples.

\subsection{Comparison to prior work}

Many past papers have used neural networks for language recognition, but each uses a slightly different methodology. Our work is an attempt to unify this methodology, to extend it to the broadest possible class of languages, to compare different training objectives, and to spur theoretically sound empirical work in the future. MLRegTest \citep{van-der-poel-2024-mlregtest} proposes a recognition setup that is specific to regular languages, whereas our method generalizes to any language that has tractable algorithms for length-constrained positive sampling and membership testing, and whose complement is not small. MLRegTest samples efficiently from regular languages with small complements by constructing a PDFA for the complement directly, but this technique is not available for many higher language classes. As for positive sampling from DFAs, they sample \emph{uniformly} from the set of strings of length $\thelength$, using an intersection-based approach similar to that described in \cref{sec:dfa-sampling}. Their preprocessing step runs in $\mlregtesttimecomplexity$ time (\cref{sec:mlregtest-sampling}), whereas ours runs in $\preprocessingtimecomplexity$ time. \Citet{weiss-etal-2018-extracting} also employ a perturbation-based negative sampling method that edits positive samples up to 9 times; we allow an unlimited number of edits and provide a way of analyzing the ground-truth edit distance distribution in \cref{sec:results}, \cref{fig:edit-distance}.

\section{Experiments}
\label{sec:experiments}

We test the performance of the three architectures in \cref{sec:neural-network-architectures} on a variety of formal languages (\cref{tab:languages}) that have proven to be of particular interest in prior work, and that come from various levels of the Chomsky hierarchy. These include analogs of the transduction tasks used by \citet{deletang-etal-2023-neural}. For each regular language, we generate datasets using a hand-crafted DFA and the algorithms described in \cref{sec:dfa-sampling}. For the other languages, we use language-specific algorithms for length-constrained sampling and membership testing. We describe all languages in more detail in \cref{sec:language-details}. We call this collection of datasets \textbf{FLaRe} (\textbf{F}ormal \textbf{La}nguage \textbf{Re}cognition).

\begin{table}[t]
    \caption{Formal languages tested in this paper and included in FLaRe. For each language, we show the language class that it belongs to: regular (\textbf{R}), deterministic context-free (\textbf{DCF}), context-free (\textbf{CF}), or context-sensitive (\textbf{CS}). Each language does not belong to the previous language classes. Let $\substringcount{\thestring}{\theotherstring}$ be the number of times substring $\theotherstring$ occurs in $\thestring$, let $\thestring_{i \rightarrow \thesymbol}$ be $\thestring$ with its $i$\textsuperscript{th} symbol replaced with $\thesymbol$, and let $\binaryencoding{x}$ be the little-endian binary encoding of $x \in \naturalnumberset$. See \cref{sec:language-details} for details.}
    \label{tab:languages}
    \begin{center}
        \small
\begin{tabularx}{\textwidth}{@{}p{0.6cm}lp{7cm}l@{}}
\toprule
\textbf{Class} & \textbf{Language} & \textbf{Description} & \textbf{Example String} \\
\midrule
\textbf{R} & \languageevenpairs{} & $\{ \thestring \in \kleene{\{\sym{0}, \sym{1}\}} \mid \substringcount{\thestring}{\sym{01}} + \substringcount{\thestring}{\sym{10}}\text{ is even} \}$ & $\sym{010110}$ \\
&& $= \{ \thesymbol \theotherstring \thesymbol \mid \thesymbol \in \{\sym{0}, \sym{1}\}, \theotherstring \in \kleene{\{\sym{0}, \sym{1}\}} \} \cup \{\emptystring, \sym{0}, \sym{1}\}$ & \\
& \languagerepeatzeroone{} & $\{ (\sym{01})^n \mid n \geq 0 \}$ & $\sym{010101}$ \\
& \languageparity{} & $\{ \thestring \in \kleene{\{\sym{0}, \sym{1}\}} \mid \substringcount{\thestring}{\sym{1}}\text{ is odd} \}$ & $\sym{11011001}$ \\ 
& \languagecyclenavigation{} & A sequence of left ($\sym{<}$), right ($\sym{>}$), stay ($\sym{=}$) moves on a 5-position cycle, then the final position (0-indexed). & $\sym{>>=<>2}$ \\
& \languagemodulararithmeticsimple{} & Expression involving $\{+, -, \times\}$ and $\{0, \ldots, 4\}$, then the result mod 5. No operator precedence. & $\sym{1-3×2=1}$ \\
& \languagedycktwothree{} & Strings of balanced brackets with 2 bracket types and a maximum depth of 3. & $\sym{[()([])]()}$  \\
& \languagefirst{} & $\{ \sym{1}\thestring \mid \thestring \in \kleene{\{\sym{0}, \sym{1}\}} \}$ & $\sym{100010}$ \\
\midrule
\textbf{DCF} & \languagemajority{} & $\{ \thestring \in \kleene{\{\sym{0}, \sym{1}\}} \mid \substringcount{\thestring}{\sym{1}} > \substringcount{\thestring}{\sym{0}} \}$ & $\sym{101101}$ \\
& \languagestackmanipulation{} & A stack from bottom to top, a sequence of push and pop operations, and the resulting stack from top to bottom. & $\sym{011}\,\symbolpop\,\sym{=10}$ \\
& \languagemarkedreversal{} & $\{ \thestring \sym{\#} \reverse{\thestring} \mid \thestring \in \kleene{\{\sym{0}, \sym{1}\}} \}$ & $\sym{001\#100}$ \\
\midrule
\textbf{CF} & \languageunmarkedreversal{} & $\{ \thestring \reverse{\thestring} \mid \thestring \in \kleene{\{\sym{0}, \sym{1}\}} \}$ & $\sym{001100}$ \\
\midrule
\textbf{CS} & \languagemarkedcopy{} & $\{ \thestring \sym{\#} \thestring \mid \thestring \in \kleene{\{\sym{0}, \sym{1}\}} \}$ & $\sym{001\#001}$ \\
& \languagemissingduplicatestring{} & $\{ (\thestring \thestring)_{i \rightarrow \symbolunderscore} \mid \thestring \in \kleene{\{\sym{0}, \sym{1}\}}, 1 \leq i \leq 2 \stringlength{\thestring}, (\thestring \thestring)_i = \sym{1} \}$ & $\sym{1}\symbolunderscore\sym{011101}$ \\
& \languageoddsfirst{} & $\{ a_1 b_1 \cdots a_n b_n a \sym{\#} a_1 \cdots a_n a b_1 \cdots b_n \mid$ & $\sym{01010=00011}$ \\
&& $\qquad n \geq 0; a_i, b_i \in \{\sym{0}, \sym{1}\}; a \in \{\sym{0}, \sym{1}, \emptystring\} \}$ & \\
& \languagebinaryaddition{} & $\{ \binaryencoding{x}\sym{0}^i\sym{+}\binaryencoding{y}\sym{0}^j\sym{=}\binaryencoding{x+y}\sym{0}^k \mid x, y, i, j, k \in \naturalnumberset \}$ & $\sym{110+01=10100}$ \\
& \languagebinarymultiplication{} & $\{ \binaryencoding{x}\sym{0}^i\sym{×}\binaryencoding{y}\sym{0}^j\sym{=}\binaryencoding{xy}\sym{0}^k \mid x, y, i, j, k \in \naturalnumberset \}$ & $\sym{110×0100=011}$ \\
& \languagecomputesqrt{} & $\{ \binaryencoding{x}\sym{0}^i\sym{=}\binaryencoding{\floor{\sqrt{x}}}\sym{0}^j \mid x, i, j \in \naturalnumberset \}$ & $\sym{01010=1100}$ \\
& \languagebucketsort{} & Sequence of integers in $\{1, \ldots, 5\}$, then $\sym{\#}$ and the sorted sequence. & $\sym{45134\#13445}$ \\
\bottomrule
\end{tabularx}
    \end{center}
\vspace{-10pt}
\end{table}

For each language, we sample a single, fixed training set of 10k examples with lengths in $[0, 40]$. We run separate experiments with two different validation sets that are designed to address different scientific questions, each with 1k examples: a \newterm{short validation set} with string lengths in $[0, 40]$, and a \newterm{long validation set} with string lengths in $[0, 80]$. For any finite training set of strings, there are infinitely many valid ways of extrapolating to longer strings; we refer to the way that a neural network architecture disambiguates these possible solutions as its \newterm{inductive bias}. The short validation set reveals an architecture's inductive bias in the absence of any disambiguating information about how it should generalize to longer lengths, as in the experiments of \citet{deletang-etal-2023-neural}. The long validation set, on the other hand, does include longer strings, and so the model's performance on strings that are longer than those in the training set modulates the learning rate schedule, early stopping schedule, and model selection. In this way, it is more in line with expressivity work that seeks to understand whether an architecture admits a certain solution at all, regardless of its inductive bias.

We use 5 layers for all models in all experiments. We automatically adjust the hidden vector size $\thehiddensize$ so that the number of parameters is as close as possible to 64k, excluding extra parameters for language modeling and next-symbol prediction heads. This ensures that all models are of comparable size across architectures and languages. For the simple RNN and LSTM, $\thehiddensize$ is the size of the hidden state vectors. For the transformer, $\thehiddensize$ is the model size $d_{\mathrm{model}}$. Every time we train a model, we randomly sample certain hyperparameters \citep{bergstra-bengio-2012-random}, namely initial learning rate, minibatch size, and (when required) $\thelmcoefficient$ and $\thenextsymbolcoefficient$. For every combination of architecture, loss function variant, and type of validation set, we train \numruns{} models. We use four loss function variants: recognition with(out) language modeling and with(out) next-symbol prediction. Therefore, for every architecture and validation set, we train \numrunstimeslossfunctions{} models. See \cref{sec:experiments-details} for details.

\section{Results}
\label{sec:results}

To test whether a model has learned the underlying recognition algorithm, our evaluation focuses on the ability to generalize to inputs that are longer than those seen in training \citep[cf.][]{deletang-etal-2023-neural}. In \cref{tab:main-results}, we show recognition accuracy on a test set with string lengths in $[0, 500]$. It has 5,010 examples, or an average of 10 examples per length. Under ``Inductive Bias,'' we select the loss function with the highest mean accuracy on the test set from among the models trained on the short validation set, and we report this mean accuracy. Although we could select a single model based on performance on the validation set, this would result in noise due to the vagaries of model selection; we find that the mean accuracy that each architecture converges to when aggregated across multiple runs is more informative. Under ``Expressivity,'' we show the maximum test accuracy across all \numrunstimeslossfunctions{} models trained on the long validation set. See unabridged results and additional metrics in \cref{sec:full-results}.

\begin{table}[t]
    \caption{Accuracy on a test set with strings in the length range $[0, 500]$. The training data is in the length range $[0, 40]$. ``Inductive Bias'' uses validation data in the length range $[0, 40]$. We show mean accuracy and standard deviation across \numruns{} runs for the loss function with the highest mean accuracy on the test set. ``Expressivity'' uses validation data in the length range $[0, 80]$. We show maximum accuracy across all \numruns{} runs of all 4 loss functions. ``\transformerabbrev{}'' = transformer, ``\rnnabbrev{}'' = simple RNN, ``\lstmabbrev{}'' = LSTM. Best scores among all three architectures are in \textbf{bold}.}
    \label{tab:main-results}
    \begin{center}
        \small
        \begin{tabular}{@{}lcccccc@{}}
\toprule
& \multicolumn{3}{c}{Inductive Bias} & \multicolumn{3}{c}{Expressivity} \\
\cmidrule(lr){2-4} \cmidrule(lr){5-7}
Language & \transformerabbrev{} & \rnnabbrev{} & \lstmabbrev{} & \transformerabbrev{} & \rnnabbrev{} & \lstmabbrev{} \\
\midrule
\languageevenpairs{} & \tablemeanstddev{\textbf{0.99}}{0.01} & \tablemeanstddev{0.60}{0.20} & \tablemeanstddev{0.83}{0.22} & \textbf{1.00} & \textbf{1.00} & \textbf{1.00} \\
\languagerepeatzeroone{} & \tablemeanstddev{0.72}{0.09} & \tablemeanstddev{0.97}{0.07} & \tablemeanstddev{\textbf{0.97}}{0.07} & 0.86 & \textbf{1.00} & \textbf{1.00} \\
\languageparity{} & \tablemeanstddev{0.56}{0.03} & \tablemeanstddev{0.71}{0.24} & \tablemeanstddev{\textbf{0.90}}{0.20} & 0.60 & \textbf{1.00} & \textbf{1.00} \\
\languagecyclenavigation{} & \tablemeanstddev{0.84}{0.05} & \tablemeanstddev{\textbf{0.93}}{0.01} & \tablemeanstddev{0.90}{0.04} & \textbf{0.93} & \textbf{0.93} & \textbf{0.93} \\
\languagemodulararithmeticsimple{} & \tablemeanstddev{0.69}{0.11} & \tablemeanstddev{\textbf{1.00}}{0.00} & \tablemeanstddev{0.98}{0.03} & 0.88 & \textbf{1.00} & \textbf{1.00} \\
\languagedycktwothree{} & \tablemeanstddev{0.70}{0.09} & \tablemeanstddev{\textbf{0.95}}{0.05} & \tablemeanstddev{0.91}{0.10} & 0.82 & \textbf{1.00} & 0.98 \\
\languagefirst{} & \tablemeanstddev{\textbf{0.98}}{0.04} & \tablemeanstddev{0.80}{0.24} & \tablemeanstddev{0.94}{0.14} & \textbf{1.00} & \textbf{1.00} & \textbf{1.00} \\
\languagemajority{} & \tablemeanstddev{\textbf{0.97}}{0.04} & \tablemeanstddev{0.90}{0.03} & \tablemeanstddev{0.95}{0.04} & 1.00 & 0.95 & \textbf{1.00} \\
\languagestackmanipulation{} & \tablemeanstddev{0.66}{0.14} & \tablemeanstddev{\textbf{0.84}}{0.16} & \tablemeanstddev{0.75}{0.17} & 0.87 & \textbf{0.93} & 0.91 \\
\languagemarkedreversal{} & \tablemeanstddev{0.64}{0.12} & \tablemeanstddev{0.70}{0.18} & \tablemeanstddev{\textbf{0.74}}{0.17} & 0.87 & \textbf{0.95} & 0.95 \\
\languageunmarkedreversal{} & \tablemeanstddev{0.58}{0.03} & \tablemeanstddev{0.72}{0.08} & \tablemeanstddev{\textbf{0.76}}{0.01} & 0.63 & 0.81 & \textbf{0.88} \\
\languagemarkedcopy{} & \tablemeanstddev{0.63}{0.11} & \tablemeanstddev{\textbf{0.76}}{0.15} & \tablemeanstddev{0.69}{0.15} & 0.86 & \textbf{0.95} & \textbf{0.95} \\
\languagemissingduplicatestring{} & \tablemeanstddev{0.66}{0.08} & \tablemeanstddev{0.82}{0.10} & \tablemeanstddev{\textbf{0.85}}{0.07} & 0.86 & \textbf{0.95} & 0.94 \\
\languageoddsfirst{} & \tablemeanstddev{0.59}{0.11} & \tablemeanstddev{\textbf{0.79}}{0.15} & \tablemeanstddev{0.67}{0.14} & 0.86 & 0.95 & \textbf{0.96} \\
\languagebinaryaddition{} & \tablemeanstddev{0.64}{0.13} & \tablemeanstddev{0.74}{0.12} & \tablemeanstddev{\textbf{0.74}}{0.12} & 0.88 & 0.92 & \textbf{0.92} \\
\languagebinarymultiplication{} & \tablemeanstddev{0.70}{0.11} & \tablemeanstddev{0.74}{0.13} & \tablemeanstddev{\textbf{0.78}}{0.12} & \textbf{0.92} & 0.92 & 0.92 \\
\languagecomputesqrt{} & \tablemeanstddev{0.67}{0.10} & \tablemeanstddev{0.78}{0.12} & \tablemeanstddev{\textbf{0.84}}{0.07} & 0.86 & \textbf{0.89} & \textbf{0.89} \\
\languagebucketsort{} & \tablemeanstddev{0.63}{0.08} & \tablemeanstddev{\textbf{0.84}}{0.09} & \tablemeanstddev{0.69}{0.13} & 0.88 & \textbf{0.96} & 0.83 \\
\bottomrule
\end{tabular}

    \end{center}
    \vspace{-10pt}
\end{table}

We find that the RNN and LSTM outperform the transformer in most cases. In the inductive bias experiments, the transformer outperforms the RNN and LSTM only on \languageevenpairs{}, \languagefirst{}, and \languagemajority{}. In the expressivity experiments, it is never the best (except by a hair on \languagebinarymultiplication{}), and it only outperforms the LSTM on \languagebucketsort{}. In terms of languages that the architectures can solve perfectly in the expressivity experiments, all architectures are limited to regular languages and \languagemajority{} (\cref{fig:language-classes}). The transformer almost reaches, but does not quite reach, 100\% accuracy on \languagemajority{}. We do see differences based on a language's sensitivity, i.e., the tendency that changing a bit in a string changes its label. Transformers struggle on high-sensitivity languages (\languageparity{}) and do well on low-sensitivity languages (\languageevenpairs{}, \languagefirst{}), in accordance with prior work \citep{hahn-2020-theoretical,bhattamishra-etal-2023-simplicity,hahn-rofin-2024-why}. The LSTM outperforms the RNN on some languages that involve long-range dependencies (\languageevenpairs{}, \languagefirst{}), likely thanks to its memory cell. The LSTM can use its memory cell as a set of counters, which is useful for certain languages \citep{weiss-etal-2018-practical}; accordingly, we see that it outperforms the RNN on \languagemajority{}, but surprisingly not on \languagebucketsort{}. Although \citet{yao-etal-2021-self} gave a construction showing that the transformer can recognize bounded Dyck languages, we see that it struggles on \languagedycktwothree{}.

Although it is possible, in principle, for a model's inductive bias to differ substantially from its expressivity, we see a remarkable consistency between inductive bias and expressivity. As expected, all expressivity scores are higher than the corresponding inductive bias score. However, the ranking of the architectures remains the same between inductive bias and expressivity for almost all languages; it only changes slightly for \languagemissingduplicatestring{} and \languageoddsfirst{}, and more noticeably for \languagebucketsort{}.

Certain auxiliary loss terms do help in isolated cases: for example, next-symbol prediction is crucial for the RNN to learn \languageevenpairs{} (\cref{tab:full-even-pairs}). On the other hand, they are sometimes detrimental; for example, language modeling results in lower accuracy for the LSTM on \languageparity{} (\cref{tab:full-parity}). However, no term has a consistently positive or negative impact across languages and architectures (\cref{tab:best-loss-functions}). In fact, recognition alone is the most frequent best loss function, followed by next-symbol prediction; loss functions that include language modeling help the least often. Remarkably, the RNN gets a mean accuracy of 100\% in the inductive bias experiments for \languagemodulararithmeticsimple{}, using only a recognition training objective.

We see accuracy ceilings in the expressivity results for \languagecyclenavigation{}, \languagebinaryaddition{}, \languagebinarymultiplication{}, and \languagecomputesqrt{}. To investigate this, we looked at the examples with the highest cross-entropy. For \languagecyclenavigation{}, all architectures struggle on negative examples that have the right format but the wrong digit at the end;\footnote{Using larger models, more adversarial examples, and more training data did not alleviate this issue.} in contrast, the RNN and LSTM got perfect accuracy on a related task in \citet{deletang-etal-2023-neural}. On \languagebinaryaddition{}, the transformer and RNN/LSTM fail for different reasons. The RNN/LSTM only misclassify negative examples that have the right format but incorrect arithmetic, but the transformer also misclassifies positive examples and negative examples that have extra $\sym{+}$, $\sym{×}$, or $\sym{=}$ symbols. The RNN and LSTM fail on \languagecomputesqrt{} for dissimilar reasons; the RNN sometimes accepts invalid formats. Using larger models does not solve the problem; in most cases, accuracy does not improve with model size (\cref{sec:different-model-sizes}).

How does using recognizers instead of string-to-string functions affect our findings? The most comparable sets of experiments are those of \citet[][Tables B.1 and B.5]{deletang-etal-2023-neural} and our inductive bias experiments. Whereas they found that transformers struggle on \languageevenpairs{} and the RNN/LSTM excel, we see the opposite. We both show that transformers struggle on \languageparity{}. Whereas they showed that the RNN/LSTM perfectly solve \languagecyclenavigation{}, our results show otherwise, perhaps because in our version, the model must additionally validate the format of the string rather than predict a single digit. We see the same model ranking on \languagestackmanipulation{}, but not on \languagemarkedreversal{}, \languageoddsfirst{}, \languagebinaryaddition{}, \languagebinarymultiplication{}, \languagecomputesqrt{}, and \languagebucketsort{}. Notably, the RNN overperforms in our experiments, possibly due to our use of dropout, multiple layers, and a different activation function. Whereas \citet{deletang-etal-2023-neural} found that accuracy often decreases sharply with length, we find that recognition cross-entropy is usually stable (\cref{sec:performance-vs-input-length}).

In order to examine how the similarity of negative examples to positive examples affects classification difficulty, for \languagerepeatzeroone{} and \languagedycktwothree{}, we plot the transformer's recognition cross-entropy vs.\ the minimum \newterm{edit distance} between a string and any string in the language in \cref{fig:edit-distance} (we do not plot the RNN and LSTM because they get almost 100\% accuracy). Note that the number of edits $\thenumedits$ performed in \cref{sec:dataset-sampling} is an upper bound for, but not necessarily equal to, the true edit distance. We give formal definitions and algorithms for computing edit distance in \cref{sec:edit-distance}. We do see that for both languages, the lower the edit distance is, the more the transformer struggles, particularly near 1 and 2 edits. This confirms that strings with few perturbations are indeed adversarial \citep[cf.][]{van-der-poel-2024-mlregtest}, even when the training distribution is highly skewed toward few edits.

\begin{figure}[t]
    \small
    \tikzset{-}
    \pgfplotsset{
        height=1.4in,
        width=\textwidth,
        title style={yshift=-4.7ex}
    }
    \centering
    \begin{subfigure}[t]{0.45\textwidth}
        \centering
        \begin{tikzpicture}
    \def\threshold{0.6931471805599453}
    \begin{axis}[
        axis lines=left,
        title={\languagerepeatzeroone{}},
        xlabel={Edit Distance},
        xmin=0,
        enlarge x limits=0.1,
        ylabel={Cross-Entropy $\leftarrow$},
        ymin=0,
        enlarge y limits=0.1,
    ]
        \addplot[
            mark=*,
            mark options={scale=0.5},
            only marks
        ] table {figures/edit-distance/repeat-01.dat};
        \draw[dashed] (axis cs:\pgfkeysvalueof{/pgfplots/xmin}, \threshold) -- (axis cs:\pgfkeysvalueof{/pgfplots/xmax}, \threshold);
        \node[above left] at (axis cs:\pgfkeysvalueof{/pgfplots/xmax}, \threshold) {$\uparrow$ Incorrect};
    \end{axis}
\end{tikzpicture}
    \end{subfigure}
    \begin{subfigure}[t]{0.45\textwidth}
        \centering
        \begin{tikzpicture}
    \def\threshold{0.6931471805599453}
    \begin{axis}[
        axis lines=left,
        title={\languagedycktwothree{}},
        xlabel={Edit Distance},
        xmin=0,
        enlarge x limits=0.1,
        ymin=0,
        enlarge y limits=0.1,
    ]
        \addplot[
            mark=*,
            mark options={scale=0.5},
            only marks
        ] table {figures/edit-distance/dyck-2-3.dat};
        \draw[dashed] (axis cs:\pgfkeysvalueof{/pgfplots/xmin}, \threshold) -- (axis cs:\pgfkeysvalueof{/pgfplots/xmax}, \threshold);
        \node[above left] at (axis cs:\pgfkeysvalueof{/pgfplots/xmax}, \threshold) {$\uparrow$ Incorrect};
    \end{axis}
\end{tikzpicture}
    \end{subfigure}
    \caption{Recognition cross-entropy (lower is better) as a function of edit distance for the transformer model shown under ``Expressivity'' in \cref{tab:main-results}, on a separate dataset of 50 negative examples in the length range $[0, 500]$. The dashed lines show $\log 2$, the threshold for incorrect predictions. Despite being trained on a large proportion of negative examples with low edit distance, the transformer still struggles on examples that resemble positive examples.}
    \label{fig:edit-distance}
    \vspace{-5pt}
\end{figure}

\section{Conclusion}

We have proposed a general method for training neural networks as recognizers of formal languages, filling a crucial gap between formal results and experiments meant to support them. We also gave a new, efficient algorithm for length-constrained sampling of strings from regular languages. We provided results for RNNs, LSTMs, and transformers on a wide range of formal languages commonly used in prior work, showing that RNNs and LSTMs often outperform transformers. An interesting question to address in future work is why transformers perform so much better on natural language than on formal languages. We tested auxiliary training objectives that have been previously used as a proxy for recognition and found that these do not improve model performance reliably. We have publicly released our datasets as a benchmark called FLaRe (Formal Language Recognition).

\subsubsection*{Acknowledgments}

We thank Leonardo Nevali, V{\'e}steinn Sn{\ae}bjarnarson, and Paul Soulos for helpful discussions, and we thank Dakotah Lambert for answering questions about MLRegTest's length-constrained sampling algorithm. Anej Svete is supported by the ETH AI Center Doctoral Fellowship. Josef Valvoda is funded by the Nordic Programme for Interdisciplinary Research Grant 105178 and the Danish National Research Foundation Grant no.\ DNRF169.

\bibliography{bibliography}
\bibliographystyle{iclr2025_conference}

\appendix

\section{Details of perturbation sampling}
\label{sec:perturbation-sampling}

We perform perturbation sampling as follows. Given a positive string $\thestring \in \thelanguage$ with $\stringlength{\thestring} \in \thelengthrange$, we first sample a number of edits $\thenumedits$ from a geometric distribution with a success probability of $p = \frac{1}{2}$. Note that this is highly skewed toward small $\thenumedits$, ensuring that strings with few edits (i.e., similar to positive examples) are well-represented in the dataset. Then, $\thenumedits$ times, we randomly apply an edit as follows. Let $\theperturbedstring$ be the current edited version of $\thestring$. We uniformly sample a type of edit: insertion, replacement, or deletion. We disallow insertion if it would increase $\stringlength{\theperturbedstring}$ beyond $\themaxlength$, and deletion if it would decrease it below $\theminlength$. We disallow replacement if $|\thealphabet| = 1$. For each insertion, we uniformly sample an insertion position from $\{1, \ldots, \stringlength{\theperturbedstring}+1\}$ and a symbol to insert from $\thealphabet$. For each replacement, we uniformly sample a replacement position $\thestringposition$ from $\{1, \ldots, \stringlength{\theperturbedstring}\}$ and a new symbol from $\thealphabet \setminus \{\theperturbedstringi{\thestringposition}\}$. For each deletion, we uniformly sample a deletion position from $\{1, \ldots, \stringlength{\theperturbedstring}\}$.

\section{Details of length-constrained sampling for regular languages}
\label{sec:dfa-sampling-details}

Here, we discuss our length-constrained sampling algorithm for regular languages, and the \newsemiringname{}, in more detail.

\subsection{Further exposition of the \newsemiringname{}}
\label{sec:more-counting-semiring-exposition}

Notice that if $\thecountingvectorleft$ is a vector whose only non-zero element is $\thecountingvectorleft_{\thecountingindex}$, and $\thecountingvectorright$ is a vector whose only non-zero element is $\thecountingvectorright_{\theothercountingindex}$, then $\thecountingvectorleft \countingmul \thecountingvectorright$ is a vector with the value $\thecountingvectorleft_{\thecountingindex} \basesemiringmul \thecountingvectorright_{\theothercountingindex}$ at index $\thecountingindex + \theothercountingindex$ and $\basesemiringzero$ elsewhere. More generally, \cref{eq:counting-multiply} convolves the two vectors, in effect marginalizing over all ways of reaching a count of $\thecountingindex$ for each $0 \leq \thecountingindex \leq \thecountingsize$.

In the case of $\thecountingdfa$ (\cref{sec:dfa-sampling}), in which all transition and accept weights are vectors with at most one non-zero element, the weight of any path is also a vector with at most one non-zero element, whose index is equal to the number of symbols the path scans, and whose value is the product of the original transition probabilities. We combine the weights of multiple paths by adding them elementwise (\cref{eq:counting-add}).

When the base semiring is the log semiring, as in $\thecountingdfa$, the $\countingadd$ operation can be computed in $\bigo{\thecountingsize}$ time, and the $\countingmul$ operation (\cref{eq:counting-multiply}) in $\bigo{\thecountingsize^2}$ time. When the base semiring is the real semiring, the $\countingmul$ operation can be computed in only $\bigo{\thecountingsize \log \thecountingsize}$ using the fast Fourier transform, but since this results in underflow when multiplying many probabilities, we do not use it in this paper.

\subsection{Closed semirings}

A key component of our method is the semiring generalization of weight pushing \citep{mohri-2009-algorithms}. Weight pushing sums over an infinite number of paths in a WDFA, which requires the ability to perform infinite summations in its semiring.
\begin{definition}
    \label{def:closed-semiring}
    Let $(\thesemiringset, \oplus, \otimes, \semiringzero, \semiringone)$ be a semiring. Let $\semiringexp{a}{i} = \bigotimes_{j=1}^{i} a$, and $\kleene{a} = \bigoplus_{i = 0}^{\infty} \semiringexp{a}{i}$. If $\kleene{a}$ is defined and in $\thesemiringset$ for all $a \in \thesemiringset$, we say the semiring is \defn{closed}.
\end{definition}
In the real semiring, $\basesemiringstar{a} = \frac{1}{1 - a}$. In the log semiring, $\basesemiringstar{a} = \log \frac{1}{1 - \exp a} = -\log(1 - \exp(a))$.

If a semiring $\thebasesemiring = (\thesemiringset, \basesemiringadd, \basesemiringmul, \basesemiringzero, \basesemiringone)$ is closed, so is any $\thecountingsize$\textsuperscript{th}-order \newsemiringname{} with respect to $\thebasesemiring$, i.e., $\thecountingsemiring = (\thecountingsemiringset, \countingadd, \countingmul, \countingzero, \countingone)$. We write $\countingstar{\thecountingvector} \defeq \countingsum_{\theothercountingindex = 0}^\infty \countingexp{\thecountingvector}{\theothercountingindex}$. It has the following closed-form solution.
\begin{align}
    (\countingstar{\thecountingvector})_{\thecountingindex} = \basesemiringstar{\thecountingvector_0} \basesemiringmul \left(\countingone_{\thecountingindex} \basesemiringadd \basesemiringsum_{\theothercountingindex = 1}^{\thecountingindex} \thecountingvector_{\theothercountingindex} \basesemiringmul (\countingstar{\thecountingvector})_{\thecountingindex - \theothercountingindex} \right) && (0 \leq \thecountingindex \leq \thecountingsize) \label{eq:counting-semiring-star-formula}
\end{align}
A derivation is given by \citet{snaebjarnarson-etal-2025-causally}. The elements $(\countingstar{\thecountingvector})_{\thecountingindex}$ can be computed in order of increasing $\thecountingindex$. Assuming $\basesemiringadd$, $\basesemiringmul$, and $\basesemiringstar{{}}$ run in $\bigo{1}$ time, the total time complexity is $\bigo{\thecountingsize^2}$, since it involves $\bigo{\thecountingsize}$ iterations, each of which takes $\bigo{\thecountingsize}$ time.

\subsection{Algorithms}
\label{sec:dfa-sampling-algorithm-details}

Here, we describe our algorithms for sampling strings from regular languages in more detail. Given a DFA $\thedfa$ and length range $\thelengthrange$, we run the following steps once and for all for each language:
\begin{enumerate}
    \item convert $\thedfa$ to a PDFA $\thepdfa$, then convert $\thepdfa$ to a WDFA $\thecountingdfa$ over the $\themaxlength$\textsuperscript{th}-order \newsemiringname{} with respect to the log semiring in $\bigo{(\thenumstates + \thenumtransitions) \themaxlength}$ time (\cref{alg:lift-weights});
    \item push the weights of $\thecountingdfa$ to get a new WDFA $\theweightpushedwdfa$ over the $\themaxlength$\textsuperscript{th}-order \newsemiringname{} with respect to the real semiring in $\bigo{(\thenumstates^3 + \thenumtransitions) \themaxlength^2}$ time (\cref{alg:counting-weight-pushing}), using the backward algorithm (\cref{alg:backward}) and Lehmann's algorithm (\cref{alg:lehmann});
    \item compute the subset of valid lengths $\thevalidlengthset \subseteq \thelengthrange$ in $\bigo{\themaxlength - \theminlength}$ time (\cref{alg:compute-valid-lengths});
    \item precompute the next-symbol sets for each state in $\bigo{\thenumstates + \thenumtransitions}$ time (\cref{alg:precompute-dfa-next-symbol-sets}).
\end{enumerate}
The total runtime of these preprocessing steps is $\preprocessingtimecomplexity$, being dominated by the weight pushing step.

Afterwards, we can sample strings as many times as desired as follows:
\begin{enumerate}
    \item uniformly sample a length $\thelength$ from $\thevalidlengthset$ in $\bigo{1}$ time;
    \item sample a string of length $\thelength$ with next-symbol sets in $\bigo{\thelength \log \themaxtransitionoutdegree}$ time, where $\themaxtransitionoutdegree$ is the maximum out-degree of the automaton (\cref{alg:counting-string-sampling}).
\end{enumerate}

\clearpage

\begin{algorithm}[H]
    \begin{algorithmic}[1]
        \Func{$\algorithmname{LiftWeights}(\thedfa = (\thestateset, \thealphabet, \thetransitionfunc, \thestartstate, \theacceptstateset), \themaxlength)$}
            \State \parbox[t]{0.9\linewidth}{let $\thecountingdfa = (\thestateset, \thealphabet, \thetransitionfunc', \thestartstate, \theacceptweightfunc)$ be a new WDFA over the $\themaxlength$\textsuperscript{th}-order \newsemiringname{} with respect to the log semiring}
            \For{$\thestatefrom \in \thestateset$}
                \State $k \gets 0$
                \For{$\fsatransition{\thestatefrom}{\thesymbol}{\thestateto} \in \thetransitionfunc$}
                    \State $k \gets k + 1$
                \EndFor
                \If{$\thestatefrom \in \theacceptstateset$}
                    \State $k \gets k + 1$
                \EndIf
                \State $\theprobability \gets -\log k$ \Comment{Set the probability to $\frac{1}{k}$ (in log space)}
                \For{$\fsatransition{\thestatefrom}{\thesymbol}{\thestateto} \in \thetransitionfunc$}
                    \State add $\wfatransition{\thestatefrom}{\thesymbol}{(-\infty, \theprobability, -\infty, \ldots, -\infty)}{\thestateto}$ to $\thetransitionfunc'$
                \EndFor
                \If{$\thestatefrom \in \theacceptstateset$}
                    \State $\theacceptweightfunc(\thestatefrom) \gets (\theprobability, -\infty, \ldots, -\infty)$
                \Else
                    \State $\theacceptweightfunc(\thestatefrom) \gets (-\infty, \ldots, -\infty)$
                \EndIf
            \EndFor
            \State \Return $\thecountingdfa$
        \EndFunc
    \end{algorithmic}
    \caption{Convert a partial DFA $\thedfa$ to a WDFA $\thecountingdfa$ over the $\themaxlength$\textsuperscript{th}-order \newsemiringname{} with respect to the log semiring. This implicitly creates an intermediate PDFA $\thepdfa$ over the real semiring with uniform probabilities. Time complexity: $\bigo{(\thenumstates + \thenumtransitions) \themaxlength}$.}
    \label{alg:lift-weights}
\end{algorithm}

\begin{algorithm}[H]
    \begin{algorithmic}[1]
        \Func{$\algorithmname{PushWeights}(\thecountingdfa = (\thestateset, \thealphabet, \thetransitionfunc, \thestartstate, \theacceptweightfunc))$}
            \State $\backwardtable \gets \algorithmname{Backward}(\thecountingdfa)$
            \State \parbox[t]{0.9\linewidth}{let $\theweightpushedwdfa = (\thestateset, \thealphabet, \thetransitionfunc', \thestartstate, \theacceptweightfunc')$ be a new WDFA over the $\thecountingsize$\textsuperscript{th}-order \newsemiringname{} with respect to the real semiring}
            \For{$\thestatefrom \in \thestateset$}
                \State let $T$ be an empty mapping from $\thealphabet$ to $(\realset \cup \{-\infty\})^{\thecountingsize+1}$
                \For{$\wfatransition{\thestatefrom}{\thesymbol}{\thecountingvector}{\thestateto} \in \thetransitionfunc$}
                    \State $T[\thesymbol] \gets \thecountingvector \countingmul \backwardtable[\thestateto]$ \label{line:counting-weight-pushing-multiplication}
                \EndFor
                \State $T \gets \softmaxdim\limits_{\thesymbol} ~ T[\thesymbol, :]$ \Comment{\parbox[t]{3.5in}{Convert log probabilities to normalized probabilities. This may safely return NaN for columns with all $-\infty$.}} \label{line:counting-weight-pushing-softmax}
                \For{$\wfatransition{\thestatefrom}{\thesymbol}{\thecountingvector}{\thestateto} \in \thetransitionfunc$}
                    \State add $\wfatransition{\thestatefrom}{\thesymbol}{T[\thesymbol]}{\thestateto}$ to $\thetransitionfunc'$
                \EndFor
            \EndFor
            \State $\theallsumweight \gets \backwardtable[\thestartstate]$
            \State \Return $(\theweightpushedwdfa, \theallsumweight)$
        \EndFunc
    \end{algorithmic}
    \caption{Weight pushing on $\thecountingdfa$, where $\thecountingdfa$ is a WDFA over the $\thecountingsize$\textsuperscript{th}-order \newsemiringname{} with respect to the log semiring. Given $\thecountingdfa$, produce a WDFA $\theweightpushedwdfa$ over the $\thecountingsize$\textsuperscript{th}-order \newsemiringname{} with respect to the real semiring that is suitable for length-constrained sampling (\cref{alg:counting-string-sampling}). Also return the allsum weight $\theallsumweight$, which can be used to compute the set of valid lengths (\cref{alg:compute-valid-lengths}). Time complexity: $\bigo{(\thenumstates^3 + \thenumtransitions) \thecountingsize^2}$.}
    \label{alg:counting-weight-pushing}
\end{algorithm}

\begin{algorithm}[H]
    \begin{algorithmic}[1]
        \Func{$\algorithmname{Backward}(\theautomaton = (\thestateset, \thealphabet, \thetransitionfunc, \thestartstate, \theacceptweightfunc))$}
            \State let $A$ be a matrix indexed by $\thestateset \times \thestateset$ full of $\semiringzero$
            \For{$\wfatransition{\thestatefrom}{\thesymbol}{\theweight}{\thestateto} \in \thetransitionfunc$}
                \State $A[\thestatefrom, \thestateto] \gets A[\thestatefrom, \thestateto] \oplus \theweight$
            \EndFor
            \State $A \gets \algorithmname{Lehmann}(A)$
            \State let $\backwardtable$ be a table indexed by $\thestateset$
            \For{$\thestatefrom \in \thestateset$}
                \State $\backwardtable[\thestatefrom] \gets \bigoplus\limits_{\thestateto \in \thestateset} A[\thestatefrom, \thestateto] \otimes \theacceptweightfunc(\thestateto)$
            \EndFor
            \State \Return $\backwardtable$
        \EndFunc
    \end{algorithmic}
    \caption{Backward algorithm on a WDFA $\theautomaton$ over the closed semiring $(\thesemiringset, \oplus, \otimes, \semiringzero, \semiringone)$. Time complexity: $\bigo{\runtimeof{\oplus} \thenumstates^3 + \runtimeof{\otimes} \thenumstates^3 + \runtimeof{\ast} \thenumstates + \runtimeof{\oplus} \thenumtransitions}$, where $\runtimeof{\oplus}$, $\runtimeof{\otimes}$, and $\runtimeof{\ast}$ are the runtimes of $\oplus$, $\otimes$, and $\kleene{}$, respectively.}
    \label{alg:backward}
\end{algorithm}

\begin{algorithm}[H]
    \begin{algorithmic}[1]
        \Func{$\algorithmname{Lehmann}(A)$}
            \For{$k = 1, \ldots, N$}
                \State let $A'$ be a $N \times N$ matrix full of $\semiringzero$
                \State $a \gets \kleene{A[k, k]}$
                \For{$i = 1, \ldots, N$}
                    \For{$j = 1, \ldots, N$}
                        \State $A'[i, j] \gets A[i, j] \oplus \left( A[i, k] \otimes a \otimes A[k, j] \right)$ \label{line:lehmann-star}
                    \EndFor
                \EndFor
                \State $A \gets A'$
            \EndFor
            \For{$k = 1, \ldots, N$}
                \State $A[k, k] \gets A[k, k] \oplus \semiringone$
            \EndFor
            \State \Return $A$
        \EndFunc
    \end{algorithmic}
    \caption{Lehmann's algorithm for inverting matrix $A \in \thesemiringset^{N \times N}$ in the closed semiring $(\thesemiringset, \oplus, \otimes, \semiringzero, \semiringone)$. Time complexity: $\bigo{\runtimeof{\oplus} N^3 + \runtimeof{\otimes} N^3 + \runtimeof{\ast} N}$, where $\runtimeof{\oplus}$, $\runtimeof{\otimes}$, and $\runtimeof{\ast}$ are the runtimes of $\oplus$, $\otimes$, and $\kleene{}$, respectively.}
    \label{alg:lehmann}
\end{algorithm}

\begin{algorithm}[H]
    \begin{algorithmic}[1]
        \Func{$\algorithmname{ComputeValidLengths}(\theallsumweight, \theminlength, \themaxlength)$}
            \State \Return $\{ \thelength \in \{\theminlength, \ldots, \themaxlength\} \mid \theallsumweight_{\thelength} > -\infty \}$
        \EndFunc
    \end{algorithmic}
    \caption{Given an allsum weight $\theallsumweight \in (\realset \cup \{-\infty\})^{\thecountingsize+1}$, a minimum length $\theminlength$, and a maximum length $\themaxlength$, where $\thecountingsize \geq \themaxlength$, compute the set of valid lengths $\thevalidlengthset$. The allsum weight $\theallsumweight$ must be the second output of \cref{alg:counting-weight-pushing}. Time complexity: $\bigo{\themaxlength - \theminlength}$.}
    \label{alg:compute-valid-lengths}
\end{algorithm}

\begin{algorithm}[H]
    \begin{algorithmic}[1]
        \Func{$\algorithmname{ComputeNext}(\thedfa = (\thestateset, \thealphabet, \thetransitionfunc, \thestartstate, \theacceptstateset))$}
            \State let $\thenextsymboltable$ be a table indexed by $\thestateset$
            \For{$\thestatefrom \in \thestateset$}
                \State $\thenextsymboltable[\thestatefrom] \gets \emptyset$
                \For{$\fsatransition{\thestatefrom}{\thesymbol}{\thestateto} \in \thetransitionfunc$}
                    \State add $\thesymbol$ to $\thenextsymboltable[\thestatefrom]$
                \EndFor
                \If{$\thestatefrom \in \theacceptstateset$}
                    \State add $\eos$ to $\thenextsymboltable[\thestatefrom]$
                \EndIf
            \EndFor
            \State \Return $\thenextsymboltable$
        \EndFunc
    \end{algorithmic}
    \caption{Given a trim partial DFA $\thedfa$, precompute the next-symbol set for each state. Time complexity: $\bigo{\thenumstates + \thenumtransitions}$.}
    \label{alg:precompute-dfa-next-symbol-sets}
\end{algorithm}

\begin{algorithm}[H]
    \begin{algorithmic}[1]
        \Func{$\algorithmname{Sample}(\theweightpushedwdfa = (\thestateset, \thealphabet, \thetransitionfunc, \thestartstate, \theacceptweightfunc), \thelength, \thenextsymboltable)$}
            \State $\thestatefrom \gets \thestartstate$
            \State $\thestring \gets \emptystring$
            \State $\thenextsymbolsetlist \gets \{ \thenextsymboltable[\thestatefrom] \}$
            \For{$\thecountingindex = \thelength, \ldots, 1$} 
                \State sample $(\thesymbol, \thestateto) \samplefrom p$, where $p((\thesymbol, \thestateto)) = \thecountingvector_{\thecountingindex}$ for $\wfatransition{\thestatefrom}{\thesymbol}{\thecountingvector}{\thestateto} \in \thetransitionfunc$ \Comment{Log.\ time due to binary search.} \label{line:sample-transition}
                \State $\thestatefrom \gets \thestateto$
                \State $\thestring \gets \thestring \thesymbol$
                \State append $\thenextsymboltable[\thestatefrom]$ to $\thenextsymbolsetlist$
            \EndFor
            \State \Return $(\thestring, \thenextsymbolsetlist)$
        \EndFunc
    \end{algorithmic}
    \caption{Sample a string of length $\thelength$ from a WDFA $\theweightpushedwdfa$ over the $\thecountingsize$\textsuperscript{th}-order \newsemiringname{} with respect to the real semiring, where $\thecountingsize \geq \thelength$. Also output a sequence of $\thelength + 1$ next-symbol sets. The DFA $\theweightpushedwdfa$ must be the first output of \cref{alg:counting-weight-pushing}, and $\thenextsymboltable$ must be the output of \cref{alg:precompute-dfa-next-symbol-sets}. Time complexity: $\bigo{\thelength \log \themaxtransitionoutdegree}$, where $\themaxtransitionoutdegree$ is the maximum out-degree of $\theweightpushedwdfa$.}
    \label{alg:counting-string-sampling}
\end{algorithm}

\subsection{Explanation and details}

The log semiring is used in \cref{alg:lift-weights} instead of the real semiring in order to avoid underflow.

Running a semiring generalization of weight pushing on $\thecountingdfa$ allows us to compute exactly the quantities we need for efficient sampling from $\theconditionaldist$. At every state, and for every $0 \leq \thecountingindex \leq \thecountingsize$, it computes a probability distribution over (1) outgoing transitions and (2) whether to accept, conditioned on scanning exactly $\thecountingindex$ symbols in the future, according to the probabilities of $\thepdfa$. It does this by computing (1) for every transition, the sum of the weights of all paths in $\thecountingdfa$ that start with that transition, and (2) for every state, the sum of the weights of all paths in $\thecountingdfa$ that start and end at that state. Once we have these weights, which are vectors, if we locally normalize them elementwise at each state, we get the aforementioned probability distributions. Now, a $\bigo{\thelength}$-time sampling algorithm for sampling a string of exactly length $\thelength$ becomes straightforward (\cref{alg:counting-string-sampling}). We start in $\thestartstate$ and initialize a counter $\thecountingindex$ to $\thelength$. Repeatedly, we sample transitions from the normalized distributions for index $\thecountingindex$ from the current state and decrement $\thecountingindex$ for every symbol scanned. We stop when we sample an accept action (this happens implicitly in \cref{alg:counting-string-sampling} after $\thecountingindex = 1$, as the automaton is guaranteed by construction to accept at that point). The set of valid string lengths $\thevalidlengthset$ is the set of all $\thelength$ for which we can take any transition or accept at $\thestartstate$, conditioned on scanning $\thelength$ symbols in the future, with nonzero probability (\cref{alg:compute-valid-lengths}).

We now discuss details of the weight pushing algorithm. Let us define a path in a WDFA as follows (cf.\ \cref{def:semiring-wfa-path}).
\begin{definition}
    For any WDFA $\theautomaton = (\thestateset, \thealphabet, \thetransitionfunc, \thestartstate, \theacceptweightfunc)$, a \defn{path} is a sequence of states and transitions
    \begin{equation}
        \thepath = \thepathstate_0 \xrightarrow{\thesymbol_1 / \theweight_1} \thepathstate_1 \cdots \thepathstate_{\thepathlength - 1} \xrightarrow{\thesymbol_{\thepathlength} / \theweight_{\thepathlength}} \thepathstate_{\thepathlength}
    \end{equation}
    such that for all $i = 0, \ldots, \thepathlength - 1$, $\wfatransition{\thepathstate_i}{\thesymbol_{i+1}}{\theweight_{i+1}}{\thepathstate_{i+1}} \in \thetransitionfunc$. We say that $\thepath$ \defn{scans} the string $\thesymbol_1 \cdots \thesymbol_{\thepathlength}$ and that the \defn{inner path weight} of $\thepath$ is
    \begin{equation}
        \innerpathweight{\thepath} \defeq \bigotimes_{i = 1}^{\thepathlength} \theweight_i.
    \end{equation}

    The \defn{path weight} of $\thepath$ is
    \begin{equation}
        \pathweight{\thepath} \defeq \innerpathweight{\thepath} \otimes \theacceptweightfunc(\thepathstate_{\thepathlength}).
    \end{equation}
\end{definition}

Lehmann's algorithm \citep{lehmann-1977-algebraic} computes the total inner weight of all paths between all pairs of states. When a WDFA $\theautomaton$ contains cycles, this set of paths can be infinite, so a closed semiring with a defined $\kleene{{}}$ operation is required. We give Lehmann's algorithm a table $A$ indexed by $\thestateset \times \thestateset$ such that
\begin{align}
    A[\thestatefrom, \thestateto] &= \bigoplus_{\wfatransition{\thestatefrom}{\thesymbol}{\theweight}{\thestateto} \in \thetransitionfunc} \theweight && (\thestatefrom, \thestateto \in \thestateset).
\end{align}
Let $\innerpathset{\theautomaton}{\thestatefrom}{\thestateto}$ denote the (infinite) set of all paths starting at $\thestatefrom$ and ending at $\thestateto$ in $\theautomaton$. Lehmann's algorithm computes a table $A'$ indexed by $\thestateset \times \thestateset$ where
\begin{align}
    A'[\thestatefrom, \thestateto] &= \bigoplus_{\thepath \in \innerpathset{\theautomaton}{\thestatefrom}{\thestateto}} \innerpathweight{\thepath} && (\thestatefrom, \thestateto \in \thestateset).
\end{align}

This allows us to compute the backward weight of each state $\thestatefrom$, or the total weight of accepting if starting in $\thestatefrom$.
\begin{definition}
    For a WDFA $\theautomaton$, the \defn{backward weight} $\backwardtable[\thestatefrom]$ is the sum of the weights of all paths from $\thestatefrom$ to any other state.
    \begin{equation}
        \backwardtable[\thestatefrom] \defeq \bigoplus_{\substack{\thestateto \in \thestateset, \\ \thepath \in \innerpathset{\theautomaton}{\thestatefrom}{\thestateto}}} \pathweight{\thepath}.
    \end{equation}
\end{definition}
The quantity $\theallsumweight = \backwardtable[\thestartstate]$ is the total weight assigned by $\theautomaton$ to strings in $\kleene{\thealphabet}$.
We call it the \newterm{allsum}.

We use the backward weights for weight pushing. Weight pushing redistributes the weights of $\thecountingdfa$ so that the weight of each transition in $\theweightpushedwdfa$ is the normalized (infinite) sum of the weights of all paths in $\thecountingdfa$ that start with that transition (\cref{alg:counting-weight-pushing}, line \ref{line:counting-weight-pushing-multiplication}). Note that unlike the standard weight pushing algorithm \citep{mohri-2009-algorithms}, we do not ``normalize'' the weights by left-multiplying them by $\backwardtable[\thestatefrom]^{-1}$, where, in general, $a^{-1} \in \thesemiringset$ is the solution to the equation $a^{-1} \otimes a = \semiringone$. The reason for this is not because computing $a^{-1}$ is hard; it is possible to do so by solving a system of linear equations. The issue is that this would result in weights that add up elementwise to $\countingone = (\basesemiringone, \basesemiringzero, \ldots, \basesemiringzero)$, which would not be useful for our purposes. Instead, we normalize them by dividing them elementwise, which we do implicitly when we apply $\softmax$ (\cref{alg:counting-weight-pushing}, line \ref{line:counting-weight-pushing-softmax}).

In our implementation, we precompute the elementwise \emph{cumulative sum} of the transition weights. In our Python implementation of \cref{alg:counting-string-sampling}, line \ref{line:sample-transition}, we pass the cumulative sum to the \verb|python.choices| function as the argument \verb|cum_weights| in order to avoid recomputing it every time. Note that \verb|python.choices| runs in logarithmic time in the length of \verb|cum_weights| due to its use of binary search.

An important implementation detail is that all of the operations in \cref{def:counting-semiring} and weight pushing are amenable to vectorization, and we take advantage of this by accelerating it with PyTorch \citep{paszke-etal-2019-pytorch}. This significantly improves the runtime of the preprocessing step.

\section{Details of neural network architectures}
\label{sec:neural-network-architecture-details}

In this section, we describe each of the neural network architectures referenced in \cref{sec:neural-network-architectures} in more detail. Our implementations for all three architectures are based on those provided by PyTorch \citep{paszke-etal-2019-pytorch}. Each architecture consists of a configurable number of layers $\thenumlayers$. Each architecture uses an input embedding matrix $\theinputembeddingmatrix$ to map each symbol $\thesymboli{\thetimestep}$ of the input string to an embedding $\theinputembeddingi{\thetimestep} = \theinputembeddingmatrix_{\thesymboli{\thetimestep}}$. The size of the embeddings is always $\thehiddensize$, the size of the hidden vectors. In the following, $\dropout{\cdot}$ indicates the application of dropout.

\subsection{Simple RNN}

Let $\thehiddenstateli{\thelayerno}{\thetimestep}$ denote the hidden state of the $\thelayerno$\textsuperscript{th} layer at timestep $\thetimestep$. We apply dropout to the input embeddings, the hidden states between layers, and the hidden states output from the last layer \citep{zaremba-etal-2015-recurrent}. Our simple RNN architecture is defined as follows.
\begin{subequations}
\begin{align}
    \thehiddenstateli{0}{\thetimestep} &\defeq \theinputembeddingi{\thetimestep} = \theinputembeddingmatrix_{\thesymboli{\thetimestep}} && (1 \leq \thetimestep \leq \thelength) \\
    \thehiddenstateli{\thelayerno}{0} &\defeq \tanh(\initialhiddenstateparaml{\thelayerno}) \quad && (1 \leq \thelayerno \leq \thenumlayers) \\
    \thehiddenstatedropoutli{\thelayerno}{\thetimestep} &\defeq \dropout{\thehiddenstateli{\thelayerno}{\thetimestep}} && (0 \leq \thelayerno \leq \thenumlayers; 0 \leq \thetimestep \leq \thelength) \\
    \thehiddenstateli{\thelayerno}{t} &\defeq \tanh(\affinel{h}{\thelayerno}{\begin{bmatrix} \thehiddenstatedropoutli{\thelayerno-1}{\thetimestep} \\ \thehiddenstateli{\thelayerno}{\thetimestep-1} \end{bmatrix}}) \quad && (1 \leq \thelayerno \leq \thenumlayers; 1 \leq \thetimestep \leq \thelength) \\
    \thehiddeni{\thetimestep} &\defeq \thehiddenstatedropoutli{\thenumlayers}{\thetimestep} && (0 \leq \thetimestep \leq \thelength)
\end{align}
\end{subequations}
Here, $\initialhiddenstateparaml{\thelayerno} \in \realset^{\thehiddensize}$ is a learned parameter, making the initial hidden state $\thehiddenstateli{\thelayerno}{0}$ of each layer learned. Note that PyTorch's RNN implementation includes redundant bias parameters $b_{ih}$ and $b_{hh}$; we have modified it to use a single bias parameter $\biasparaml{h}{\thelayerno}$ per layer instead.

\subsection{LSTM}

As with the simple RNN, we apply dropout following \citet{zaremba-etal-2015-recurrent}. Our LSTM architecture is defined as follows. Let $\elementwisemultiply$ denote elementwise multiplication.
{\newcommand{\lstmaffine}[1]{\affinel{#1}{\thelayerno}{\begin{bmatrix} \thehiddenstatedropoutli{\thelayerno-1}{\thetimestep} \\ \thehiddenstateli{\thelayerno}{\thetimestep-1} \end{bmatrix}}}
\newcommand{\bounds}{(1 \leq \thelayerno \leq \thenumlayers; 1 \leq \thetimestep \leq \thelength)}
\begin{subequations}
\begin{align}
    \thehiddenstateli{0}{\thetimestep} &\defeq \theinputembeddingi{\thetimestep} = \theinputembeddingmatrix_{\thesymboli{\thetimestep}} && (1 \leq \thetimestep \leq \thelength) \\
    \thehiddenstateli{\thelayerno}{0} &\defeq \tanh(\initialhiddenstateparaml{\thelayerno}) \quad && (1 \leq \thelayerno \leq \thenumlayers) \\
    \thehiddenstatedropoutli{\thelayerno}{\thetimestep} &\defeq \dropout{\thehiddenstateli{\thelayerno}{\thetimestep}} && (0 \leq \thelayerno \leq \thenumlayers; 0 \leq \thetimestep \leq \thelength) \\
    \theinputgateli{\thelayerno}{\thetimestep} &\defeq \logistic{\lstmaffine{i}} \quad && \bounds \\
    \theforgetgateli{\thelayerno}{\thetimestep} &\defeq \logistic{\lstmaffine{f}} \quad && \bounds \\
    \thecandidateli{\thelayerno}{\thetimestep} &\defeq \tanh(\lstmaffine{g}) \quad && \bounds \\
    \theoutputgateli{\thelayerno}{\thetimestep} &\defeq \logistic{\lstmaffine{o}} \quad && \bounds \\
    \thememorycellli{\thelayerno}{\thetimestep} &\defeq \theforgetgateli{\thelayerno}{\thetimestep} \elementwisemultiply \thememorycellli{\thelayerno}{\thetimestep-1} + \theinputgateli{\thelayerno}{\thetimestep} \elementwisemultiply \thecandidateli{\thelayerno}{\thetimestep} \quad && \bounds \\
    \thehiddenstateli{\thelayerno}{\thetimestep} &\defeq \theoutputgateli{\thelayerno}{\thetimestep} \elementwisemultiply \tanh(\thememorycellli{\thelayerno}{\thetimestep}) \quad && \bounds \\
    \thememorycellli{\thelayerno}{0} &\defeq \vzero \quad && (1 \leq \thelayerno \leq \thenumlayers) \\
    \thehiddeni{\thetimestep} &\defeq \thehiddenstatedropoutli{\thenumlayers}{\thetimestep} && (0 \leq \thetimestep \leq \thelength)
\end{align}
\end{subequations}
}

Here, $\initialhiddenstateparaml{\thelayerno} \in \realset^{\thehiddensize}$ is a learned parameter, making the initial hidden state $\thehiddenstateli{\thelayerno}{0}$ of each layer learned. Note that PyTorch's LSTM implementation includes pairs of redundant bias parameters: $b_{ii}$ and $b_{hi}$, $b_{if}$ and $b_{hf}$, $b_{ig}$ and $b_{hg}$, and $b_{io}$ and $b_{ho}$. We have modified it so that each pair is replaced with a single bias parameter per layer.

\subsection{Transformer}

We use PyTorch's transformer implementation. Following \citet{vaswani-etal-2017-attention}, we map input symbols to vectors of size $\thehiddensize$ with a scaled embedding layer and add sinusoidal positional encodings. Note that \citet{deletang-etal-2023-neural} found that the type of positional encoding does not seem to have a large impact on the transformer's performance, whereas \citet{ruoss-etal-2023-randomized} found that transformers with \emph{randomized} positional encodings perform better on the same set of tasks. We use 8 attention heads in each layer, and we set the number of hidden units in each feedforward layer to $4 \thehiddensize$. We use pre-norm instead of post-norm and apply layer norm to the output of the last layer. We use the same dropout rate throughout the transformer. We apply it in the same places as \citet{vaswani-etal-2017-attention}, and, as implemented by PyTorch, we also apply it to the hidden units of feedforward sublayers and to the attention probabilities of scaled dot-product attention operations. We always use $\bos$ as the first input symbol to the transformer, which has been shown to improve performance on formal languages \citep{ebrahimi-etal-2020-how}.

\section{Details of experiments}
\label{sec:experiments-details}

Here, we provide additional details about the models and training procedures used in \cref{sec:experiments}.

Wherever dropout is applicable, we use a dropout rate of 0.1. For the transformer, when adjusting $\thehiddensize$ to accommodate the parameter budget, we round it to the nearest multiple of 8, as PyTorch requires it to be a multiple of the number of attention heads. We use Xavier uniform initialization \citep{glorot-bengio-2010-understanding} to initialize the fully-connected layers in the recognition and next-symbol prediction heads. For layer norm, we initialize weights to 1 and biases to 0. We initialize all other parameters by sampling uniformly from $[-0.1, 0.1]$.

For each epoch, we randomly shuffle the training set and group strings of similar lengths into the same minibatch, enforcing an upper limit of $\thebatchsize$ symbols per batch, including padding, $\bos$, and $\eos$ symbols. We train each model by minimizing the loss function defined in \cref{sec:loss-function} using Adam \citep{kingma-ba-2015-adam}. We clip gradients with a threshold of 5 using $\normltwo$ norm rescaling. We take a checkpoint every 10k examples (i.e., at the end of each epoch), at which point we evaluate the model on the validation set and update the learning rate and early stopping schedules. We multiply the learning rate by 0.5 after 5 checkpoints of no decrease in recognition cross-entropy on the validation set, and we stop early after 10 checkpoints of no decrease. We select the checkpoint with the lowest recognition cross-entropy on the validation set when reporting results. We train for a maximum of 1k epochs.

Every time we train a model, we randomly sample a number of hyperparameters. We randomly sample the batch size $\thebatchsize$ from a uniform distribution over $[128, 4096]$. We randomly sample the initial learning rate from a log-uniform distribution over $[0.0001, 0.01]$. We randomly sample the loss term coefficients $\thelmcoefficient$ and $\thenextsymbolcoefficient$, when they are needed, from a log-uniform distribution over $[0.01, 10]$.

Our experiments are small enough that we are able to run them in CPU mode, without GPU acceleration.

\section{Details of languages}
\label{sec:language-details}

Here, we give more detailed descriptions of the languages listed in \cref{tab:languages}. For each language, we indicate whether it is regular (\textbf{R}), deterministic context-free (\textbf{DCF}), context-free (\textbf{CF}), or context-sensitive (\textbf{CS}); this also indicates that the language does not belong to previous classes.

\newcommand{\examplestable}[2]{
    \begin{center}
        \small
        \begin{tabular}{p{0.35\textwidth}p{0.35\textwidth}}
        \textbf{Positive Examples} & \textbf{Negative Examples} \\
        \parbox[t]{0.35\textwidth}{#1} & \parbox[t]{0.35\textwidth}{#2} \\
        \end{tabular}
    \end{center}
}

\paragraph{\languageevenpairs{} (R).} Binary strings where the total number of $\sym{01}$ and $\sym{10}$ substrings is even. Equivalently, this is the language of binary strings that have the same first and last symbol, or strings with fewer than two symbols. This is a low-sensitivity language, since only changing the first or last bit changes membership. This language corresponds to the Even Pairs task of \citet{deletang-etal-2023-neural}. This language is given as an example in \citet[][Chapter 1.4]{sipser-2013-introduction}; see also Example 1.11.

\examplestable{
$\emptystring$ \\
$\sym{0}$ \\
$\sym{11}$ \\
$\sym{010100}$ \\
$\sym{11101101}$
}{
$\sym{01}$ \\
$\sym{10100}$ \\
$\sym{100110}$
}

\paragraph{\languagerepeatzeroone{} (R).} The string $\sym{01}$ repeated any number of times. As a star-free language, it is a low-sensitivity language under the uniform distribution over $\{\sym{0}, \sym{1}\}^{\thelength}$ for a given string length $\thelength$, since most strings are not in the language. However, under our distribution in which exactly half of the strings are in the language, \languagerepeatzeroone{} can be considered a high-sensitivity language. 

\examplestable{
$\emptystring$ \\
$\sym{01}$ \\
$\sym{0101}$
}{
$\sym{0}$ \\
$\sym{10101}$ \\
$\sym{011001}$
}

\paragraph{\languageparity{} (R).} Binary strings with an odd number of $\sym{1}$s. This is a high-sensitivity language, since changing any bit changes membership. It appears commonly in the theoretical literature on the representational capacity of transformers, since its high sensitivity makes it difficult for transformers to represent the language \citep{hahn-2020-theoretical,chiang-cholak-2022-overcoming,bhattamishra-etal-2023-simplicity,bhattamishra-etal-2020-ability,hahn-rofin-2024-why}. However, it can easily be learned by RNNs and scratchpad-augmented transformers \citep{liu-etal-2023-transformers,hahn-rofin-2024-why}. This language corresponds to the Parity Check task of \citet{deletang-etal-2023-neural}.

\examplestable{
$\sym{1}$ \\
$\sym{01011}$
}{
$\emptystring$ \\
$\sym{101110}$
}

\paragraph{\languagecyclenavigation{} (R).} Suppose an agent is on a 5-state cycle, numbered from 0 to 4, starting at state 0. Strings in this language consist of a sequence of moves---move right ($\sym{>}$), move left ($\sym{<}$), or stay ($\sym{=}$)---followed by the integer corresponding to the state reached after executing the sequence of moves. This language corresponds to the Cycle Navigation task of \citet{deletang-etal-2023-neural}.

\examplestable{
$\sym{0}$ \\
$\sym{>=>><2}$ \\
$\sym{<=<>=<3}$ \\
$\sym{>=>==<1}$
}{
$\sym{3}$ \\
$\sym{>=>><4}$ \\
$\sym{<=<>=<}$ \\
$\sym{4=31<}$
}

\paragraph{\languagemodulararithmeticsimple{} (R).} An expression involving the digits $\{ \sym{0}, \ldots, \sym{4} \}$ and the operators $\{ \sym{+}, \sym{-}, \sym{×} \}$, then the result of evaluating that expression in modulo 5 arithmetic. All operators are left-associative infix operators with equal precedence. Note that this is different from the Modular Arithmetic (Simple) task of \citet{deletang-etal-2023-neural}, which gives higher precedence to $\sym{×}$, which would result in a more complex DFA.

\examplestable{
$\sym{3=3}$ \\
$\sym{2+4+0-3=3}$ \\
$\sym{1-3×2=1}$
}{
$\emptystring$ \\
$\sym{1=4}$ \\
$\sym{2+4+0-3=2}$ \\
$\sym{1-3×2=0}$ \\
$\sym{-1=4}$ \\
$\sym{=×3+-0+}$
}

\paragraph{\languagedycktwothree{} (R).} In general, the language Dyck-$(k, m)$ contains strings of balanced brackets of $k$ types with a maximum nesting depth of $m$. We specifically test $k = 2$ and $m = 3$. Bounded Dyck languages have been studied for RNNs \citep{hewitt-etal-2020-rnns,bhattamishra-etal-2020-practical} and transformers \citep{ebrahimi-etal-2020-how,bhattamishra-etal-2020-ability,bhattamishra-etal-2020-practical,yao-etal-2021-self,wen-etal-2023-transformers} both in terms of (empirical) representational capacity as well as interpretability. These languages are star-free \citep{strobl-etal-2024-what}, and the language Dyck-$(k, m)$ has a dot-depth of $m$. In this sense, bounded Dyck languages span the (infinite) hierarchy of star-free languages, which have been closely linked to transformers \citep{yang-etal-2024-masked}. \Citet{bhattamishra-etal-2020-ability} argue that transformers struggle to learn languages beyond dot-depth 1. 

\examplestable{
$\emptystring$ \\
$\sym{([])}$ \\
$\sym{[()]}$ \\
$\sym{([()]())[()]}$
}{
$\sym{](]))[(]}$ \\
$\sym{([]}$ \\
$\sym{[(])}$ \\
$\sym{([(())]())[()]}$ \\
$\sym{)][(}$
}

\paragraph{\languagefirst{} (R).} Binary strings that start with $\sym{1}$. This is a low-sensitivity language, since only changing the first bit changes membership. More concretely, it is a special case of a $1$-\languageparity{} language, i.e., the \languageparity{} language restricted to a single position \citep{hahn-rofin-2024-why}. \citet{bhattamishra-etal-2023-simplicity} refer to such functions as $1$-Sparse functions. Transformers have been shown to learn such sparse functions well \citep{edelman-etal-2022-inductive,bhattamishra-etal-2023-simplicity,hahn-rofin-2024-why}.

\examplestable{
$\sym{1}$ \\
$\sym{101110}$
}{
$\emptystring$ \\
$\sym{0}$ \\
$\sym{0111010}$
}

\paragraph{\languagemajority{} (DCF).} Binary strings with more $\sym{1}$s than $\sym{0}$s. It is between low- and high-sensitivity, since string membership flips only when half of the bits in the string are $\sym{1}$. This language has been studied by \citet{perez-etal-2021-attention,merrill-etal-2022-saturated,bhattamishra-etal-2023-simplicity,strobl-2023-average}. Although it lies higher on the Chomsky hierarchy than the high-sensitivity languages \languageparity{} and \languageevenpairs{}, its lower sensitivity makes it easier for transformers to learn \citep{bhattamishra-etal-2023-simplicity,hahn-rofin-2024-why}.

\examplestable{
$\sym{1}$ \\
$\sym{110}$ \\
$\sym{011011010}$
}{
$\emptystring$ \\
$\sym{001}$ \\
$\sym{1100}$
}

We generate a positive example by first sampling a length $\thelength$ uniformly from $[\max(\theminlength, 1), \themaxlength]$, then a number of $\sym{1}$s $c_{\sym{1}}$ uniformly from $[\lfloor\thelength/2\rfloor+1, \thelength]$. We compute the number of $\sym{0}$s as $c_{\sym{0}} = \thelength - c_{\sym{1}}$. We return a random permutation of the string $\sym{0}^{c_{\sym{0}}}\sym{1}^{c_{\sym{1}}}$.

To test whether a string is in the language, we simply return whether the number of $\sym{1}$s is greater than the number of $\sym{0}$s.

To compute $\thenextsymbolset$, we always include $\{\sym{0}, \sym{1}\}$, and we add $\eos$ if $\thestringcontext \in \thelanguage$.

\paragraph{\languagestackmanipulation{} (DCF).} Each string starts with a binary string representing the contents of a stack, written bottom to top. Then, there is a sequence of operations to be performed on the stack, where popping is indicated with $\symbolpop$, pushing $\sym{0}$ is indicated with the string $\symbolpush \, \sym{0}$, and pushing $\sym{1}$ is indicated with the string $\symbolpush \, \sym{1}$. Popping from an empty stack is not allowed. Finally, there is a $\sym{=}$, and the contents of the resulting stack are written top to bottom (if the stack were not reversed, this language would not be context-free). This language corresponds to the Stack Manipulation task of \citet{deletang-etal-2023-neural}; note that their version treats popping from an empty stack as a no-op.

\examplestable{
$\sym{=}$ \\
$\sym{01011}\,\symbolpop\,\symbolpush\,\sym{0}\,\symbolpush\,\sym{1=101010}$ \\
$\sym{11}\,\symbolpop\,\symbolpush\,\sym{0=01}$ \\
$\sym{01}\,\symbolpop\,\symbolpop\,\symbolpush\,\sym{0}\,\symbolpush\,\sym{1=10}$
}{
$\emptystring$ \\
$\sym{01011}\,\symbolpop\,\symbolpush\,\sym{0}\,\symbolpush\,\sym{1=010101}$ \\
$\sym{11=}\,\symbolpop\,\symbolpush\,\sym{=01}$ \\
$\sym{01}\,\symbolpop\,\symbolpop\,\symbolpop\,\symbolpush\,\sym{0}\,\symbolpush\,\sym{1=10}$
}

To generate a positive example, we first sample the number of original stack symbols $n_\mathrm{stack}$ and then the number of push operations $n_\mathrm{push}$. Let $n_\mathrm{pop}$ be the number of pop operations. Note that the length of the resulting stack is $n_\mathrm{stack} + n_\mathrm{push} - n_\mathrm{pop}$, and the total length of the string is $\thelength = n_\mathrm{stack} + 2 n_\mathrm{push} + n_\mathrm{pop} + 1 + n_\mathrm{push} - n_\mathrm{pop} = 2 n_\mathrm{stack} + 3 n_\mathrm{push} + 1$. The minimum value of $n_\mathrm{push}$ is 0. So, following simple algebra, we sample $n_\mathrm{stack}$ uniformly from $\left[ \max(0, \ceil{\frac{\theminlength - 1}{2}}), \floor{\frac{\themaxlength - 1}{2}} \right]$, and then we sample $n_\mathrm{push}$ uniformly from $\left[ \max(0, \ceil{\frac{\theminlength - 2 n_\mathrm{stack} - 1}{3}}), \floor{\frac{\themaxlength - 2 n_\mathrm{stack} - 1}{3}} \right]$. We sample an initial stack uniformly from $\{\sym{0}, \sym{1}\}^{n_\mathrm{stack}}$. We then sample a sequence of operations while counting the number of pushes generated so far and simulating the stack actions. At each step, we sample an action uniformly from $\{ \symbolpush, \symbolpop \}$. We disallow $\symbolpop$ if the stack is empty. If we sample $\symbolpush$, we uniformly sample a pushed symbol from $\{\sym{0}, \sym{1}\}$. We stop when we sample a $\symbolpush$ and $n_\mathrm{push}$ pushes have already been generated (this allows $\symbolpop$ to occur after the last push). We then add $\sym{=}$ and the resulting stack.

To test whether a string is in the language, we scan it from left to right while checking that it has the right format and simulating the stack actions. We push all $\sym{0}$s and $\sym{1}$s at the beginning to a stack. We scan $\symbolpush$ and $\symbolpop$ commands and perform them on the stack, until we scan $\sym{=}$. We reject if $\symbolpush$ is not followed by $\sym{0}$ or $\sym{1}$, or if we attempt to pop from an empty stack. We then check that the rest of the string is equal to the resulting stack.

To compute $\thenextsymbolset$, we scan $\thestring$ in order of increasing $\thetimestep$ while simulating the stack actions, as above. If $\thestringcontext$ ends within the initial stack part, we set it to $\{\sym{0}, \sym{1}, \symbolpop, \symbolpush, \sym{=}\}$. If $\thestringcontext$ ends with $\symbolpop$, or a $\sym{0}$ or $\sym{1}$ after a $\symbolpush$, we include $\sym{=}$, and we include $\symbolpush$ if fewer than $n_\mathrm{push}$ pushes have been seen, and we include $\symbolpop$ if the stack is not empty. If $\thestringcontext$ ends with $\symbolpush$, we set it to $\{\sym{0}, \sym{1}\}$. There is only one correct string for the final stack; if $\thestringcontext$ ends within the final stack part, we set it to $\{\sym{0}\}$, $\{\sym{1}\}$, or $\{\eos\}$ depending on what the correct string is.

\paragraph{\languagemarkedreversal{} (DCF).} Strings of the form $\theotherstring \sym{\#} \reverse{\theotherstring}$, where $\theotherstring$ is a binary string. This is a classic example of a deterministic context-free language \citep{hopcroft-2006-introduction}. This corresponds to the Reverse String task of \citet{deletang-etal-2023-neural}, which explicitly marks the point when a model should stop reading $\thestring$ and start generating $\reverse{\thestring}$. \Citet{dusell-chiang-2020-learning,dusell-chiang-2022-learning,dusell-chiang-2023-surprising,dusell-chiang-2024-stack} showed that LSTM and transformer language models struggle on a weighted version of this language compared to stack-augmented neural networks.

\examplestable{
$\sym{\#}$ \\
$\sym{011\#110}$ \\
$\sym{0\#0}$ \\
$\sym{01001\#10010}$
}{
$\emptystring$ \\
$\sym{011\#101101}$ \\
$\sym{011\#11}$ \\
$\sym{0\#11\#110\#}$ \\
$\sym{011110}$
}

Let $\theotherlength = \stringlength{\theotherstring}$. To generate a positive example, we first sample $\theotherlength$ uniformly from $[\ceil{\max(0, \frac{\theminlength - 1}{2})}, \floor{\frac{\themaxlength - 1}{2}}]$, then we sample $\theotherstring$ uniformly from $\{\sym{0}, \sym{1}\}^{\theotherlength}$. We then return the string $\theotherstring \sym{\#} \reverse{\theotherstring}$. Notice that the string is guaranteed to be in the desired length range.

To test whether a string is in the language, we check whether there is a single $\sym{\#}$, and whether the substring after the marker is the reverse of the substring before the marker. 

To compute $\thenextsymbolset$, we scan $\thestring$ in order of increasing $\thetimestep$. If $\thestringcontext$ ends within the first half (before the $\sym{\#}$ symbol), then the set of next valid symbols is set to $\{\sym{0}, \sym{1}, \sym{\#}\}$. The rest of the string is deterministic based on $\thestring$, and we use one of $\{\sym{0}\}$, $\{\sym{1}\}$, or $\{\eos\}$ as needed.

\paragraph{\languageunmarkedreversal{} (CF).} Strings of the form $\theotherstring \reverse{\theotherstring}$, where $\theotherstring$ is a binary string. This is a classic example of a nondeterministic context-free language \citep[p.\ 254]{hopcroft-2006-introduction}. \Citet{dusell-chiang-2020-learning,dusell-chiang-2022-learning,dusell-chiang-2024-stack} showed that LSTM and transformer language models struggle on this language compared to stack-augmented neural networks.

\examplestable{
$\emptystring$ \\
$\sym{011110}$ \\
$\sym{00}$ \\
$\sym{0100110010}$
}{
$\sym{1}$ \\
$\sym{01110}$ \\
$\sym{011100}$ \\
$\sym{11110}$
}

Let $\theotherlength = \stringlength{\theotherstring}$. To generate a positive example, we first sample $\theotherlength$ uniformly from $[\ceil{\frac{\theminlength}{2}}, \floor{\frac{\themaxlength}{2}}]$, then we sample $\theotherstring$ uniformly from $\{\sym{0}, \sym{1}\}^{\theotherlength}$. We then return the string $\theotherstring \reverse{\theotherstring}$.

To test whether a string is in the language, we check whether the length of the string is even and whether the second half is the reverse of the first half. 

To compute $\thenextsymbolset$, we always include $\{\sym{0}, \sym{1}\}$, and we include $\eos$ if $\thestringcontext \in \thelanguage$.

\paragraph{\languagemarkedcopy{} (CS).}  Strings of the form $\theotherstring \sym{\#} \theotherstring$, where $\theotherstring$ is a binary string. This is a classic example of a mildly context-sensitive language \citep{joshi-1985-tree}. This language is somewhat similar to the Duplicate String task of \citet{deletang-etal-2023-neural}, which requires a model to read $\theotherstring$ and output $\theotherstring \theotherstring$, which is more like the language $\{ \theotherstring \sym{\#} \theotherstring \theotherstring \mid \theotherstring \in \kleene{\{\sym{0}, \sym{1}\}} \}$. \Citet{jelassi-etal-2024-repeat} showed both theoretically and empirically that transformers are better at copying than modern recurrent architectures, since the latter are constrained by their hidden state bottleneck. This language is also analogous to the String Equality task in \citet{bhattamishra-etal-2024-separations}, who also found that transformers outperform both modern and classic recurrent architectures. \Citet{dusell-chiang-2023-surprising} showed that LSTM language models struggle on a weighted version of this language compared to certain stack-augmented LSTMs.

\examplestable{
$\sym{\#}$ \\
$\sym{011\#011}$ \\
$\sym{0\#0}$ \\
$\sym{01001\#01001}$
}{
$\emptystring$ \\
$\sym{011\#01}$ \\
$\sym{011011}$ \\
$\sym{0\#\#11\#01\#1}$
}

We generate positive examples, test membership, and compute $\thenextsymbolset$ similarly to \languagemarkedreversal{}.

\paragraph{\languagemissingduplicatestring{} (CS).} This language contains strings of the form $\theotherstring \theotherstring$, where $\theotherstring$ is a binary string, but where one of the symbols in $\theotherstring \theotherstring$ has been replaced with $\symbolunderscore$, and where the replaced symbol was a $\sym{1}$. This language corresponds to the Missing Duplicate task of \citet{deletang-etal-2023-neural}, which does not explicitly mark the boundary between the two $\theotherstring$s.

\examplestable{
$\symbolunderscore\sym{1}$ \\
$\sym{001000}\symbolunderscore\sym{0}$ \\
$\sym{11}\symbolunderscore\sym{01001110100}$
}{
$\emptystring$ \\
$\sym{00100}\symbolunderscore\sym{10}$ \\
$\sym{11101001110100}$ \\
$\symbolunderscore\sym{01}\symbolunderscore\sym{1}\symbolunderscore\sym{00}$
}

To generate a positive example, we sample $\theotherlength = \stringlength{\theotherstring}$ uniformly from $[\max(1, \ceil{\frac{\theminlength}{2}}), \floor{\frac{\themaxlength}{2}}]$. To ensure that $\theotherstring$ contains at least one $\sym{1}$, we first sample a string $\theotherstring'$ uniformly from $\{\sym{0}, \sym{1}\}^{\theotherlength}$, then we uniformly at random replace one of its symbols with $\sym{1}$ to get $\theotherstring$. We uniformly at random pick one of the $\sym{1}$s in $\theotherstring \theotherstring$ and replace it with $\symbolunderscore$ to get the final string $\thestring$.

To test whether a string is in the language, we check if its length is even and if it contains exactly one $\symbolunderscore$. We replace the $\symbolunderscore$ with $\sym{1}$ and check if the first half is the same as the second half.

To compute $\thenextsymbolset$, if $\thestringcontext$ does not contain $\symbolunderscore$, we set it to $\{\sym{0}, \sym{1}, \symbolunderscore\}$. If $\thestringcontext$ does contain $\symbolunderscore$, we include $\{\sym{0}, \sym{1}\}$, and we add $\eos$ if $\thestringcontext \in \thelanguage$.

\paragraph{\languageoddsfirst{} (CS).} A binary string $\theotherstring$, then $\sym{\#}$, then a string $\theotherotherstring = \theotherstring_\mathrm{odd} \theotherstring_\mathrm{even}$, where $\theotherstring_\mathrm{odd}$ is all the symbols in $\theotherstring$ at odd positions, and $\theotherstring_\mathrm{even}$ is all the symbols in $\theotherstring$ at even positions. In other words, strings in this language are of the form $\theotherstring' \thesymbol \sym{\#} \theotherstring'_\mathrm{odd} \thesymbol \theotherstring_\mathrm{even}$, where $\theotherstring'$ is the perfect shuffle of $\theotherstring'_\mathrm{odd}, \theotherstring_\mathrm{even} \in \kleene{\{\sym{0}, \sym{1}\}}$, and $\thesymbol \in \{\sym{0}, \sym{1}, \emptystring\}$. This corresponds to the Odds First task of \citet{deletang-etal-2023-neural}.

\examplestable{
$\sym{\#}$ \\
$\sym{1\#1}$ \\
$\sym{010101\#000111}$ \\
$\sym{0101010\#0000111}$ \\
$\sym{10011011\#10110101}$
}{
$\emptystring$ \\
$\sym{010101\#000110}$ \\
$\sym{010101000111}$ \\
$\sym{0\#1\#\#}$
}

To generate a positive example, we first sample a string $\theotherstring$ as in \languagemarkedreversal{}. We then return $\theotherstring\sym{\#}\theotherstring_\mathrm{odd} \theotherstring_\mathrm{even}$.

To test whether a string is in the language, we first check whether it contains exactly one $\sym{\#}$. We let the string to the left of $\sym{\#}$ be $\theotherstring$, and we check if the string to the right is equal to $\theotherstring_\mathrm{odd}\theotherstring_\mathrm{even}$.

To compute $\thenextsymbolset$, if $\thestringcontext$ ends before $\sym{\#}$, we set it to $\{\sym{0}, \sym{1}, \sym{\#}\}$. The rest of the string is deterministic based on the value of $\theotherstring$, and we use either $\{\sym{0}\}$, $\{\sym{1}\}$, or $\{\eos\}$ to match $\theotherstring_\mathrm{odd} \theotherstring_\mathrm{even}$.

\paragraph{\languagebinaryaddition{} (CS).} Strings of the form $\theotherstring_x \sym{+} \theotherstring_y \sym{=} \theotherstring_z$, where $\theotherstring_x, \theotherstring_y, \theotherstring_z$ are little-endian binary encodings (possibly with trailing $\sym{0}$s) of integers $x, y, z \in \naturalnumberset$, and $x + y = z$. The number 0 is encoded as $\sym{0}$, but not $\emptystring$. This language corresponds to the Binary Addition task of \citet{deletang-etal-2023-neural}.

\examplestable{
$\sym{0+0=0}$ \\
$\sym{001+1=101}$ \\
$\sym{001000+100=1010000}$ \\
$\sym{101+01011=11111}$ \\
$\sym{1+11=001}$
}{
$\emptystring$ \\
$\sym{+=}$ \\
$\sym{001+1=011}$ \\
$\sym{100+1=101}$ \\
$\sym{0011101}$ \\
$\sym{=0+10=1+}$
}

We generate a positive example as follows. Note that, in general, binary encodings must have at least one bit, and a binary string of length $\theotherlength$ can only encode integers in $[0, 2^{\theotherlength} - 1]$. Let $n_x = \stringlength{\theotherstring_x}, n_y = \stringlength{\theotherstring_y}, n_z = \stringlength{\theotherstring_z}$. We first sample $n_x, n_y, n_z$, then we sample $x, y, z$ that satisfy $x \leq 2^{n_x} - 1, y \leq 2^{n_y} - 1, z = x + y \leq 2^{n_z} - 1$. We sample a total string length $\thelength$ uniformly from $[\max(5, \theminlength), \themaxlength]$. Let $n_x = n'_x + 1, n_y = n'_y + 1, n_z = n'_z + 1$. We sample $n'_x, n'_y, n'_z$ using a Dirichlet distribution with parameters $(1, 1, 1)$ so that $n_x, n_y, n_z$ are equally distributed and always sum to $\thelength - 2$. If $n_y > n_x$, we swap them, so that $n_x \leq n_y$; this will make the distribution over $x$ less restrictive on $y$, because it reduces cases where $x$ is so large that few $y$ can be chosen that satisfy the constraint on $z$. We sample $x$ uniformly from $[0, \min(2^{n_x} - 1, 2^{n_z} - 1)]$, and $y$ uniformly from $[0, \min(2^{n_y} - 1, 2^{n_z} - 1 - x)]$. We encode $x, y, z = x+y$ as $\theotherstring_x, \theotherstring_y, \theotherstring_z$, padding them with $\sym{0}$s as needed to reach lengths of exactly $n_x, n_y, n_z$. In order to avoid bias in the distribution of $x$ vs.\ $y$, with probability $\frac{1}{2}$, we swap $\theotherstring_x$ and $\theotherstring_y$. We return $\theotherstring_x\sym{+}\theotherstring_y\sym{=}\theotherstring_z$.

To test whether a string is in the language, we simply check that it has the expected format, parse $x, y, z$, and check that $x + y = z$.

To compute $\thenextsymbolset$, we scan $\thestring$ in order of increasing $\thetimestep$. Before $\theotherstring_x$, we set it to $\{\sym{0}, \sym{1}\}$. After any symbol in $\theotherstring_x$, we set it to $\{\sym{0}, \sym{1}, \sym{+}\}$. Similarly, before $\theotherstring_y$, we set it to $\{\sym{0}, \sym{1}\}$, and after any symbol in $\theotherstring_y$, we set it to $\{\sym{0}, \sym{1}, \sym{=}\}$. After $\sym{=}$, we must deterministically generate $\binaryencoding{z}$, so we set it to $\{\sym{0}\}$ or $\{\sym{1}\}$ as needed. After $\binaryencoding{z}$, and after any trailing $\sym{0}$s, we set it to $\{\sym{0}, \eos\}$.

\paragraph{\languagebinarymultiplication{} (CS).} Strings of the form $\theotherstring_x \sym{×} \theotherstring_y \sym{=} \theotherstring_z$, where, like \languagebinaryaddition{}, $\theotherstring_x, \theotherstring_y, \theotherstring_z$ are binary encodings of integers $x, y, z \in \naturalnumberset$, and $xy = z$. This language corresponds to the Binary Multiplication task of \citet{deletang-etal-2023-neural}.

\examplestable{
$\sym{0×0=0}$ \\
$\sym{001×11=0011}$ \\
$\sym{001000×1100=0011000}$ \\
$\sym{1001×0111=0111111}$
}{
$\emptystring$ \\
$\sym{×=}$ \\
$\sym{001×11=1011}$ \\
$\sym{100×1010=0101000}$ \\
$\sym{0011101}$ \\
$\sym{=0×10=1×}$
}

We generate a positive example similarly to \languagebinaryaddition{}. We first sample $n_x, n_y, n_z$, then we sample $x, y, z$ that satisfy $x \leq 2^{n_x} - 1, y \leq 2^{n_y} - 1, z = xy \leq 2^{n_z} - 1$. We sample $\thelength, n_x, n_y, n_z$ in the same way as \languagebinaryaddition{}, except the Dirichlet distribution has parameters $(1, 1, 2)$. This means $n_z$ tends to be twice as big as $n_x$ or $n_y$; we do this because the number of bits required for $xy$ is approximately the sum of the bits required for $x$ and $y$. Note that guaranteeing $n_x \leq n_y$ is particularly important here for a good distribution of $y$. We sample $x$ uniformly from $[0, 2^{n_x} - 1]$. If $x > 0$, we sample $y$ uniformly from $[0, \min(2^{n_y} - 1, \floor{\frac{2^{n_z} - 1}{x}})]$. Otherwise, we sample $y$ uniformly from $[0, 2^{n_y} - 1]$. The rest is like \languagebinaryaddition{}, except we return  $\theotherstring_x\sym{×}\theotherstring_y\sym{=}\theotherstring_z$.

We test whether a string is in the language and compute $\thenextsymbolset$ like \languagebinaryaddition{}, except we use $xy = z$ instead of $x + y = z$, and $\sym{×}$ instead of $\sym{+}$.

\paragraph{\languagecomputesqrt{} (CS).} Strings of the form $\theotherstring_x \sym{=} \theotherstring_z$, where, similarly to \languagebinaryaddition{} and \languagebinarymultiplication{}, $\theotherstring_x, \theotherstring_z$ are binary encodings of integers $x, z \in \naturalnumberset$, and $\floor{\sqrt{x}} = z$. This language corresponds to the Compute Sqrt task of \citet{deletang-etal-2023-neural}.

\examplestable{
$\sym{0=0}$ \\
$\sym{011=11}$ \\
$\sym{00101=001}$ \\
$\sym{00101000=00100}$
}{
$\emptystring$ \\
$\sym{=}$ \\
$\sym{011=01}$ \\
$\sym{0=11=1}$
}

We generate a positive example similarly to \languagebinaryaddition{} and \languagebinarymultiplication{}. Let $n_x = \stringlength{\theotherstring_x}, n_z = \stringlength{\theotherstring_z}$. We first sample $n_x, n_z$, then we sample $x, z$ that satisfy $x \leq 2^{n_x} - 1, z = \floor{\sqrt{x}} \leq 2^{n_z} - 1$. We sample a total string length $\thelength$ uniformly from $[\max(3, \theminlength), \themaxlength]$. Let $n_x = n'_x + 1, n_z = n'_z + 1$. We sample $n'_x, n'_z$ using a Dirichlet distribution with parameters $(2, 1)$, so that $n_x$ and $n_z$ sum to $\thelength - 2$, and $n_x$ tends to be twice as big as $n_z$. We do this because the number of bits required for $\floor{\sqrt{x}}$ is about half of that required for $x$. We sample $x$ uniformly from $[0, \min(2^{n_x} - 1, 2^{2 n_z} - 1)]$. We encode $x, z = \floor{\sqrt{x}}$ as $\theotherstring_x, \theotherstring_z$, padding them with $\sym{0}$s as needed to reach lengths of exactly $n_x, n_z$. We return $\theotherstring_x\sym{=}\theotherstring_z$.

To test whether a string is in the language, we simply check that it has the expected format, parse $x, z$, and check that $\floor{\sqrt{x}} = z$.

To compute $\thenextsymbolset$, we scan $\thestring$ in order of increasing $\thetimestep$. Before $\theotherstring_x$, we set it to $\{\sym{0}, \sym{1}\}$. After any symbol in $\theotherstring_x$, we set it to $\{\sym{0}, \sym{1}, \sym{=}\}$. After $\sym{=}$, we must deterministically generate $\binaryencoding{z}$, so we set it to $\{\sym{0}\}$ or $\{\sym{1}\}$ as needed. After $\binaryencoding{z}$, and after any trailing $\sym{0}$s, we set it to $\{\sym{0}, \eos\}$.

\paragraph{\languagebucketsort{} (CS).} A string $\theotherstring \in \kleene{\{ \sym{1}, \ldots, \sym{5} \}}$, then $\sym{\#}$, then the digits of $\theotherstring$ in sorted order. Note that it is only necessary to keep track of the counts of each type of digit to recognize this language. This language corresponds to the Bucket Sort task of \citet{deletang-etal-2023-neural}.

\examplestable{
$\sym{\#}$ \\
$\sym{4512345\#1234455}$ \\
$\sym{31204124\#01122344}$ \\
$\sym{41\#14}$
}{
$\emptystring$ \\
$\sym{4512345\#1434255}$ \\
$\sym{31204124\#0112}$ \\
$\sym{1\#2\#\#12}$
}

Let $\theotherlength = \stringlength{\theotherstring}$. To generate a positive example, we first sample $\theotherlength$ as in \languagemarkedreversal{}, then we sample $\theotherstring$ uniformly from $\{\sym{1}, \ldots, \sym{5}\}^{\theotherlength}$. We then compute the sorted string $\theotherstring'$ and return the string $\theotherstring \sym{\#} \theotherstring'$.

To test whether a string is in the language, we check whether there is a single $\sym{\#}$, and whether the substring after the marker is the bucket sort of the substring before the marker.

To compute $\thenextsymbolset$, we scan $\thestring$ in order of increasing $\thetimestep$. If $\thestringcontext$ ends within the first half (before the $\sym{\#}$ symbol), then the set of next valid symbols is set to $\{\sym{1}, \ldots, \sym{5}, \sym{\#}\}$. The rest of the string is deterministic based on the value of $\theotherstring$, and we use one of $\{\sym{1}\}, \ldots, \{\sym{5}\}, \{\eos\}$ as needed.

\section{Length-constrained sampling for regular languages in MLRegTest}
\label{sec:mlregtest-sampling}

MLRegTest \citep{van-der-poel-2024-mlregtest} assumes the availability of a complete DFA $\thedfa = (\thestateset, \thealphabet, \thetransitionfunc, \thestartstate, \theacceptstateset)$ that recognizes $\thelanguage$. In order to sample strings within a given length range $\thelengthrange$, it performs the following steps for each length $\thelength$ in $\thelengthrange$.

\begin{enumerate}
    \item Construct a DFA that recognizes $\thealphabet^{\thelength}$ and intersect it with $\thedfa$ in $\bigo{|\thealphabet| \thenumstates \thelength}$ time, resulting in an acyclic DFA with $\bigo{\thenumstates \thelength}$ states.
    \item Minimize the DFA in $\bigo{|\thealphabet| \thenumstates \thelength \log(\thenumstates \thelength)}$ time, resulting in $\bigo{\thenumstates \thelength}$ states.
    \item Topologically sort the states, then run a dynamic programming algorithm that counts the number of accepting paths leading out of each state, all in $\bigo{|\thealphabet| \thenumstates \thelength}$ time. This is equivalent to running the backward algorithm in the counting semiring $(\naturalnumberset, +, \times, 0, 1)$.
    \item Use the path counts to make the DFA a PDFA encoding the uniform distribution over $\thelanguage \cap \thealphabet^{\thelength}$.
\end{enumerate}

The total runtime of this approach is $\mlregtesttimecomplexity$, being dominated by the minimization step. If minimization is removed, it is only $\bigo{|\thealphabet| \thenumstates \themaxlength (\themaxlength - \theminlength)}$.

\section{Full results}
\label{sec:full-results}

We show the best loss functions for each architecture, language, and validation set in \cref{tab:best-loss-functions}.

\begin{table}[t]
    \caption{The best loss functions, corresponding to the accuracy scores reported in \cref{tab:main-results}. ``R'' = recognition; ``\languagemodelingabbrev{}'' = language modeling; ``\nextsymbolsabbrev{}'' = next-symbol prediction. No single loss function consistently results in the best performance; the most frequent winner is just R.}
    \label{tab:best-loss-functions}
    \begin{center}
        \small
        \begin{tabular}{@{}lcccccc@{}}
\toprule
& \multicolumn{3}{c}{Inductive Bias} & \multicolumn{3}{c}{Expressivity} \\
\cmidrule(lr){2-4} \cmidrule(lr){5-7}
Language & \transformerabbrev{} & \rnnabbrev{} & \lstmabbrev{} & \transformerabbrev{} & \rnnabbrev{} & \lstmabbrev{} \\
\midrule
\languageevenpairs{} & \recognitionabbrev{} & \recognitionabbrev{}+\languagemodelingabbrev{}+\nextsymbolsabbrev{} & \recognitionabbrev{} & \recognitionabbrev{} & \recognitionabbrev{}+\nextsymbolsabbrev{} & \recognitionabbrev{} \\
\languagerepeatzeroone{} & \recognitionabbrev{} & \recognitionabbrev{}+\nextsymbolsabbrev{} & \recognitionabbrev{} & \recognitionabbrev{} & \recognitionabbrev{} & \recognitionabbrev{} \\
\languageparity{} & \recognitionabbrev{}+\nextsymbolsabbrev{} & \recognitionabbrev{}+\nextsymbolsabbrev{} & \recognitionabbrev{}+\nextsymbolsabbrev{} & \recognitionabbrev{}+\nextsymbolsabbrev{} & \recognitionabbrev{}+\languagemodelingabbrev{} & \recognitionabbrev{} \\
\languagecyclenavigation{} & \recognitionabbrev{}+\languagemodelingabbrev{}+\nextsymbolsabbrev{} & \recognitionabbrev{} & \recognitionabbrev{} & \recognitionabbrev{} & \recognitionabbrev{} & \recognitionabbrev{} \\
\languagemodulararithmeticsimple{} & \recognitionabbrev{} & \recognitionabbrev{} & \recognitionabbrev{} & \recognitionabbrev{}+\nextsymbolsabbrev{} & \recognitionabbrev{} & \recognitionabbrev{} \\
\languagedycktwothree{} & \recognitionabbrev{}+\languagemodelingabbrev{} & \recognitionabbrev{}+\languagemodelingabbrev{}+\nextsymbolsabbrev{} & \recognitionabbrev{} & \recognitionabbrev{}+\nextsymbolsabbrev{} & \recognitionabbrev{}+\nextsymbolsabbrev{} & \recognitionabbrev{}+\nextsymbolsabbrev{} \\
\languagefirst{} & \recognitionabbrev{}+\nextsymbolsabbrev{} & \recognitionabbrev{}+\languagemodelingabbrev{} & \recognitionabbrev{}+\languagemodelingabbrev{} & \recognitionabbrev{} & \recognitionabbrev{} & \recognitionabbrev{} \\
\languagemajority{} & \recognitionabbrev{}+\languagemodelingabbrev{} & \recognitionabbrev{}+\nextsymbolsabbrev{} & \recognitionabbrev{}+\nextsymbolsabbrev{} & \recognitionabbrev{}+\languagemodelingabbrev{} & \recognitionabbrev{}+\languagemodelingabbrev{}+\nextsymbolsabbrev{} & \recognitionabbrev{}+\languagemodelingabbrev{}+\nextsymbolsabbrev{} \\
\languagestackmanipulation{} & \recognitionabbrev{} & \recognitionabbrev{}+\nextsymbolsabbrev{} & \recognitionabbrev{} & \recognitionabbrev{}+\languagemodelingabbrev{}+\nextsymbolsabbrev{} & \recognitionabbrev{} & \recognitionabbrev{}+\languagemodelingabbrev{} \\
\languagemarkedreversal{} & \recognitionabbrev{}+\nextsymbolsabbrev{} & \recognitionabbrev{}+\nextsymbolsabbrev{} & \recognitionabbrev{} & \recognitionabbrev{}+\languagemodelingabbrev{} & \recognitionabbrev{}+\languagemodelingabbrev{} & \recognitionabbrev{}+\languagemodelingabbrev{} \\
\languageunmarkedreversal{} & \recognitionabbrev{} & \recognitionabbrev{}+\nextsymbolsabbrev{} & \recognitionabbrev{}+\nextsymbolsabbrev{} & \recognitionabbrev{} & \recognitionabbrev{}+\nextsymbolsabbrev{} & \recognitionabbrev{}+\nextsymbolsabbrev{} \\
\languagemarkedcopy{} & \recognitionabbrev{}+\nextsymbolsabbrev{} & \recognitionabbrev{}+\languagemodelingabbrev{} & \recognitionabbrev{} & \recognitionabbrev{}+\nextsymbolsabbrev{} & \recognitionabbrev{} & \recognitionabbrev{} \\
\languagemissingduplicatestring{} & \recognitionabbrev{}+\languagemodelingabbrev{}+\nextsymbolsabbrev{} & \recognitionabbrev{} & \recognitionabbrev{} & \recognitionabbrev{}+\languagemodelingabbrev{}+\nextsymbolsabbrev{} & \recognitionabbrev{}+\languagemodelingabbrev{}+\nextsymbolsabbrev{} & \recognitionabbrev{}+\languagemodelingabbrev{} \\
\languageoddsfirst{} & \recognitionabbrev{} & \recognitionabbrev{} & \recognitionabbrev{} & \recognitionabbrev{}+\languagemodelingabbrev{}+\nextsymbolsabbrev{} & \recognitionabbrev{} & \recognitionabbrev{}+\languagemodelingabbrev{}+\nextsymbolsabbrev{} \\
\languagebinaryaddition{} & \recognitionabbrev{} & \recognitionabbrev{}+\nextsymbolsabbrev{} & \recognitionabbrev{} & \recognitionabbrev{}+\languagemodelingabbrev{} & \recognitionabbrev{}+\nextsymbolsabbrev{} & \recognitionabbrev{}+\nextsymbolsabbrev{} \\
\languagebinarymultiplication{} & \recognitionabbrev{}+\nextsymbolsabbrev{} & \recognitionabbrev{} & \recognitionabbrev{}+\nextsymbolsabbrev{} & \recognitionabbrev{}+\nextsymbolsabbrev{} & \recognitionabbrev{}+\languagemodelingabbrev{} & \recognitionabbrev{}+\nextsymbolsabbrev{} \\
\languagecomputesqrt{} & \recognitionabbrev{} & \recognitionabbrev{} & \recognitionabbrev{} & \recognitionabbrev{} & \recognitionabbrev{} & \recognitionabbrev{} \\
\languagebucketsort{} & \recognitionabbrev{} & \recognitionabbrev{}+\languagemodelingabbrev{}+\nextsymbolsabbrev{} & \recognitionabbrev{} & \recognitionabbrev{}+\nextsymbolsabbrev{} & \recognitionabbrev{}+\nextsymbolsabbrev{} & \recognitionabbrev{}+\languagemodelingabbrev{}+\nextsymbolsabbrev{} \\
\bottomrule
\end{tabular}

    \end{center}
\end{table}

We show unabridged versions of the results from \cref{sec:results} for all languages in \cref{tab:full-even-pairs,tab:full-repeat-01,tab:full-parity,tab:full-cycle-navigation,tab:full-modular-arithmetic-simple,tab:full-dyck-2-3,tab:full-first,tab:full-majority,tab:full-stack-manipulation,tab:full-marked-reversal,tab:full-unmarked-reversal,tab:full-marked-copy,tab:full-missing-duplicate-string,tab:full-odds-first,tab:full-binary-addition,tab:full-binary-multiplication,tab:full-compute-sqrt,tab:full-bucket-sort}\unskip. Every row is aggregated across \numruns{} runs. The scores shown in each row are of the model with the lowest recognition cross-entropy on the validation set (this value is shown under ``Val.\ CE''; lower is better). All columns are accuracy scores except for ``Val.\ CE.'' We show the best score in each column in \textbf{bold}.

Here, we refer to the test set used in \cref{sec:results}, which has lengths in $[0, 500]$, as the \newterm{long test set}. We also report accuracy on a \newterm{short test set} of 1k held-out examples with lengths in $[0, 40]$. The short test set only includes examples that do not occur in the training set, short validation set, or long validation set; it tests how well a model generalizes to unseen strings within the same length distribution. For languages where the training and validation data already includes all possible strings, we leave this column blank. We also report accuracy on the training and validation sets to show how well the model fits the training data.

In order to see the effects of model selection, we also report the mean and maximum accuracy scores on the long test set across all runs, as it is often the case that the model that generalizes best to longer strings is not the one with the lowest recognition cross-entropy. Although this kind of test-set-based model selection is impossible in the wild, for our purposes it is useful for revealing when an architecture is capable of chancing upon a solution that generalizes, even if it cannot be reliably found with model selection.

In all rows, ``+\languagemodelingabbrev{}'' means a language modeling loss term is added, ``+\nextsymbolsabbrev{}'' means a next-symbol prediction loss term is added, ``\validationshortabbrev{}'' means a short validation set is used, and ``\validationlongabbrev{}'' means a long validation set is used. ``Train'' is accuracy on the training set, ``Val.\ CE'' is recognition cross-entropy on the validation set (which is used as the model selection criterion), ``Val.'' is accuracy on the validation set, ``S.\ Test'' is accuracy on the short test set, ``L.\ Test'' is accuracy on the long test set, ``L.\ Test (Mean)'' is ``L.\ Test'' averaged across runs with standard deviations, and ``L.\ Test (Max)'' is the maximum.

\clearpage

\begin{table}[H]
    \caption{Full results on the \textbf{\languageevenpairs{}} language.}
    \label{tab:full-even-pairs}
    \begin{center}
        \small


    \end{center}
\end{table}

\section{Results with smaller and larger models}
\label{sec:different-model-sizes}

Is the reason that some models do not get perfect accuracy in \cref{sec:results} simply because they are not big enough? In this section, we find that the answer is generally no. To test this, we show results with larger and smaller models in \cref{tab:vary-model-size}. The experiments in \cref{sec:results} use models with approximately 64k parameters. Here, we train models with approximately 32k, 64k, and 128k parameters, using the same rounding technique described in \cref{sec:experiments}. We omit the languages \languageevenpairs{} and \languagefirst{}, as all architectures already get perfect accuracy on them in \cref{tab:main-results}. We run experiments according to the expressivity setup described in \cref{sec:experiments}, but using only a recognition head, without auxiliary loss terms. We report the best test accuracy of \numruns{} runs. There are only two cases where a larger model results in an improvement over \cref{tab:main-results} of more than 0.01, which we show in \textbf{bold}. We see a strict positive correlation between accuracy and model size in only a few cases: transformer on \languagemarkedreversal{}, \languagemarkedcopy{}, and \languagebinarymultiplication{}; and LSTM on \languagestackmanipulation{} and \languagebucketsort{}. Interestingly, we see a negative correlation in the case of RNN on \languagebinarymultiplication{}. Otherwise, we do not see clear trends, suggesting that the limitations are due to model architecture rather than size.

\begin{table}[t]
    \caption{Results for different model sizes.}
    \label{tab:vary-model-size}
    \begin{center}
        \small
        \begin{tabular}{@{}lccccccccc@{}}
\toprule
& \multicolumn{3}{c}{\transformerabbrev{}} & \multicolumn{3}{c}{\rnnabbrev{}} & \multicolumn{3}{c}{\lstmabbrev{}} \\
\cmidrule(lr){2-4} \cmidrule(lr){5-7} \cmidrule(lr){8-10}
Language & 32k & 64k & 128k & 32k & 64k & 128k & 32k & 64k & 128k \\
\midrule
\languagerepeatzeroone{} & 0.85 & 0.86 & 0.85 & 1.00 & 1.00 & 1.00 & 1.00 & 1.00 & 1.00 \\
\languageparity{} & 0.54 & 0.54 & 0.54 & 0.53 & 0.54 & 0.52 & 1.00 & 1.00 & 1.00 \\
\languagecyclenavigation{} & 0.93 & 0.93 & 0.93 & 0.93 & 0.93 & 0.93 & 0.93 & 0.93 & 0.93 \\
\languagemodulararithmeticsimple{} & 0.86 & 0.83 & \textbf{0.92} & 1.00 & 1.00 & 1.00 & 1.00 & 1.00 & 1.00 \\
\languagedycktwothree{} & 0.73 & 0.80 & 0.79 & 0.98 & 0.98 & 0.98 & 0.98 & 0.98 & 0.98 \\
\languagemajority{} & 1.00 & 0.99 & 1.00 & 0.93 & 0.93 & 0.94 & 1.00 & 0.99 & 1.00 \\
\languagestackmanipulation{} & 0.82 & 0.87 & 0.87 & 0.93 & 0.93 & 0.93 & 0.84 & 0.89 & 0.92 \\
\languagemarkedreversal{} & 0.75 & 0.76 & 0.79 & 0.95 & 0.95 & 0.94 & 0.91 & 0.86 & 0.96 \\
\languageunmarkedreversal{} & 0.63 & 0.63 & 0.63 & 0.76 & 0.76 & 0.76 & 0.76 & 0.76 & 0.78 \\
\languagemarkedcopy{} & 0.76 & 0.80 & 0.85 & 0.95 & 0.95 & 0.95 & 0.94 & 0.95 & 0.92 \\
\languagemissingduplicatestring{} & 0.83 & 0.86 & 0.77 & 0.95 & 0.94 & 0.95 & 0.95 & 0.92 & 0.95 \\
\languageoddsfirst{} & 0.77 & 0.85 & 0.85 & 0.95 & 0.95 & 0.92 & 0.95 & 0.91 & 0.95 \\
\languagebinaryaddition{} & 0.79 & 0.78 & 0.83 & 0.92 & 0.88 & 0.88 & 0.92 & 0.92 & 0.92 \\
\languagebinarymultiplication{} & 0.80 & 0.82 & 0.92 & 0.92 & 0.89 & 0.78 & 0.89 & 0.79 & 0.92 \\
\languagecomputesqrt{} & 0.82 & 0.86 & 0.80 & 0.89 & 0.89 & 0.89 & 0.89 & 0.89 & 0.89 \\
\languagebucketsort{} & 0.76 & 0.76 & 0.87 & 0.92 & 0.90 & 0.91 & 0.80 & 0.81 & \textbf{0.93} \\
\bottomrule
\end{tabular}

    \end{center}
\end{table}

\section{Performance vs.\ input length}
\label{sec:performance-vs-input-length}

We show recognition cross-entropy (lower is better) vs.\ input length for the models shown under ``Expressivity'' in \cref{tab:main-results}. At every length $\thelength$ on the x-axis that is a multiple of 10, we show the average cross-entropy of the model on all strings in the long test set with lengths in the range $[\thelength - 10, \thelength + 10]$ (this smooths the curves for the sake of readability). The shaded regions indicate one standard deviation. The vertical dashed lines indicate the maximum lengths in the training and validation sets. The horizontal dashed lines mark $\log 2$, which is the threshold for incorrect predictions. In general, we find that cross-entropy usually does not increase significantly on longer strings. Notable exceptions include \languagerepeatzeroone{}, \languageparity{}, \languagemodulararithmeticsimple{}, and \languagedycktwothree{} for the transformer and \languagemajority{} for the RNN. This is in contrast to \citet{deletang-etal-2023-neural}, who found that models often fail catastrophically on longer input strings in their string-to-string transduction setup.

\pgfplotsset{compat=1.18}
\usepgfplotslibrary{fillbetween}
\begin{figure}[h]
    \tikzset{-}
    \pgfplotsset{
        every axis/.style={
            width=2.45in,
            height=1.7in
        },
        title style={yshift=-4.7ex}
    }
    \centering
    \begin{tabular}{@{}c@{}c@{}c@{}}
        \multicolumn{3}{c}{\begin{tikzpicture}
\begin{axis}[
  hide axis,
  height=0.7in,
  legend style={
    draw=none,
    /tikz/every even column/.append style={column sep=0.4cm}
  },
  legend columns=3,
  xmin=0,
  xmax=1,
  ymin=0,
  ymax=1
]
\addlegendimage{blue}
\addlegendentry{Transformer}
\addlegendimage{green}
\addlegendentry{LSTM}
\addlegendimage{red}
\addlegendentry{RNN}
\end{axis}
\end{tikzpicture}}\\
        \scalebox{0.8}{\begin{tikzpicture}
\def\threshold{0.6931471805599453}
\def\thresholdsmallval{40}
\def\thresholdlongval{80}
\begin{axis}[
    legend style={at={(1,1)}, anchor=north east, font=\tiny, opacity=0.6, text opacity=1, draw=gray},
    axis lines=left,
    xmin=0,
    ylabel={Cross-Entropy $\leftarrow$},
    title={\languageevenpairs{}},
    ymin=0,
    ymax=1,
    enlarge y limits=0.1,
    xtick={0, 100, 200, 300, 400, 500},
    tick label style={font=\tiny}
]
    \addplot[
        name path=A-Tf,
        color=blue,
        opacity=0.2,
        forget plot
    ] table[x index=0, y index=2] {figures/cross-entropy-vs-length/even-pairs/even-pairs-test-transformer.dat};     
    \addplot[
        name path=B-Tf,
        color=blue,
        opacity=0.2,
        forget plot
    ] table[x index=0, y index=3] {figures/cross-entropy-vs-length/even-pairs/even-pairs-test-transformer.dat};   
    \addplot[color=blue,
        opacity=0.2, forget plot] fill between [of=A-Tf and B-Tf];
    \addplot[
        color=blue
    ] table[x index=0, y index=1] {figures/cross-entropy-vs-length/even-pairs/even-pairs-test-transformer.dat};
    \addplot[
        name path=A-LSTM,
        color=green,
        opacity=0.2,
        forget plot
    ] table[x index=0, y index=2] {figures/cross-entropy-vs-length/even-pairs/even-pairs-test-lstm.dat};     
    \addplot[
        name path=B-LSTM,
        color=green,
        opacity=0.2,
        forget plot
    ] table[x index=0, y index=3] {figures/cross-entropy-vs-length/even-pairs/even-pairs-test-lstm.dat};   
    \addplot[color=green,
        opacity=0.2, forget plot] fill between [of=A-LSTM and B-LSTM];
    \addplot[
        color=green
    ] table[x index=0, y index=1] {figures/cross-entropy-vs-length/even-pairs/even-pairs-test-lstm.dat};
    \addplot[
        name path=A-RNN,
        color=red,
        opacity=0.2,
        forget plot
    ] table[x index=0, y index=2] {figures/cross-entropy-vs-length/even-pairs/even-pairs-test-rnn.dat};     
    \addplot[
        name path=B-RNN,
        color=red,
        opacity=0.2,
        forget plot
    ] table[x index=0, y index=3] {figures/cross-entropy-vs-length/even-pairs/even-pairs-test-rnn.dat};   
    \addplot[color=red,
        opacity=0.2, forget plot] fill between [of=A-RNN and B-RNN];
    \addplot[
        color=red
    ] table[x index=0, y index=1] {figures/cross-entropy-vs-length/even-pairs/even-pairs-test-rnn.dat};
    \draw[dashed] (axis cs:\pgfkeysvalueof{/pgfplots/xmin}, \threshold) -- (axis cs:\pgfkeysvalueof{/pgfplots/xmax}, \threshold);
    \draw[dashed] (axis cs:\thresholdsmallval, \pgfkeysvalueof{/pgfplots/ymin}) -- (axis cs:\thresholdsmallval, \pgfkeysvalueof{/pgfplots/ymax});
    \draw[dashed] (axis cs:\thresholdlongval, \pgfkeysvalueof{/pgfplots/ymin}) -- (axis cs:\thresholdlongval, \pgfkeysvalueof{/pgfplots/ymax});
    \node[above left] at (axis cs:\pgfkeysvalueof{/pgfplots/xmax}, \threshold) {\small $\uparrow$ Incorrect};
\end{axis}
\end{tikzpicture}}
        &\scalebox{0.8}{\begin{tikzpicture}
\def\threshold{0.6931471805599453}
\def\thresholdsmallval{40}
\def\thresholdlongval{80}
\begin{axis}[
    legend style={at={(1,1)}, anchor=north east, font=\tiny, opacity=0.6, text opacity=1, draw=gray},
    axis lines=left,
    xmin=0,
    title={\languagerepeatzeroone{}},
    ymin=0,
    enlarge y limits=0.1,
    xtick={0, 100, 200, 300, 400, 500},
    tick label style={font=\tiny}
]
    \addplot[
        name path=A-Tf,
        color=blue,
        opacity=0.2,
        forget plot
    ] table[x index=0, y index=2] {figures/cross-entropy-vs-length/repeat-01/repeat-01-test-transformer.dat};     
    \addplot[
        name path=B-Tf,
        color=blue,
        opacity=0.2,
        forget plot
    ] table[x index=0, y index=3] {figures/cross-entropy-vs-length/repeat-01/repeat-01-test-transformer.dat};   
    \addplot[color=blue,
        opacity=0.2, forget plot] fill between [of=A-Tf and B-Tf];
    \addplot[
        color=blue
    ] table[x index=0, y index=1] {figures/cross-entropy-vs-length/repeat-01/repeat-01-test-transformer.dat};
    \addplot[
        name path=A-LSTM,
        color=green,
        opacity=0.2,
        forget plot
    ] table[x index=0, y index=2] {figures/cross-entropy-vs-length/repeat-01/repeat-01-test-lstm.dat};     
    \addplot[
        name path=B-LSTM,
        color=green,
        opacity=0.2,
        forget plot
    ] table[x index=0, y index=3] {figures/cross-entropy-vs-length/repeat-01/repeat-01-test-lstm.dat};   
    \addplot[color=green,
        opacity=0.2, forget plot] fill between [of=A-LSTM and B-LSTM];
    \addplot[
        color=green
    ] table[x index=0, y index=1] {figures/cross-entropy-vs-length/repeat-01/repeat-01-test-lstm.dat};
    \addplot[
        name path=A-RNN,
        color=red,
        opacity=0.2,
        forget plot
    ] table[x index=0, y index=2] {figures/cross-entropy-vs-length/repeat-01/repeat-01-test-rnn.dat};     
    \addplot[
        name path=B-RNN,
        color=red,
        opacity=0.2,
        forget plot
    ] table[x index=0, y index=3] {figures/cross-entropy-vs-length/repeat-01/repeat-01-test-rnn.dat};   
    \addplot[color=red,
        opacity=0.2, forget plot] fill between [of=A-RNN and B-RNN];
    \addplot[
        color=red
    ] table[x index=0, y index=1] {figures/cross-entropy-vs-length/repeat-01/repeat-01-test-rnn.dat};
    \draw[dashed] (axis cs:\pgfkeysvalueof{/pgfplots/xmin}, \threshold) -- (axis cs:\pgfkeysvalueof{/pgfplots/xmax}, \threshold);
    \draw[dashed] (axis cs:\thresholdsmallval, \pgfkeysvalueof{/pgfplots/ymin}) -- (axis cs:\thresholdsmallval, \pgfkeysvalueof{/pgfplots/ymax});
    \draw[dashed] (axis cs:\thresholdlongval, \pgfkeysvalueof{/pgfplots/ymin}) -- (axis cs:\thresholdlongval, \pgfkeysvalueof{/pgfplots/ymax});
    \node[above left] at (axis cs:\pgfkeysvalueof{/pgfplots/xmax}, \threshold) {\small $\uparrow$ Incorrect};
\end{axis}
\end{tikzpicture}}
        &\scalebox{0.8}{\begin{tikzpicture}
\def\threshold{0.6931471805599453}
\def\thresholdsmallval{40}
\def\thresholdlongval{80}
\begin{axis}[
    legend style={at={(1,1)}, anchor=north east, font=\tiny, opacity=0.6, text opacity=1, draw=gray},
    axis lines=left,
    xmin=0,
    title={\languageparity{}},
    ymin=0,
    enlarge y limits=0.1,
    xtick={0, 100, 200, 300, 400, 500},
    tick label style={font=\tiny}
]
    \addplot[
        name path=A-Tf,
        color=blue,
        opacity=0.2,
        forget plot
    ] table[x index=0, y index=2] {figures/cross-entropy-vs-length/parity/parity-test-transformer.dat};     
    \addplot[
        name path=B-Tf,
        color=blue,
        opacity=0.2,
        forget plot
    ] table[x index=0, y index=3] {figures/cross-entropy-vs-length/parity/parity-test-transformer.dat};   
    \addplot[color=blue,
        opacity=0.2, forget plot] fill between [of=A-Tf and B-Tf];
    \addplot[
        color=blue
    ] table[x index=0, y index=1] {figures/cross-entropy-vs-length/parity/parity-test-transformer.dat};
    \addplot[
        name path=A-LSTM,
        color=green,
        opacity=0.2,
        forget plot
    ] table[x index=0, y index=2] {figures/cross-entropy-vs-length/parity/parity-test-lstm.dat};     
    \addplot[
        name path=B-LSTM,
        color=green,
        opacity=0.2,
        forget plot
    ] table[x index=0, y index=3] {figures/cross-entropy-vs-length/parity/parity-test-lstm.dat};   
    \addplot[color=green,
        opacity=0.2, forget plot] fill between [of=A-LSTM and B-LSTM];
    \addplot[
        color=green
    ] table[x index=0, y index=1] {figures/cross-entropy-vs-length/parity/parity-test-lstm.dat};
    \addplot[
        name path=A-RNN,
        color=red,
        opacity=0.2,
        forget plot
    ] table[x index=0, y index=2] {figures/cross-entropy-vs-length/parity/parity-test-rnn.dat};     
    \addplot[
        name path=B-RNN,
        color=red,
        opacity=0.2,
        forget plot
    ] table[x index=0, y index=3] {figures/cross-entropy-vs-length/parity/parity-test-rnn.dat};   
    \addplot[color=red,
        opacity=0.2, forget plot] fill between [of=A-RNN and B-RNN];
    \addplot[
        color=red
    ] table[x index=0, y index=1] {figures/cross-entropy-vs-length/parity/parity-test-rnn.dat};
    \draw[dashed] (axis cs:\pgfkeysvalueof{/pgfplots/xmin}, \threshold) -- (axis cs:\pgfkeysvalueof{/pgfplots/xmax}, \threshold);
    \draw[dashed] (axis cs:\thresholdsmallval, \pgfkeysvalueof{/pgfplots/ymin}) -- (axis cs:\thresholdsmallval, \pgfkeysvalueof{/pgfplots/ymax});
    \draw[dashed] (axis cs:\thresholdlongval, \pgfkeysvalueof{/pgfplots/ymin}) -- (axis cs:\thresholdlongval, \pgfkeysvalueof{/pgfplots/ymax});
    \node[above left] at (axis cs:\pgfkeysvalueof{/pgfplots/xmax}, \threshold) {\small $\uparrow$ Incorrect};
\end{axis}
\end{tikzpicture}}\\
        \scalebox{0.8}{\begin{tikzpicture}
\def\threshold{0.6931471805599453}
\def\thresholdsmallval{40}
\def\thresholdlongval{80}
\begin{axis}[
    legend style={at={(1,1)}, anchor=north east, font=\tiny, opacity=0.6, text opacity=1, draw=gray},
    axis lines=left,
    xmin=0,
    ylabel={Cross-Entropy $\leftarrow$},
    title={\languagecyclenavigation{}},
    ymin=0,
    enlarge y limits=0.1,
    xtick={0, 100, 200, 300, 400, 500},
    tick label style={font=\tiny}
]
    \addplot[
        name path=A-Tf,
        color=blue,
        opacity=0.2,
        forget plot
    ] table[x index=0, y index=2] {figures/cross-entropy-vs-length/cycle-navigation/cycle-navigation-test-transformer.dat};     
    \addplot[
        name path=B-Tf,
        color=blue,
        opacity=0.2,
        forget plot
    ] table[x index=0, y index=3] {figures/cross-entropy-vs-length/cycle-navigation/cycle-navigation-test-transformer.dat};   
    \addplot[color=blue,
        opacity=0.2, forget plot] fill between [of=A-Tf and B-Tf];
    \addplot[
        color=blue
    ] table[x index=0, y index=1] {figures/cross-entropy-vs-length/cycle-navigation/cycle-navigation-test-transformer.dat};
    \addplot[
        name path=A-LSTM,
        color=green,
        opacity=0.2,
        forget plot
    ] table[x index=0, y index=2] {figures/cross-entropy-vs-length/cycle-navigation/cycle-navigation-test-lstm.dat};     
    \addplot[
        name path=B-LSTM,
        color=green,
        opacity=0.2,
        forget plot
    ] table[x index=0, y index=3] {figures/cross-entropy-vs-length/cycle-navigation/cycle-navigation-test-lstm.dat};   
    \addplot[color=green,
        opacity=0.2, forget plot] fill between [of=A-LSTM and B-LSTM];
    \addplot[
        color=green
    ] table[x index=0, y index=1] {figures/cross-entropy-vs-length/cycle-navigation/cycle-navigation-test-lstm.dat};
    \addplot[
        name path=A-RNN,
        color=red,
        opacity=0.2,
        forget plot
    ] table[x index=0, y index=2] {figures/cross-entropy-vs-length/cycle-navigation/cycle-navigation-test-rnn.dat};     
    \addplot[
        name path=B-RNN,
        color=red,
        opacity=0.2,
        forget plot
    ] table[x index=0, y index=3] {figures/cross-entropy-vs-length/cycle-navigation/cycle-navigation-test-rnn.dat};   
    \addplot[color=red,
        opacity=0.2, forget plot] fill between [of=A-RNN and B-RNN];
    \addplot[
        color=red
    ] table[x index=0, y index=1] {figures/cross-entropy-vs-length/cycle-navigation/cycle-navigation-test-rnn.dat};
    \draw[dashed] (axis cs:\pgfkeysvalueof{/pgfplots/xmin}, \threshold) -- (axis cs:\pgfkeysvalueof{/pgfplots/xmax}, \threshold);
    \draw[dashed] (axis cs:\thresholdsmallval, \pgfkeysvalueof{/pgfplots/ymin}) -- (axis cs:\thresholdsmallval, \pgfkeysvalueof{/pgfplots/ymax});
    \draw[dashed] (axis cs:\thresholdlongval, \pgfkeysvalueof{/pgfplots/ymin}) -- (axis cs:\thresholdlongval, \pgfkeysvalueof{/pgfplots/ymax});
    \node[above left] at (axis cs:\pgfkeysvalueof{/pgfplots/xmax}, \threshold) {\small $\uparrow$ Incorrect};
\end{axis}
\end{tikzpicture}}
        &\scalebox{0.8}{\begin{tikzpicture}
\def\threshold{0.6931471805599453}
\def\thresholdsmallval{40}
\def\thresholdlongval{80}
\begin{axis}[
    legend style={at={(1,1)}, anchor=north east, font=\tiny, opacity=0.6, text opacity=1, draw=gray},
    axis lines=left,
    xmin=0,
    title={\languagemodulararithmeticsimple{}},
    ymin=0,
    enlarge y limits=0.1,
    xtick={0, 100, 200, 300, 400, 500},
    tick label style={font=\tiny}
]
    \addplot[
        name path=A-Tf,
        color=blue,
        opacity=0.2,
        forget plot
    ] table[x index=0, y index=2] {figures/cross-entropy-vs-length/modular-arithmetic-simple/modular-arithmetic-simple-test-transformer.dat};     
    \addplot[
        name path=B-Tf,
        color=blue,
        opacity=0.2,
        forget plot
    ] table[x index=0, y index=3] {figures/cross-entropy-vs-length/modular-arithmetic-simple/modular-arithmetic-simple-test-transformer.dat};   
    \addplot[color=blue,
        opacity=0.2, forget plot] fill between [of=A-Tf and B-Tf];
    \addplot[
        color=blue
    ] table[x index=0, y index=1] {figures/cross-entropy-vs-length/modular-arithmetic-simple/modular-arithmetic-simple-test-transformer.dat};
    \addplot[
        name path=A-LSTM,
        color=green,
        opacity=0.2,
        forget plot
    ] table[x index=0, y index=2] {figures/cross-entropy-vs-length/modular-arithmetic-simple/modular-arithmetic-simple-test-lstm.dat};     
    \addplot[
        name path=B-LSTM,
        color=green,
        opacity=0.2,
        forget plot
    ] table[x index=0, y index=3] {figures/cross-entropy-vs-length/modular-arithmetic-simple/modular-arithmetic-simple-test-lstm.dat};   
    \addplot[color=green,
        opacity=0.2, forget plot] fill between [of=A-LSTM and B-LSTM];
    \addplot[
        color=green
    ] table[x index=0, y index=1] {figures/cross-entropy-vs-length/modular-arithmetic-simple/modular-arithmetic-simple-test-lstm.dat};
    \addplot[
        name path=A-RNN,
        color=red,
        opacity=0.2,
        forget plot
    ] table[x index=0, y index=2] {figures/cross-entropy-vs-length/modular-arithmetic-simple/modular-arithmetic-simple-test-rnn.dat};     
    \addplot[
        name path=B-RNN,
        color=red,
        opacity=0.2,
        forget plot
    ] table[x index=0, y index=3] {figures/cross-entropy-vs-length/modular-arithmetic-simple/modular-arithmetic-simple-test-rnn.dat};   
    \addplot[color=red,
        opacity=0.2, forget plot] fill between [of=A-RNN and B-RNN];
    \addplot[
        color=red
    ] table[x index=0, y index=1] {figures/cross-entropy-vs-length/modular-arithmetic-simple/modular-arithmetic-simple-test-rnn.dat};
    \draw[dashed] (axis cs:\pgfkeysvalueof{/pgfplots/xmin}, \threshold) -- (axis cs:\pgfkeysvalueof{/pgfplots/xmax}, \threshold);
    \draw[dashed] (axis cs:\thresholdsmallval, \pgfkeysvalueof{/pgfplots/ymin}) -- (axis cs:\thresholdsmallval, \pgfkeysvalueof{/pgfplots/ymax});
    \draw[dashed] (axis cs:\thresholdlongval, \pgfkeysvalueof{/pgfplots/ymin}) -- (axis cs:\thresholdlongval, \pgfkeysvalueof{/pgfplots/ymax});
    \node[above left] at (axis cs:\pgfkeysvalueof{/pgfplots/xmax}, \threshold) {\small $\uparrow$ Incorrect};
\end{axis}
\end{tikzpicture}}
        &\scalebox{0.8}{\begin{tikzpicture}
\def\threshold{0.6931471805599453}
\def\thresholdsmallval{40}
\def\thresholdlongval{80}
\begin{axis}[
    legend style={at={(1,1)}, anchor=north east, font=\tiny, opacity=0.6, text opacity=1, draw=gray},
    axis lines=left,
    xmin=0,
    title={\languagedycktwothree{}},
    ymin=0,
    enlarge y limits=0.1,
    xtick={0, 100, 200, 300, 400, 500},
    tick label style={font=\tiny}
]
    \addplot[
        name path=A-Tf,
        color=blue,
        opacity=0.2,
        forget plot
    ] table[x index=0, y index=2] {figures/cross-entropy-vs-length/dyck-2-3/dyck-2-3-test-transformer.dat};     
    \addplot[
        name path=B-Tf,
        color=blue,
        opacity=0.2,
        forget plot
    ] table[x index=0, y index=3] {figures/cross-entropy-vs-length/dyck-2-3/dyck-2-3-test-transformer.dat};   
    \addplot[color=blue,
        opacity=0.2, forget plot] fill between [of=A-Tf and B-Tf];
    \addplot[
        color=blue
    ] table[x index=0, y index=1] {figures/cross-entropy-vs-length/dyck-2-3/dyck-2-3-test-transformer.dat};
    \addplot[
        name path=A-LSTM,
        color=green,
        opacity=0.2,
        forget plot
    ] table[x index=0, y index=2] {figures/cross-entropy-vs-length/dyck-2-3/dyck-2-3-test-lstm.dat};     
    \addplot[
        name path=B-LSTM,
        color=green,
        opacity=0.2,
        forget plot
    ] table[x index=0, y index=3] {figures/cross-entropy-vs-length/dyck-2-3/dyck-2-3-test-lstm.dat};   
    \addplot[color=green,
        opacity=0.2, forget plot] fill between [of=A-LSTM and B-LSTM];
    \addplot[
        color=green
    ] table[x index=0, y index=1] {figures/cross-entropy-vs-length/dyck-2-3/dyck-2-3-test-lstm.dat};
    \addplot[
        name path=A-RNN,
        color=red,
        opacity=0.2,
        forget plot
    ] table[x index=0, y index=2] {figures/cross-entropy-vs-length/dyck-2-3/dyck-2-3-test-rnn.dat};     
    \addplot[
        name path=B-RNN,
        color=red,
        opacity=0.2,
        forget plot
    ] table[x index=0, y index=3] {figures/cross-entropy-vs-length/dyck-2-3/dyck-2-3-test-rnn.dat};   
    \addplot[color=red,
        opacity=0.2, forget plot] fill between [of=A-RNN and B-RNN];
    \addplot[
        color=red
    ] table[x index=0, y index=1] {figures/cross-entropy-vs-length/dyck-2-3/dyck-2-3-test-rnn.dat};
    \draw[dashed] (axis cs:\pgfkeysvalueof{/pgfplots/xmin}, \threshold) -- (axis cs:\pgfkeysvalueof{/pgfplots/xmax}, \threshold);
    \draw[dashed] (axis cs:\thresholdsmallval, \pgfkeysvalueof{/pgfplots/ymin}) -- (axis cs:\thresholdsmallval, \pgfkeysvalueof{/pgfplots/ymax});
    \draw[dashed] (axis cs:\thresholdlongval, \pgfkeysvalueof{/pgfplots/ymin}) -- (axis cs:\thresholdlongval, \pgfkeysvalueof{/pgfplots/ymax});
    \node[above left] at (axis cs:\pgfkeysvalueof{/pgfplots/xmax}, \threshold) {\small $\uparrow$ Incorrect};
\end{axis}
\end{tikzpicture}}\\
        \scalebox{0.8}{\begin{tikzpicture}
\def\threshold{0.6931471805599453}
\def\thresholdsmallval{40}
\def\thresholdlongval{80}
\begin{axis}[
    legend style={at={(1,1)}, anchor=north east, font=\tiny, opacity=0.6, text opacity=1, draw=gray},
    axis lines=left,
    xmin=0,
    ylabel={Cross-Entropy $\leftarrow$},
    title={\languagefirst{}},
    ymin=0,
    ymax=1,
    enlarge y limits=0.1,
    xtick={0, 100, 200, 300, 400, 500},
    tick label style={font=\tiny}
]
    \addplot[
        name path=A-Tf,
        color=blue,
        opacity=0.2,
        forget plot
    ] table[x index=0, y index=2] {figures/cross-entropy-vs-length/first/first-test-transformer.dat};     
    \addplot[
        name path=B-Tf,
        color=blue,
        opacity=0.2,
        forget plot
    ] table[x index=0, y index=3] {figures/cross-entropy-vs-length/first/first-test-transformer.dat};   
    \addplot[color=blue,
        opacity=0.2, forget plot] fill between [of=A-Tf and B-Tf];
    \addplot[
        color=blue
    ] table[x index=0, y index=1] {figures/cross-entropy-vs-length/first/first-test-transformer.dat};
    \addplot[
        name path=A-LSTM,
        color=green,
        opacity=0.2,
        forget plot
    ] table[x index=0, y index=2] {figures/cross-entropy-vs-length/first/first-test-lstm.dat};     
    \addplot[
        name path=B-LSTM,
        color=green,
        opacity=0.2,
        forget plot
    ] table[x index=0, y index=3] {figures/cross-entropy-vs-length/first/first-test-lstm.dat};   
    \addplot[color=green,
        opacity=0.2, forget plot] fill between [of=A-LSTM and B-LSTM];
    \addplot[
        color=green
    ] table[x index=0, y index=1] {figures/cross-entropy-vs-length/first/first-test-lstm.dat};
    \addplot[
        name path=A-RNN,
        color=red,
        opacity=0.2,
        forget plot
    ] table[x index=0, y index=2] {figures/cross-entropy-vs-length/first/first-test-rnn.dat};     
    \addplot[
        name path=B-RNN,
        color=red,
        opacity=0.2,
        forget plot
    ] table[x index=0, y index=3] {figures/cross-entropy-vs-length/first/first-test-rnn.dat};   
    \addplot[color=red,
        opacity=0.2, forget plot] fill between [of=A-RNN and B-RNN];
    \addplot[
        color=red
    ] table[x index=0, y index=1] {figures/cross-entropy-vs-length/first/first-test-rnn.dat};
    \draw[dashed] (axis cs:\pgfkeysvalueof{/pgfplots/xmin}, \threshold) -- (axis cs:\pgfkeysvalueof{/pgfplots/xmax}, \threshold);
    \draw[dashed] (axis cs:\thresholdsmallval, \pgfkeysvalueof{/pgfplots/ymin}) -- (axis cs:\thresholdsmallval, \pgfkeysvalueof{/pgfplots/ymax});
    \draw[dashed] (axis cs:\thresholdlongval, \pgfkeysvalueof{/pgfplots/ymin}) -- (axis cs:\thresholdlongval, \pgfkeysvalueof{/pgfplots/ymax});
    \node[above left] at (axis cs:\pgfkeysvalueof{/pgfplots/xmax}, \threshold) {\small $\uparrow$ Incorrect};
\end{axis}
\end{tikzpicture}}
        &\scalebox{0.8}{\begin{tikzpicture}
\def\threshold{0.6931471805599453}
\def\thresholdsmallval{40}
\def\thresholdlongval{80}
\begin{axis}[
    legend style={at={(1,1)}, anchor=north east, font=\tiny, opacity=0.6, text opacity=1, draw=gray},
    axis lines=left,
    xmin=0,
    title={\languagemajority{}},
    ymin=0,
    enlarge y limits=0.1,
    xtick={0, 100, 200, 300, 400, 500},
    tick label style={font=\tiny}
]
    \addplot[
        name path=A-Tf,
        color=blue,
        opacity=0.2,
        forget plot
    ] table[x index=0, y index=2] {figures/cross-entropy-vs-length/majority/majority-test-transformer.dat};     
    \addplot[
        name path=B-Tf,
        color=blue,
        opacity=0.2,
        forget plot
    ] table[x index=0, y index=3] {figures/cross-entropy-vs-length/majority/majority-test-transformer.dat};   
    \addplot[color=blue,
        opacity=0.2, forget plot] fill between [of=A-Tf and B-Tf];
    \addplot[
        color=blue
    ] table[x index=0, y index=1] {figures/cross-entropy-vs-length/majority/majority-test-transformer.dat};
    \addplot[
        name path=A-LSTM,
        color=green,
        opacity=0.2,
        forget plot
    ] table[x index=0, y index=2] {figures/cross-entropy-vs-length/majority/majority-test-lstm.dat};     
    \addplot[
        name path=B-LSTM,
        color=green,
        opacity=0.2,
        forget plot
    ] table[x index=0, y index=3] {figures/cross-entropy-vs-length/majority/majority-test-lstm.dat};   
    \addplot[color=green,
        opacity=0.2, forget plot] fill between [of=A-LSTM and B-LSTM];
    \addplot[
        color=green
    ] table[x index=0, y index=1] {figures/cross-entropy-vs-length/majority/majority-test-lstm.dat};
    \addplot[
        name path=A-RNN,
        color=red,
        opacity=0.2,
        forget plot
    ] table[x index=0, y index=2] {figures/cross-entropy-vs-length/majority/majority-test-rnn.dat};     
    \addplot[
        name path=B-RNN,
        color=red,
        opacity=0.2,
        forget plot
    ] table[x index=0, y index=3] {figures/cross-entropy-vs-length/majority/majority-test-rnn.dat};   
    \addplot[color=red,
        opacity=0.2, forget plot] fill between [of=A-RNN and B-RNN];
    \addplot[
        color=red
    ] table[x index=0, y index=1] {figures/cross-entropy-vs-length/majority/majority-test-rnn.dat};
    \draw[dashed] (axis cs:\pgfkeysvalueof{/pgfplots/xmin}, \threshold) -- (axis cs:\pgfkeysvalueof{/pgfplots/xmax}, \threshold);
    \draw[dashed] (axis cs:\thresholdsmallval, \pgfkeysvalueof{/pgfplots/ymin}) -- (axis cs:\thresholdsmallval, \pgfkeysvalueof{/pgfplots/ymax});
    \draw[dashed] (axis cs:\thresholdlongval, \pgfkeysvalueof{/pgfplots/ymin}) -- (axis cs:\thresholdlongval, \pgfkeysvalueof{/pgfplots/ymax});
    \node[above left] at (axis cs:\pgfkeysvalueof{/pgfplots/xmax}, \threshold) {\small $\uparrow$ Incorrect};
\end{axis}
\end{tikzpicture}}
        &\scalebox{0.8}{\begin{tikzpicture}
\def\threshold{0.6931471805599453}
\def\thresholdsmallval{40}
\def\thresholdlongval{80}
\begin{axis}[
    legend style={at={(1,1)}, anchor=north east, font=\tiny, opacity=0.6, text opacity=1, draw=gray},
    axis lines=left,
    xmin=0,
    title={\languagestackmanipulation{}},
    ymin=0,
    enlarge y limits=0.1,
    xtick={0, 100, 200, 300, 400, 500},
    tick label style={font=\tiny}
]
    \addplot[
        name path=A-Tf,
        color=blue,
        opacity=0.2,
        forget plot
    ] table[x index=0, y index=2] {figures/cross-entropy-vs-length/stack-manipulation/stack-manipulation-test-transformer.dat};     
    \addplot[
        name path=B-Tf,
        color=blue,
        opacity=0.2,
        forget plot
    ] table[x index=0, y index=3] {figures/cross-entropy-vs-length/stack-manipulation/stack-manipulation-test-transformer.dat};   
    \addplot[color=blue,
        opacity=0.2, forget plot] fill between [of=A-Tf and B-Tf];
    \addplot[
        color=blue
    ] table[x index=0, y index=1] {figures/cross-entropy-vs-length/stack-manipulation/stack-manipulation-test-transformer.dat};
    \addplot[
        name path=A-LSTM,
        color=green,
        opacity=0.2,
        forget plot
    ] table[x index=0, y index=2] {figures/cross-entropy-vs-length/stack-manipulation/stack-manipulation-test-lstm.dat};     
    \addplot[
        name path=B-LSTM,
        color=green,
        opacity=0.2,
        forget plot
    ] table[x index=0, y index=3] {figures/cross-entropy-vs-length/stack-manipulation/stack-manipulation-test-lstm.dat};   
    \addplot[color=green,
        opacity=0.2, forget plot] fill between [of=A-LSTM and B-LSTM];
    \addplot[
        color=green
    ] table[x index=0, y index=1] {figures/cross-entropy-vs-length/stack-manipulation/stack-manipulation-test-lstm.dat};
    \addplot[
        name path=A-RNN,
        color=red,
        opacity=0.2,
        forget plot
    ] table[x index=0, y index=2] {figures/cross-entropy-vs-length/stack-manipulation/stack-manipulation-test-rnn.dat};     
    \addplot[
        name path=B-RNN,
        color=red,
        opacity=0.2,
        forget plot
    ] table[x index=0, y index=3] {figures/cross-entropy-vs-length/stack-manipulation/stack-manipulation-test-rnn.dat};   
    \addplot[color=red,
        opacity=0.2, forget plot] fill between [of=A-RNN and B-RNN];
    \addplot[
        color=red
    ] table[x index=0, y index=1] {figures/cross-entropy-vs-length/stack-manipulation/stack-manipulation-test-rnn.dat};
    \draw[dashed] (axis cs:\pgfkeysvalueof{/pgfplots/xmin}, \threshold) -- (axis cs:\pgfkeysvalueof{/pgfplots/xmax}, \threshold);
    \draw[dashed] (axis cs:\thresholdsmallval, \pgfkeysvalueof{/pgfplots/ymin}) -- (axis cs:\thresholdsmallval, \pgfkeysvalueof{/pgfplots/ymax});
    \draw[dashed] (axis cs:\thresholdlongval, \pgfkeysvalueof{/pgfplots/ymin}) -- (axis cs:\thresholdlongval, \pgfkeysvalueof{/pgfplots/ymax});
    \node[above left] at (axis cs:\pgfkeysvalueof{/pgfplots/xmax}, \threshold) {\small $\uparrow$ Incorrect};
\end{axis}
\end{tikzpicture}}\\
        \scalebox{0.8}{\begin{tikzpicture}
\def\threshold{0.6931471805599453}
\def\thresholdsmallval{40}
\def\thresholdlongval{80}
\begin{axis}[
    legend style={at={(1,1)}, anchor=north east, font=\tiny, opacity=0.6, text opacity=1, draw=gray},
    axis lines=left,
    xmin=0,
    ylabel={Cross-Entropy $\leftarrow$},
    title={\languagemarkedreversal{}},
    ymin=0,
    enlarge y limits=0.1,
    xtick={0, 100, 200, 300, 400, 500},
    tick label style={font=\tiny}
]
    \addplot[
        name path=A-Tf,
        color=blue,
        opacity=0.2,
        forget plot
    ] table[x index=0, y index=2] {figures/cross-entropy-vs-length/marked-reversal/marked-reversal-test-transformer.dat};     
    \addplot[
        name path=B-Tf,
        color=blue,
        opacity=0.2,
        forget plot
    ] table[x index=0, y index=3] {figures/cross-entropy-vs-length/marked-reversal/marked-reversal-test-transformer.dat};   
    \addplot[color=blue,
        opacity=0.2, forget plot] fill between [of=A-Tf and B-Tf];
    \addplot[
        color=blue
    ] table[x index=0, y index=1] {figures/cross-entropy-vs-length/marked-reversal/marked-reversal-test-transformer.dat};
    \addplot[
        name path=A-LSTM,
        color=green,
        opacity=0.2,
        forget plot
    ] table[x index=0, y index=2] {figures/cross-entropy-vs-length/marked-reversal/marked-reversal-test-lstm.dat};     
    \addplot[
        name path=B-LSTM,
        color=green,
        opacity=0.2,
        forget plot
    ] table[x index=0, y index=3] {figures/cross-entropy-vs-length/marked-reversal/marked-reversal-test-lstm.dat};   
    \addplot[color=green,
        opacity=0.2, forget plot] fill between [of=A-LSTM and B-LSTM];
    \addplot[
        color=green
    ] table[x index=0, y index=1] {figures/cross-entropy-vs-length/marked-reversal/marked-reversal-test-lstm.dat};
    \addplot[
        name path=A-RNN,
        color=red,
        opacity=0.2,
        forget plot
    ] table[x index=0, y index=2] {figures/cross-entropy-vs-length/marked-reversal/marked-reversal-test-rnn.dat};     
    \addplot[
        name path=B-RNN,
        color=red,
        opacity=0.2,
        forget plot
    ] table[x index=0, y index=3] {figures/cross-entropy-vs-length/marked-reversal/marked-reversal-test-rnn.dat};   
    \addplot[color=red,
        opacity=0.2, forget plot] fill between [of=A-RNN and B-RNN];
    \addplot[
        color=red
    ] table[x index=0, y index=1] {figures/cross-entropy-vs-length/marked-reversal/marked-reversal-test-rnn.dat};
    \draw[dashed] (axis cs:\pgfkeysvalueof{/pgfplots/xmin}, \threshold) -- (axis cs:\pgfkeysvalueof{/pgfplots/xmax}, \threshold);
    \draw[dashed] (axis cs:\thresholdsmallval, \pgfkeysvalueof{/pgfplots/ymin}) -- (axis cs:\thresholdsmallval, \pgfkeysvalueof{/pgfplots/ymax});
    \draw[dashed] (axis cs:\thresholdlongval, \pgfkeysvalueof{/pgfplots/ymin}) -- (axis cs:\thresholdlongval, \pgfkeysvalueof{/pgfplots/ymax});
    \node[above left] at (axis cs:\pgfkeysvalueof{/pgfplots/xmax}, \threshold) {\small $\uparrow$ Incorrect};
\end{axis}
\end{tikzpicture}}
        &\scalebox{0.8}{\begin{tikzpicture}
\def\threshold{0.6931471805599453}
\def\thresholdsmallval{40}
\def\thresholdlongval{80}
\begin{axis}[
    legend style={at={(1,1)}, anchor=north east, font=\tiny, opacity=0.6, text opacity=1, draw=gray},
    axis lines=left,
    xmin=0,
    title={\languageunmarkedreversal{}},
    ymin=0,
    enlarge y limits=0.1,
    xtick={0, 100, 200, 300, 400, 500},
    tick label style={font=\tiny}
]
    \addplot[
        name path=A-Tf,
        color=blue,
        opacity=0.2,
        forget plot
    ] table[x index=0, y index=2] {figures/cross-entropy-vs-length/unmarked-reversal/unmarked-reversal-test-transformer.dat};     
    \addplot[
        name path=B-Tf,
        color=blue,
        opacity=0.2,
        forget plot
    ] table[x index=0, y index=3] {figures/cross-entropy-vs-length/unmarked-reversal/unmarked-reversal-test-transformer.dat};   
    \addplot[color=blue,
        opacity=0.2, forget plot] fill between [of=A-Tf and B-Tf];
    \addplot[
        color=blue
    ] table[x index=0, y index=1] {figures/cross-entropy-vs-length/unmarked-reversal/unmarked-reversal-test-transformer.dat};
    \addplot[
        name path=A-LSTM,
        color=green,
        opacity=0.2,
        forget plot
    ] table[x index=0, y index=2] {figures/cross-entropy-vs-length/unmarked-reversal/unmarked-reversal-test-lstm.dat};     
    \addplot[
        name path=B-LSTM,
        color=green,
        opacity=0.2,
        forget plot
    ] table[x index=0, y index=3] {figures/cross-entropy-vs-length/unmarked-reversal/unmarked-reversal-test-lstm.dat};   
    \addplot[color=green,
        opacity=0.2, forget plot] fill between [of=A-LSTM and B-LSTM];
    \addplot[
        color=green
    ] table[x index=0, y index=1] {figures/cross-entropy-vs-length/unmarked-reversal/unmarked-reversal-test-lstm.dat};
    \addplot[
        name path=A-RNN,
        color=red,
        opacity=0.2,
        forget plot
    ] table[x index=0, y index=2] {figures/cross-entropy-vs-length/unmarked-reversal/unmarked-reversal-test-rnn.dat};     
    \addplot[
        name path=B-RNN,
        color=red,
        opacity=0.2,
        forget plot
    ] table[x index=0, y index=3] {figures/cross-entropy-vs-length/unmarked-reversal/unmarked-reversal-test-rnn.dat};   
    \addplot[color=red,
        opacity=0.2, forget plot] fill between [of=A-RNN and B-RNN];
    \addplot[
        color=red
    ] table[x index=0, y index=1] {figures/cross-entropy-vs-length/unmarked-reversal/unmarked-reversal-test-rnn.dat};
    \draw[dashed] (axis cs:\pgfkeysvalueof{/pgfplots/xmin}, \threshold) -- (axis cs:\pgfkeysvalueof{/pgfplots/xmax}, \threshold);
    \draw[dashed] (axis cs:\thresholdsmallval, \pgfkeysvalueof{/pgfplots/ymin}) -- (axis cs:\thresholdsmallval, \pgfkeysvalueof{/pgfplots/ymax});
    \draw[dashed] (axis cs:\thresholdlongval, \pgfkeysvalueof{/pgfplots/ymin}) -- (axis cs:\thresholdlongval, \pgfkeysvalueof{/pgfplots/ymax});
    \node[above left] at (axis cs:\pgfkeysvalueof{/pgfplots/xmax}, \threshold) {\small $\uparrow$ Incorrect};
\end{axis}
\end{tikzpicture}}
        &\scalebox{0.8}{\begin{tikzpicture}
\def\threshold{0.6931471805599453}
\def\thresholdsmallval{40}
\def\thresholdlongval{80}
\begin{axis}[
    legend style={at={(1,1)}, anchor=north east, font=\tiny, opacity=0.6, text opacity=1, draw=gray},
    axis lines=left,
    xmin=0,
    title={\languagemarkedcopy{}},
    ymin=0,
    enlarge y limits=0.1,
    xtick={0, 100, 200, 300, 400, 500},
    tick label style={font=\tiny}
]
    \addplot[
        name path=A-Tf,
        color=blue,
        opacity=0.2,
        forget plot
    ] table[x index=0, y index=2] {figures/cross-entropy-vs-length/marked-copy/marked-copy-test-transformer.dat};     
    \addplot[
        name path=B-Tf,
        color=blue,
        opacity=0.2,
        forget plot
    ] table[x index=0, y index=3] {figures/cross-entropy-vs-length/marked-copy/marked-copy-test-transformer.dat};   
    \addplot[color=blue,
        opacity=0.2, forget plot] fill between [of=A-Tf and B-Tf];
    \addplot[
        color=blue
    ] table[x index=0, y index=1] {figures/cross-entropy-vs-length/marked-copy/marked-copy-test-transformer.dat};
    \addplot[
        name path=A-LSTM,
        color=green,
        opacity=0.2,
        forget plot
    ] table[x index=0, y index=2] {figures/cross-entropy-vs-length/marked-copy/marked-copy-test-lstm.dat};     
    \addplot[
        name path=B-LSTM,
        color=green,
        opacity=0.2,
        forget plot
    ] table[x index=0, y index=3] {figures/cross-entropy-vs-length/marked-copy/marked-copy-test-lstm.dat};   
    \addplot[color=green,
        opacity=0.2, forget plot] fill between [of=A-LSTM and B-LSTM];
    \addplot[
        color=green
    ] table[x index=0, y index=1] {figures/cross-entropy-vs-length/marked-copy/marked-copy-test-lstm.dat};
    \addplot[
        name path=A-RNN,
        color=red,
        opacity=0.2,
        forget plot
    ] table[x index=0, y index=2] {figures/cross-entropy-vs-length/marked-copy/marked-copy-test-rnn.dat};     
    \addplot[
        name path=B-RNN,
        color=red,
        opacity=0.2,
        forget plot
    ] table[x index=0, y index=3] {figures/cross-entropy-vs-length/marked-copy/marked-copy-test-rnn.dat};   
    \addplot[color=red,
        opacity=0.2, forget plot] fill between [of=A-RNN and B-RNN];
    \addplot[
        color=red
    ] table[x index=0, y index=1] {figures/cross-entropy-vs-length/marked-copy/marked-copy-test-rnn.dat};
    \draw[dashed] (axis cs:\pgfkeysvalueof{/pgfplots/xmin}, \threshold) -- (axis cs:\pgfkeysvalueof{/pgfplots/xmax}, \threshold);
    \draw[dashed] (axis cs:\thresholdsmallval, \pgfkeysvalueof{/pgfplots/ymin}) -- (axis cs:\thresholdsmallval, \pgfkeysvalueof{/pgfplots/ymax});
    \draw[dashed] (axis cs:\thresholdlongval, \pgfkeysvalueof{/pgfplots/ymin}) -- (axis cs:\thresholdlongval, \pgfkeysvalueof{/pgfplots/ymax});
    \node[above left] at (axis cs:\pgfkeysvalueof{/pgfplots/xmax}, \threshold) {\small $\uparrow$ Incorrect};
\end{axis}
\end{tikzpicture}}\\
        \scalebox{0.8}{\begin{tikzpicture}
\def\threshold{0.6931471805599453}
\def\thresholdsmallval{40}
\def\thresholdlongval{80}
\begin{axis}[
    legend style={at={(1,1)}, anchor=north east, font=\tiny, opacity=0.6, text opacity=1, draw=gray},
    axis lines=left,
    xmin=0,
    ylabel={Cross-Entropy $\leftarrow$},
    title={\languagemissingduplicatestring{}},
    ymin=0,
    enlarge y limits=0.1,
    xtick={0, 100, 200, 300, 400, 500},
    tick label style={font=\tiny}
]
    \addplot[
        name path=A-Tf,
        color=blue,
        opacity=0.2,
        forget plot
    ] table[x index=0, y index=2] {figures/cross-entropy-vs-length/missing-duplicate-string/missing-duplicate-string-test-transformer.dat};     
    \addplot[
        name path=B-Tf,
        color=blue,
        opacity=0.2,
        forget plot
    ] table[x index=0, y index=3] {figures/cross-entropy-vs-length/missing-duplicate-string/missing-duplicate-string-test-transformer.dat};   
    \addplot[color=blue,
        opacity=0.2, forget plot] fill between [of=A-Tf and B-Tf];
    \addplot[
        color=blue
    ] table[x index=0, y index=1] {figures/cross-entropy-vs-length/missing-duplicate-string/missing-duplicate-string-test-transformer.dat};
    \addplot[
        name path=A-LSTM,
        color=green,
        opacity=0.2,
        forget plot
    ] table[x index=0, y index=2] {figures/cross-entropy-vs-length/missing-duplicate-string/missing-duplicate-string-test-lstm.dat};     
    \addplot[
        name path=B-LSTM,
        color=green,
        opacity=0.2,
        forget plot
    ] table[x index=0, y index=3] {figures/cross-entropy-vs-length/missing-duplicate-string/missing-duplicate-string-test-lstm.dat};   
    \addplot[color=green,
        opacity=0.2, forget plot] fill between [of=A-LSTM and B-LSTM];
    \addplot[
        color=green
    ] table[x index=0, y index=1] {figures/cross-entropy-vs-length/missing-duplicate-string/missing-duplicate-string-test-lstm.dat};
    \addplot[
        name path=A-RNN,
        color=red,
        opacity=0.2,
        forget plot
    ] table[x index=0, y index=2] {figures/cross-entropy-vs-length/missing-duplicate-string/missing-duplicate-string-test-rnn.dat};     
    \addplot[
        name path=B-RNN,
        color=red,
        opacity=0.2,
        forget plot
    ] table[x index=0, y index=3] {figures/cross-entropy-vs-length/missing-duplicate-string/missing-duplicate-string-test-rnn.dat};   
    \addplot[color=red,
        opacity=0.2, forget plot] fill between [of=A-RNN and B-RNN];
    \addplot[
        color=red
    ] table[x index=0, y index=1] {figures/cross-entropy-vs-length/missing-duplicate-string/missing-duplicate-string-test-rnn.dat};
    \draw[dashed] (axis cs:\pgfkeysvalueof{/pgfplots/xmin}, \threshold) -- (axis cs:\pgfkeysvalueof{/pgfplots/xmax}, \threshold);
    \draw[dashed] (axis cs:\thresholdsmallval, \pgfkeysvalueof{/pgfplots/ymin}) -- (axis cs:\thresholdsmallval, \pgfkeysvalueof{/pgfplots/ymax});
    \draw[dashed] (axis cs:\thresholdlongval, \pgfkeysvalueof{/pgfplots/ymin}) -- (axis cs:\thresholdlongval, \pgfkeysvalueof{/pgfplots/ymax});
    \node[above left] at (axis cs:\pgfkeysvalueof{/pgfplots/xmax}, \threshold) {\small $\uparrow$ Incorrect};
\end{axis}
\end{tikzpicture}}
        &\scalebox{0.8}{\begin{tikzpicture}
\def\threshold{0.6931471805599453}
\def\thresholdsmallval{40}
\def\thresholdlongval{80}
\begin{axis}[
    legend style={at={(1,1)}, anchor=north east, font=\tiny, opacity=0.6, text opacity=1, draw=gray},
    axis lines=left,
    xmin=0,
    title={\languageoddsfirst{}},
    ymin=0,
    enlarge y limits=0.1,
    xtick={0, 100, 200, 300, 400, 500},
    ytick={0, 1},
    tick label style={font=\tiny}
]
    \addplot[
        name path=A-Tf,
        color=blue,
        opacity=0.2,
        forget plot
    ] table[x index=0, y index=2] {figures/cross-entropy-vs-length/odds-first/odds-first-test-transformer.dat};     
    \addplot[
        name path=B-Tf,
        color=blue,
        opacity=0.2,
        forget plot
    ] table[x index=0, y index=3] {figures/cross-entropy-vs-length/odds-first/odds-first-test-transformer.dat};   
    \addplot[color=blue,
        opacity=0.2, forget plot] fill between [of=A-Tf and B-Tf];
    \addplot[
        color=blue
    ] table[x index=0, y index=1] {figures/cross-entropy-vs-length/odds-first/odds-first-test-transformer.dat};
    \addplot[
        name path=A-LSTM,
        color=green,
        opacity=0.2,
        forget plot
    ] table[x index=0, y index=2] {figures/cross-entropy-vs-length/odds-first/odds-first-test-lstm.dat};     
    \addplot[
        name path=B-LSTM,
        color=green,
        opacity=0.2,
        forget plot
    ] table[x index=0, y index=3] {figures/cross-entropy-vs-length/odds-first/odds-first-test-lstm.dat};   
    \addplot[color=green,
        opacity=0.2, forget plot] fill between [of=A-LSTM and B-LSTM];
    \addplot[
        color=green
    ] table[x index=0, y index=1] {figures/cross-entropy-vs-length/odds-first/odds-first-test-lstm.dat};
    \addplot[
        name path=A-RNN,
        color=red,
        opacity=0.2,
        forget plot
    ] table[x index=0, y index=2] {figures/cross-entropy-vs-length/odds-first/odds-first-test-rnn.dat};     
    \addplot[
        name path=B-RNN,
        color=red,
        opacity=0.2,
        forget plot
    ] table[x index=0, y index=3] {figures/cross-entropy-vs-length/odds-first/odds-first-test-rnn.dat};   
    \addplot[color=red,
        opacity=0.2, forget plot] fill between [of=A-RNN and B-RNN];
    \addplot[
        color=red
    ] table[x index=0, y index=1] {figures/cross-entropy-vs-length/odds-first/odds-first-test-rnn.dat};
    \draw[dashed] (axis cs:\pgfkeysvalueof{/pgfplots/xmin}, \threshold) -- (axis cs:\pgfkeysvalueof{/pgfplots/xmax}, \threshold);
    \draw[dashed] (axis cs:\thresholdsmallval, \pgfkeysvalueof{/pgfplots/ymin}) -- (axis cs:\thresholdsmallval, \pgfkeysvalueof{/pgfplots/ymax});
    \draw[dashed] (axis cs:\thresholdlongval, \pgfkeysvalueof{/pgfplots/ymin}) -- (axis cs:\thresholdlongval, \pgfkeysvalueof{/pgfplots/ymax});
    \node[above left] at (axis cs:\pgfkeysvalueof{/pgfplots/xmax}, \threshold) {\small $\uparrow$ Incorrect};
\end{axis}
\end{tikzpicture}}
        &\scalebox{0.8}{\begin{tikzpicture}
\def\threshold{0.6931471805599453}
\def\thresholdsmallval{40}
\def\thresholdlongval{80}
\begin{axis}[
    legend style={at={(1,1)}, anchor=north east, font=\tiny, opacity=0.6, text opacity=1, draw=gray},
    axis lines=left,
    xmin=0,
    title={\languagebinaryaddition{}},
    ymin=0,
    enlarge y limits=0.1,
    xtick={0, 100, 200, 300, 400, 500},
    tick label style={font=\tiny}
]
    \addplot[
        name path=A-Tf,
        color=blue,
        opacity=0.2,
        forget plot
    ] table[x index=0, y index=2] {figures/cross-entropy-vs-length/binary-addition/binary-addition-test-transformer.dat};     
    \addplot[
        name path=B-Tf,
        color=blue,
        opacity=0.2,
        forget plot
    ] table[x index=0, y index=3] {figures/cross-entropy-vs-length/binary-addition/binary-addition-test-transformer.dat};   
    \addplot[color=blue,
        opacity=0.2, forget plot] fill between [of=A-Tf and B-Tf];
    \addplot[
        color=blue
    ] table[x index=0, y index=1] {figures/cross-entropy-vs-length/binary-addition/binary-addition-test-transformer.dat};
    \addplot[
        name path=A-LSTM,
        color=green,
        opacity=0.2,
        forget plot
    ] table[x index=0, y index=2] {figures/cross-entropy-vs-length/binary-addition/binary-addition-test-lstm.dat};     
    \addplot[
        name path=B-LSTM,
        color=green,
        opacity=0.2,
        forget plot
    ] table[x index=0, y index=3] {figures/cross-entropy-vs-length/binary-addition/binary-addition-test-lstm.dat};   
    \addplot[color=green,
        opacity=0.2, forget plot] fill between [of=A-LSTM and B-LSTM];
    \addplot[
        color=green
    ] table[x index=0, y index=1] {figures/cross-entropy-vs-length/binary-addition/binary-addition-test-lstm.dat};
    \addplot[
        name path=A-RNN,
        color=red,
        opacity=0.2,
        forget plot
    ] table[x index=0, y index=2] {figures/cross-entropy-vs-length/binary-addition/binary-addition-test-rnn.dat};     
    \addplot[
        name path=B-RNN,
        color=red,
        opacity=0.2,
        forget plot
    ] table[x index=0, y index=3] {figures/cross-entropy-vs-length/binary-addition/binary-addition-test-rnn.dat};   
    \addplot[color=red,
        opacity=0.2, forget plot] fill between [of=A-RNN and B-RNN];
    \addplot[
        color=red
    ] table[x index=0, y index=1] {figures/cross-entropy-vs-length/binary-addition/binary-addition-test-rnn.dat};
    \draw[dashed] (axis cs:\pgfkeysvalueof{/pgfplots/xmin}, \threshold) -- (axis cs:\pgfkeysvalueof{/pgfplots/xmax}, \threshold);
    \draw[dashed] (axis cs:\thresholdsmallval, \pgfkeysvalueof{/pgfplots/ymin}) -- (axis cs:\thresholdsmallval, \pgfkeysvalueof{/pgfplots/ymax});
    \draw[dashed] (axis cs:\thresholdlongval, \pgfkeysvalueof{/pgfplots/ymin}) -- (axis cs:\thresholdlongval, \pgfkeysvalueof{/pgfplots/ymax});
    \node[above left] at (axis cs:\pgfkeysvalueof{/pgfplots/xmax}, \threshold) {\small $\uparrow$ Incorrect};
\end{axis}
\end{tikzpicture}}\\
        \scalebox{0.8}{\begin{tikzpicture}
\def\threshold{0.6931471805599453}
\def\thresholdsmallval{40}
\def\thresholdlongval{80}
\begin{axis}[
    legend style={at={(1,1)}, anchor=north east, font=\tiny, opacity=0.6, text opacity=1, draw=gray},
    axis lines=left,
    xlabel={Input Length},
    xmin=0,
    ylabel={Cross-Entropy $\leftarrow$},
    title={\languagebinarymultiplication{}},
    ymin=0,
    enlarge y limits=0.1,
    xtick={0, 100, 200, 300, 400, 500},
    tick label style={font=\tiny}
]
    \addplot[
        name path=A-Tf,
        color=blue,
        opacity=0.2,
        forget plot
    ] table[x index=0, y index=2] {figures/cross-entropy-vs-length/binary-multiplication/binary-multiplication-test-transformer.dat};     
    \addplot[
        name path=B-Tf,
        color=blue,
        opacity=0.2,
        forget plot
    ] table[x index=0, y index=3] {figures/cross-entropy-vs-length/binary-multiplication/binary-multiplication-test-transformer.dat};   
    \addplot[color=blue,
        opacity=0.2, forget plot] fill between [of=A-Tf and B-Tf];
    \addplot[
        color=blue
    ] table[x index=0, y index=1] {figures/cross-entropy-vs-length/binary-multiplication/binary-multiplication-test-transformer.dat};
    \addplot[
        name path=A-LSTM,
        color=green,
        opacity=0.2,
        forget plot
    ] table[x index=0, y index=2] {figures/cross-entropy-vs-length/binary-multiplication/binary-multiplication-test-lstm.dat};     
    \addplot[
        name path=B-LSTM,
        color=green,
        opacity=0.2,
        forget plot
    ] table[x index=0, y index=3] {figures/cross-entropy-vs-length/binary-multiplication/binary-multiplication-test-lstm.dat};   
    \addplot[color=green,
        opacity=0.2, forget plot] fill between [of=A-LSTM and B-LSTM];
    \addplot[
        color=green
    ] table[x index=0, y index=1] {figures/cross-entropy-vs-length/binary-multiplication/binary-multiplication-test-lstm.dat};
    \addplot[
        name path=A-RNN,
        color=red,
        opacity=0.2,
        forget plot
    ] table[x index=0, y index=2] {figures/cross-entropy-vs-length/binary-multiplication/binary-multiplication-test-rnn.dat};     
    \addplot[
        name path=B-RNN,
        color=red,
        opacity=0.2,
        forget plot
    ] table[x index=0, y index=3] {figures/cross-entropy-vs-length/binary-multiplication/binary-multiplication-test-rnn.dat};   
    \addplot[color=red,
        opacity=0.2, forget plot] fill between [of=A-RNN and B-RNN];
    \addplot[
        color=red
    ] table[x index=0, y index=1] {figures/cross-entropy-vs-length/binary-multiplication/binary-multiplication-test-rnn.dat};
    \draw[dashed] (axis cs:\pgfkeysvalueof{/pgfplots/xmin}, \threshold) -- (axis cs:\pgfkeysvalueof{/pgfplots/xmax}, \threshold);
    \draw[dashed] (axis cs:\thresholdsmallval, \pgfkeysvalueof{/pgfplots/ymin}) -- (axis cs:\thresholdsmallval, \pgfkeysvalueof{/pgfplots/ymax});
    \draw[dashed] (axis cs:\thresholdlongval, \pgfkeysvalueof{/pgfplots/ymin}) -- (axis cs:\thresholdlongval, \pgfkeysvalueof{/pgfplots/ymax});
    \node[above left] at (axis cs:\pgfkeysvalueof{/pgfplots/xmax}, \threshold) {\small $\uparrow$ Incorrect};
\end{axis}
\end{tikzpicture}}
        &\scalebox{0.8}{\begin{tikzpicture}
\def\threshold{0.6931471805599453}
\def\thresholdsmallval{40}
\def\thresholdlongval{80}
\begin{axis}[
    legend style={at={(1,1)}, anchor=north east, font=\tiny, opacity=0.6, text opacity=1, draw=gray},
    axis lines=left,
    xlabel={Input Length},
    xmin=0,
    title={\languagecomputesqrt{}},
    ymin=0,
    enlarge y limits=0.1,
    xtick={0, 100, 200, 300, 400, 500},
    ytick={0, 1},
    tick label style={font=\tiny}
]
    \addplot[
        name path=A-Tf,
        color=blue,
        opacity=0.2,
        forget plot
    ] table[x index=0, y index=2] {figures/cross-entropy-vs-length/compute-sqrt/compute-sqrt-test-transformer.dat};     
    \addplot[
        name path=B-Tf,
        color=blue,
        opacity=0.2,
        forget plot
    ] table[x index=0, y index=3] {figures/cross-entropy-vs-length/compute-sqrt/compute-sqrt-test-transformer.dat};   
    \addplot[color=blue,
        opacity=0.2, forget plot] fill between [of=A-Tf and B-Tf];
    \addplot[
        color=blue
    ] table[x index=0, y index=1] {figures/cross-entropy-vs-length/compute-sqrt/compute-sqrt-test-transformer.dat};
    \addplot[
        name path=A-LSTM,
        color=green,
        opacity=0.2,
        forget plot
    ] table[x index=0, y index=2] {figures/cross-entropy-vs-length/compute-sqrt/compute-sqrt-test-lstm.dat};     
    \addplot[
        name path=B-LSTM,
        color=green,
        opacity=0.2,
        forget plot
    ] table[x index=0, y index=3] {figures/cross-entropy-vs-length/compute-sqrt/compute-sqrt-test-lstm.dat};   
    \addplot[color=green,
        opacity=0.2, forget plot] fill between [of=A-LSTM and B-LSTM];
    \addplot[
        color=green
    ] table[x index=0, y index=1] {figures/cross-entropy-vs-length/compute-sqrt/compute-sqrt-test-lstm.dat};
    \addplot[
        name path=A-RNN,
        color=red,
        opacity=0.2,
        forget plot
    ] table[x index=0, y index=2] {figures/cross-entropy-vs-length/compute-sqrt/compute-sqrt-test-rnn.dat};     
    \addplot[
        name path=B-RNN,
        color=red,
        opacity=0.2,
        forget plot
    ] table[x index=0, y index=3] {figures/cross-entropy-vs-length/compute-sqrt/compute-sqrt-test-rnn.dat};   
    \addplot[color=red,
        opacity=0.2, forget plot] fill between [of=A-RNN and B-RNN];
    \addplot[
        color=red
    ] table[x index=0, y index=1] {figures/cross-entropy-vs-length/compute-sqrt/compute-sqrt-test-rnn.dat};
    \draw[dashed] (axis cs:\pgfkeysvalueof{/pgfplots/xmin}, \threshold) -- (axis cs:\pgfkeysvalueof{/pgfplots/xmax}, \threshold);
    \draw[dashed] (axis cs:\thresholdsmallval, \pgfkeysvalueof{/pgfplots/ymin}) -- (axis cs:\thresholdsmallval, \pgfkeysvalueof{/pgfplots/ymax});
    \draw[dashed] (axis cs:\thresholdlongval, \pgfkeysvalueof{/pgfplots/ymin}) -- (axis cs:\thresholdlongval, \pgfkeysvalueof{/pgfplots/ymax});
    \node[above left] at (axis cs:\pgfkeysvalueof{/pgfplots/xmax}, \threshold) {\small $\uparrow$ Incorrect};
\end{axis}
\end{tikzpicture}}
        &\scalebox{0.8}{\begin{tikzpicture}
\def\threshold{0.6931471805599453}
\def\thresholdsmallval{40}
\def\thresholdlongval{80}
\begin{axis}[
    legend style={at={(1,1)}, anchor=north east, font=\tiny, opacity=0.6, text opacity=1, draw=gray},
    axis lines=left,
    xlabel={Input Length},
    xmin=0,
    title={\languagebucketsort{}},
    ymin=0,
    enlarge y limits=0.1,
    xtick={0, 100, 200, 300, 400, 500},
    tick label style={font=\tiny}
]
    \addplot[
        name path=A-Tf,
        color=blue,
        opacity=0.2,
        forget plot
    ] table[x index=0, y index=2] {figures/cross-entropy-vs-length/bucket-sort/bucket-sort-test-transformer.dat};     
    \addplot[
        name path=B-Tf,
        color=blue,
        opacity=0.2,
        forget plot
    ] table[x index=0, y index=3] {figures/cross-entropy-vs-length/bucket-sort/bucket-sort-test-transformer.dat};   
    \addplot[color=blue,
        opacity=0.2, forget plot] fill between [of=A-Tf and B-Tf];
    \addplot[
        color=blue
    ] table[x index=0, y index=1] {figures/cross-entropy-vs-length/bucket-sort/bucket-sort-test-transformer.dat};
    \addplot[
        name path=A-LSTM,
        color=green,
        opacity=0.2,
        forget plot
    ] table[x index=0, y index=2] {figures/cross-entropy-vs-length/bucket-sort/bucket-sort-test-lstm.dat};     
    \addplot[
        name path=B-LSTM,
        color=green,
        opacity=0.2,
        forget plot
    ] table[x index=0, y index=3] {figures/cross-entropy-vs-length/bucket-sort/bucket-sort-test-lstm.dat};   
    \addplot[color=green,
        opacity=0.2, forget plot] fill between [of=A-LSTM and B-LSTM];
    \addplot[
        color=green
    ] table[x index=0, y index=1] {figures/cross-entropy-vs-length/bucket-sort/bucket-sort-test-lstm.dat};
    \addplot[
        name path=A-RNN,
        color=red,
        opacity=0.2,
        forget plot
    ] table[x index=0, y index=2] {figures/cross-entropy-vs-length/bucket-sort/bucket-sort-test-rnn.dat};     
    \addplot[
        name path=B-RNN,
        color=red,
        opacity=0.2,
        forget plot
    ] table[x index=0, y index=3] {figures/cross-entropy-vs-length/bucket-sort/bucket-sort-test-rnn.dat};   
    \addplot[color=red,
        opacity=0.2, forget plot] fill between [of=A-RNN and B-RNN];
    \addplot[
        color=red
    ] table[x index=0, y index=1] {figures/cross-entropy-vs-length/bucket-sort/bucket-sort-test-rnn.dat};
    \draw[dashed] (axis cs:\pgfkeysvalueof{/pgfplots/xmin}, \threshold) -- (axis cs:\pgfkeysvalueof{/pgfplots/xmax}, \threshold);
    \draw[dashed] (axis cs:\thresholdsmallval, \pgfkeysvalueof{/pgfplots/ymin}) -- (axis cs:\thresholdsmallval, \pgfkeysvalueof{/pgfplots/ymax});
    \draw[dashed] (axis cs:\thresholdlongval, \pgfkeysvalueof{/pgfplots/ymin}) -- (axis cs:\thresholdlongval, \pgfkeysvalueof{/pgfplots/ymax});
    \node[above left] at (axis cs:\pgfkeysvalueof{/pgfplots/xmax}, \threshold) {\small $\uparrow$ Incorrect};
\end{axis}
\end{tikzpicture}}
    \end{tabular}
    \caption{Recognition cross-entropy (lower is better) vs.\ input length for the models shown under ``Expressivity'' in \cref{tab:main-results}.}
\end{figure}

\clearpage

\section{Edit distance from languages}
\label{sec:edit-distance}

In this section, we give a formal definition of string--language edit distance and provide an algorithm for computing it for regular languages.
\begin{definition}
    \label{def:language-edit-distance}
    For any string $\thestring$ and language $\thelanguage$, the \defn{edit distance} $\languageeditdistance{\thelanguage}{\thestring}$ is defined as
    \begin{equation}
        \languageeditdistance{\thelanguage}{\thestring} \defeq \min_{\theotherstring \in \thelanguage} \stringeditdistance{\theotherstring}{\thestring},
    \end{equation}
    where $\stringeditdistance{\theotherstring}{\thestring}$ is the Levenshtein distance between $\thestring$ and $\theotherstring$, or the minimal number of single-symbol edits (insertions, deletions, and replacements) required to transform $\thestring$ into $\theotherstring$.
\end{definition}
Suppose $\thelanguage$ is a regular language over alphabet $\thealphabet$ recognized by DFA $\thedfa$. Then for any $\thestring \in \kleene{\thealphabet}$, $\languageeditdistance{\thelanguage}{\thestring}$ can be computed as follows:
\begin{enumerate}
    \item Construct a nondeterministic weighted finite automaton (WFA) $\thestringeditwfa$ over the \newterm{tropical semiring} $(\nonnegativeset \cup \{\infty\}, \min, +, \infty, 0)$ that assigns weight $\stringeditdistance{\theotherstring}{\thestring}$ to every string $\theotherstring$ (\cref{fig:chain-fsa});
    \item Lift $\thedfa$ to the tropical semiring by assigning weight 0 to all transitions and accepting states, resulting in a WDFA $\thetropicaldfa$;
    \item Intersect $\thetropicaldfa$ and $\thestringeditwfa$ using the standard WFA intersection algorithm \citep{mohri-2009-algorithms}, resulting in an automaton $\theintersectedwfa$ that encodes $\stringeditdistance{\theotherstring}{\thestring}$ for every $\theotherstring \in \thelanguage$; and
    \item Compute the shortest path in $\theintersectedwfa$ using the Floyd--Warshall algorithm \citep{floyd-1962-algorithm,warshall-1962-theorem}.
\end{enumerate}

The rest of this section describes these steps in more detail and argues their correctness.

\subsection{Weighted finite automata}

We start by defining WFAs, which are used in multiple steps. This is a more general, nondeterministic version of \cref{def:semiring-partial-dfa}.
\begin{definition}
    \label{def:semiring-wfa}
    A \defn{weighted finite automaton (WFA)} over semiring $(\thesemiringset, \oplus, \otimes, \semiringzero, \semiringone)$ is a tuple $\theautomaton = (\thestateset, \thealphabet, \thetransitionfunc, \thestartstate, \theacceptweightfunc)$ such that
    \begin{enumerate*}[label=(\arabic*)]
        \item $\thestateset$ is a finite set of states,
        \item $\thealphabet$ is an alphabet,
        \item $\thetransitionfunc \colon \thestateset \times (\thealphabet \cup \{\emptystring\}) \times \thestateset \rightarrow \thesemiringset$ is the transition function, and
        \item $\thestartstate \in \thestateset$ is the start state;\footnote{Some definitions use an initial weight function $\lambda \colon \thestateset \rightarrow \thesemiringset$ to indicate start states. For simplicity, we assume one start state with a weight of $\semiringone$.} and
        \item $\theacceptweightfunc \colon \thestateset \rightarrow \thesemiringset$ is the accept weight function.
    \end{enumerate*}
    If $\thetransitionfunc(\thestatefrom, \thesymbol, \thestateto) = \theweight$, we say that $\theautomaton$ has a transition from $\thestatefrom$ to $\thestateto$ that scans $\thesymbol$ with weight $\theweight$, and we write $\wfatransition{\thestatefrom}{\thesymbol}{\theweight}{\thestateto} \in \thetransitionfunc$.
\end{definition}
This definition is nondeterministic in the sense that it permits multiple outgoing transitions on the same symbol from the same state. We define paths and path weights in a similar way to \cref{sec:dfa-sampling}.
\begin{definition}
    \label{def:semiring-wfa-path}
    A \defn{path} $\thepath$ in WFA $\theautomaton$ is a sequence of states and transitions
    \begin{equation}
        \thepath = \thepathstate_0 \xrightarrow{\thesymbol_1 / \theweight_1} \thepathstate_1 \cdots \thepathstate_{\thepathlength-1} \xrightarrow{\thesymbol_{\thepathlength} / \theweight_{\thepathlength}} \thepathstate_{\thepathlength}
    \end{equation}
    such that
    \begin{enumerate}
        \item $\thepathstate_0 = \thestartstate$, and
        \item for all $i = 0, \ldots, \thepathlength - 1$, $\wfatransition{\thepathstate_i}{\thesymbol_{i+1}}{\theweight_{i+1}}{\thepathstate_{i+1}} \in \thetransitionfunc$.
    \end{enumerate}
    We say that $\thepath$ \defn{scans} the string $\thesymbol_1 \cdots \thesymbol_{\thepathlength}$, and that the \defn{path weight} of $\thepath$ is
    \begin{equation}
        \pathweight{\thepath} \defeq \left(\bigotimes_{i=1}^{\thepathlength} \theweight_i \right) \otimes \theacceptweightfunc(\thepathstate_{\thepathlength}).
    \end{equation}
\end{definition}
Note that in a nondeterministic WFA, multiple paths may scan the same string. We denote the set of all paths of $\theautomaton$ as $\allpathset{\theautomaton}$, and the set of paths that scan $\thestring$ as $\stringpathset{\theautomaton}{\thestring}$.

The weight that a WFA assigns to a string is the sum of the weights of all paths that scan that string.
\begin{definition} \label{def:stringsum}
    The \defn{stringsum} of string $\thestring \in \kleene{\thealphabet}$ under WFA $\theautomaton$ is
    \begin{equation}
        \stringsum{\theautomaton}{\thestring} \defeq \bigoplus_{\thepath \in \stringpathset{\theautomaton}{\thestring}} \pathweight{\thepath}.
    \end{equation}
\end{definition}

We also make use of the sum of the weights of all paths in a WFA.
\begin{definition}
    The \defn{allsum} of WFA $\theautomaton$ is
    \begin{subequations}
    \begin{align}
        \allsum{\theautomaton} &\defeq \bigoplus_{\thepath \in \allpathset{\theautomaton}} \pathweight{\thepath} \\
        &= \bigoplus_{\thestring \in \kleene{\thealphabet}} \stringsum{\theautomaton}{\thestring}.
    \end{align}
    \end{subequations}
\end{definition}

\subsection{Algorithm details}

First, we encode the input string $\thestring = \thesymboli{1} \thesymboli{2} \cdots \thesymboli{\thelength}$ into a chain-like WFA $\thestringeditwfa$ in the tropical semiring, as shown in \cref{fig:chain-fsa}.

\begin{figure}[h]
    \centering
    \begin{tikzpicture}
    \node[state, initial] (q0) {$q_0$};
    \node[state, right of=q0] (q1) {$q_1$};
    \node[right of=q1] (dots) {$\cdots$};
    \node[state, accepting, right of=dots] (qn) {$q_{\thelength}$};
    \draw
    (q0) edge node {\begin{tabular}{c} $\thesymboli{1} / 0$ \\ $\emptystring / 1$ \\ $\thesymbol / 1 ~ (\thesymbol \in \thealphabet \setminus \{ \thesymboli{1} \})$\end{tabular}} (q1)
    (q1) edge node {\begin{tabular}{c} $\thesymboli{2} / 0$ \\ $\emptystring / 1$ \\ $\thesymbol / 1 ~ (\thesymbol \in \thealphabet \setminus \{ \thesymboli{2} \})$\end{tabular}} (dots)
    (dots) edge node {\begin{tabular}{c} $\thesymboli{\thelength} / 0$ \\ $\emptystring / 1$ \\ $\thesymbol / 1 ~ (\thesymbol \in \thealphabet \setminus \{ \thesymboli{\thelength} \})$\end{tabular}} (qn)
    (q0) edge[loop below] node {$\thesymbol / 1 ~ (\thesymbol \in \thealphabet)$} (q0)
    (q1) edge[loop below] node {$\thesymbol / 1 ~ (\thesymbol \in \thealphabet)$} (q1)
    (qn) edge[loop below] node {$\thesymbol / 1 ~ (\thesymbol \in \thealphabet)$} (qn)
    ;
    \end{tikzpicture}
    \caption{Diagram of $\thestringeditwfa$. The double-circled state has accept weight 0; the others have accept weight $\infty$.}
    \label{fig:chain-fsa}
\end{figure}
Given an input string $\theotherstring \in \kleene{\thealphabet}$, every path in $\thestringeditwfa$ that scans $\theotherstring$ encodes a way of transforming $\thestring$ into $\theotherstring$. Every time a transition deviates from scanning $\thestring$ by inserting, replacing, or deleting a symbol, it incurs a cost of 1. Taking the minimum weight of any path that scans $\theotherstring$ gives $\stringeditdistance{\theotherstring}{\thestring}$.
\begin{lemma}
    \label{lemma:string-automaton}
    For all $\theotherstring \in \kleene{\thealphabet}$, $\stringsum{\thestringeditwfa}{\theotherstring} = \stringeditdistance{\theotherstring}{\thestring}$.
\end{lemma}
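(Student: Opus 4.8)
The plan is to match the paths of $\thestringeditwfa$ to the classical dynamic-programming recurrence for Levenshtein distance. First I would record the structural fact that, by construction (\cref{fig:chain-fsa}), every transition of $\thestringeditwfa$ either advances the state from $q_{j-1}$ to $q_j$ (a ``forward'' transition, reading $\thesymboli{j}$ at cost $0$, a symbol $\thesymbol \neq \thesymboli{j}$ at cost $1$, or $\emptystring$ at cost $1$) or is a self-loop (reading some $\thesymbol \in \thealphabet$ at cost $1$); hence along any path the state index is non-decreasing. A path contributes its inner weight to the stringsum if it ends at $q_{\thelength}$ (where $\theacceptweightfunc = 0 = \semiringone$) and contributes $\infty = \semiringzero$ otherwise; since $\semiringzero$ is $\oplus$-neutral and there is always at least one $q_0$-to-$q_{\thelength}$ path scanning $\theotherstring$ (do $\thelength$ deletions, then $\stringlength{\theotherstring}$ insertions), we obtain
\[
  \stringsum{\thestringeditwfa}{\theotherstring} = \min\bigl\{\, \innerpathweight{\thepath} : \thepath \in \stringpathset{\thestringeditwfa}{\theotherstring},\ \thepath \text{ ends at } q_{\thelength} \,\bigr\}.
\]

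Next I would prove, by induction on $i + \stringlength{\theotherotherstring}$, the following generalization: for all $0 \le i \le \thelength$ and all $\theotherotherstring \in \kleene{\thealphabet}$, the minimum inner weight over paths from $q_0$ to $q_i$ scanning $\theotherotherstring$ equals $\stringeditdistance{\theotherotherstring}{\thesymboli{1}\cdots\thesymboli{i}}$, the Levenshtein distance between $\theotherotherstring$ and the length-$i$ prefix of $\thestring$. The base case $i = \stringlength{\theotherotherstring} = 0$ is the empty path, of weight $0 = \stringeditdistance{\emptystring}{\emptystring}$. For the inductive step, consider a path from $q_0$ to $q_i$ scanning $\theotherotherstring$ and case on its last transition; because state indices are non-decreasing, the prefix subpath is again a path of the same form, so the induction hypothesis applies to it. Writing $\theotherotherstring = \theotherotherstring'\thesymbol$ when $\theotherotherstring \neq \emptystring$, the possibilities are: (a) a self-loop at $q_i$ reading $\thesymbol$, leaving a path to $q_i$ scanning $\theotherotherstring'$, contributing $\stringeditdistance{\theotherotherstring'}{\thesymboli{1}\cdots\thesymboli{i}} + 1$ (an \emph{insertion}); (b) a forward transition $q_{i-1}\to q_i$ reading $\thesymbol$, leaving a path to $q_{i-1}$ scanning $\theotherotherstring'$, contributing $\stringeditdistance{\theotherotherstring'}{\thesymboli{1}\cdots\thesymboli{i-1}} + \indicator{\thesymbol \neq \thesymboli{i}}$ (a \emph{match} or \emph{substitution}); and (c) a forward transition $q_{i-1}\to q_i$ reading $\emptystring$, leaving a path to $q_{i-1}$ scanning all of $\theotherotherstring$, contributing $\stringeditdistance{\theotherotherstring}{\thesymboli{1}\cdots\thesymboli{i-1}} + 1$ (a \emph{deletion}). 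Options (a), (b) are available only when $\theotherotherstring \neq \emptystring$, and (b), (c) only when $i \geq 1$; taking the minimum of the applicable options and invoking the induction hypothesis reproduces exactly the standard Levenshtein recurrence, which closes the induction.

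Specializing this generalization to $i = \thelength$ and $\theotherotherstring = \theotherstring$ and combining with the displayed identity yields $\stringsum{\thestringeditwfa}{\theotherstring} = \stringeditdistance{\theotherstring}{\thestring}$, proving the lemma. I expect the only real work to be organizational: correctly enumerating the applicable cases in the inductive step — including the boundary situations $i = 0$ and $\theotherotherstring = \emptystring$, where only option (a) or only option (c) survives — and checking that they are in one-to-one, cost-preserving correspondence with the three branches of the Levenshtein recurrence. The design of $\thestringeditwfa$ makes this correspondence essentially immediate (forward $\emptystring$-transitions are deletions, self-loops are insertions, and forward symbol-transitions are matches or substitutions), so there is no substantive difficulty beyond this bookkeeping.
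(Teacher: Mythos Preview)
Your proposal is correct. It takes a somewhat different route from the paper's proof, and the difference is worth noting.

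The paper argues at the level of a \emph{semantic bijection}: every accepting path scanning $\theotherstring$ encodes an edit script that turns $\thestring$ into $\theotherstring$ (forward $\emptystring$-transitions are deletions, forward non-matching symbol transitions are substitutions, self-loops are insertions, and forward matching transitions are matches), its weight equals the number of edits, and hence the tropical stringsum is the minimum edit count. Your proof instead proceeds \emph{syntactically}: you define the quantity $d(i,\theotherotherstring) = \min\{\innerpathweight{\thepath} : \thepath$ goes from $q_0$ to $q_i$ scanning $\theotherotherstring\}$ and verify by induction on $i + |\theotherotherstring|$ that it satisfies the Wagner--Fischer recurrence for Levenshtein distance, casing on the last transition. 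Both arguments exploit the same structural correspondence, but yours makes the match with the standard DP table explicit and handles the boundary cases ($i=0$, $\theotherotherstring=\emptystring$) and the role of the accept weight at $q_{\thelength}$ more carefully than the paper does. The paper's version is shorter and conveys the intuition directly; yours is more self-contained and leaves less to the reader. Either is perfectly acceptable here.
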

\begin{proof}
    For every $\thesymboli{i}$ in $\thestring$, $\thestringeditwfa$ either matches a symbol in $\theotherstring$ with $\thestringeditwfa$ with cost 0, simulates the deletion of $\thesymboli{i}$ with cost 1 so that it can continue scanning $\theotherstring$ some other way, or simulates the replacement of $\thesymboli{i}$ with a symbol in $\theotherstring$ with cost 1. Before and after symbols in $\thestring$, $\thestringeditwfa$ can also simulate the insertion of any number of symbols, each with cost 1. So, a path is in $\stringpathset{\thestringeditwfa}{\theotherstring}$ iff it corresponds to a way of changing $\thestring$ into $\theotherstring$, and its weight is the number of edits it performs to turn $\thestring$ into $\theotherstring$. The stringsum gives the minimum number of edits.
    \begin{subequations}
        \begin{align}
            \stringsum{\thestringeditwfa}{\theotherstring} &= \bigoplus_{\thepath \in 
    \stringpathset{\thestringeditwfa}{\theotherstring}} \pathweight{\thepath} \\
            &= \min_{\thepath \in \stringpathset{\thestringeditwfa}{\theotherstring}} \pathweight{\thepath} \\
            &= \stringeditdistance{\theotherstring}{\thestring}
        \end{align}
    \end{subequations}
\end{proof}

Next, we lift the weights of $\thedfa$ into the tropical semiring, resulting in a WDFA $\thetropicaldfa$. The weights of all transitions and the accept weights of all accepting states in $\thedfa$ are set to 0 in $\thetropicaldfa$. All other weights are set to $\infty$. So, $\thetropicaldfa$ assigns weight 0 to all strings in $\thelanguage$, and weight $\infty$ to all others.
\begin{lemma}
    \label{lemma:tropical-automaton}
    For all $\theotherstring \in \kleene{\thealphabet}$,
    \begin{equation}
        \stringsum{\thetropicaldfa}{\theotherstring} = \begin{cases}
            0 & \text{if } \theotherstring \in \thelanguage \\
            \infty & \text{otherwise.}
        \end{cases}
    \end{equation}
\end{lemma}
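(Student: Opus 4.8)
The plan is to mirror the proof of \cref{lemma:string-automaton}: unfold the definition of the stringsum (\cref{def:stringsum}) for $\thetropicaldfa$, using that $\thetropicaldfa$ is \emph{deterministic} (it is a WDFA obtained by reweighting the partial DFA $\thedfa$), so at most one path scans any given string. First I would recall that in the tropical semiring $(\nonnegativeset \cup \{\infty\}, \min, +, \infty, 0)$ the product $\otimes$ is ordinary addition, the multiplicative identity $\semiringone$ is $0$, and the additive identity $\semiringzero$ — hence the value of an empty $\bigoplus$ — is $\infty$.

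Then I would split on whether $\thedfa$ has a path from $\thestartstate$ scanning $\theotherstring$. If it does not, then $\stringpathset{\thetropicaldfa}{\theotherstring} = \emptyset$, so $\stringsum{\thetropicaldfa}{\theotherstring} = \bigoplus_{\thepath \in \stringpathset{\thetropicaldfa}{\theotherstring}} \pathweight{\thepath} = \semiringzero = \infty$; and in this case $\thedfa$ (and hence $\thelanguage$) does not contain $\theotherstring$, so the asserted value $\infty$ is correct. If such a path exists, determinism makes it the unique element $\thepath = \thepathstate_0 \xrightarrow{\thesymbol_1/\theweight_1} \cdots \xrightarrow{\thesymbol_{\thepathlength}/\theweight_{\thepathlength}} \thepathstate_{\thepathlength}$ of $\stringpathset{\thetropicaldfa}{\theotherstring}$, so $\stringsum{\thetropicaldfa}{\theotherstring} = \pathweight{\thepath}$. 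By construction every transition weight $\theweight_i$ of $\thetropicaldfa$ equals $0$, so (including the empty-product case $\theotherstring = \emptystring$, which gives $\semiringone = 0$) we have $\bigotimes_{i=1}^{\thepathlength} \theweight_i = 0$, and thus by \cref{def:semiring-wfa-path}, $\pathweight{\thepath} = 0 + \theacceptweightfunc(\thepathstate_{\thepathlength}) = \theacceptweightfunc(\thepathstate_{\thepathlength})$. Finally, $\theacceptweightfunc(\thepathstate_{\thepathlength})$ is $0$ when $\thepathstate_{\thepathlength}$ is an accepting state of $\thedfa$ and $\infty$ otherwise; since $\thepathstate_{\thepathlength}$ is exactly the state reached by running $\thedfa$ on $\theotherstring$, it is accepting iff $\theotherstring \in \thelanguage$. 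Assembling the cases gives $\stringsum{\thetropicaldfa}{\theotherstring} = 0$ if $\theotherstring \in \thelanguage$ and $\infty$ otherwise.

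There is no real obstacle here; the only care needed is bookkeeping at the boundary cases — the empty string (empty product of transition weights) and strings with no scanning path in the \emph{partial} DFA (empty sum) — and being explicit that the stringsum over a possibly large path set collapses to a single term precisely because $\thedfa$ is deterministic. I would state these points but not belabor them.
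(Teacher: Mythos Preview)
Your proposal is correct and follows essentially the same approach as the paper: unfold the stringsum in the tropical semiring, use that $\thetropicaldfa$ inherits determinism from $\thedfa$ so at most one path scans $\theotherstring$, and read off the result from the transition and accept weights. Your case split is in fact slightly more careful than the paper's, since you explicitly handle the situation where a path scanning $\theotherstring$ exists but ends in a non-accepting state (path weight $\infty$), whereas the paper's proof somewhat loosely treats the $\theotherstring \notin \thelanguage$ case as an empty sum.
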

\begin{proof}
    By definition, $\thedfa$ has a path that scans $\theotherstring$ iff $\theotherstring \in \thelanguage$. By construction, $\thetropicaldfa$ has a path that scans $\theotherstring$ with weight 0 iff $\thedfa$ has a path that scans $\theotherstring$. Therefore, if $\theotherstring \in \thelanguage$, the stringsum $\stringsum{\thetropicaldfa}{\theotherstring}$ is the minimum of one or more path weights of 0, so it is 0. Otherwise, the stringsum is a summation over an empty set of path weights, which is defined to be $\infty$ in the tropical semiring.
\end{proof}

Next, we intersect $\thestringeditwfa$ and $\thetropicaldfa$ using the standard intersection algorithm for WFAs, resulting in a WFA $\theintersectedwfa$ that assigns weight $\stringeditdistance{\theotherstring}{\thestring}$ to $\theotherstring$ if $\theotherstring \in \thelanguage$ and $\infty$ otherwise.
\begin{lemma}
    \label{lemma:intersect-automaton}
    For all $\theotherstring \in \kleene{\thealphabet}$,
    \begin{equation}
        \stringsum{\theintersectedwfa}{\theotherstring} = \begin{cases}
            \stringeditdistance{\theotherstring}{\thestring} & \text{if } \theotherstring \in \thelanguage \\
            \infty & \text{otherwise.}
        \end{cases}
    \end{equation}
\end{lemma}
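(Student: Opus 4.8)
The plan is to reduce the statement to \cref{lemma:string-automaton}, \cref{lemma:tropical-automaton}, and the defining stringsum property of weighted-automaton intersection. Applied to the two tropical-semiring automata $\thestringeditwfa$ and $\thetropicaldfa$, the standard intersection construction of \citet{mohri-2009-algorithms} produces a WFA $\theintersectedwfa$ whose stringsum factors as
\[
\stringsum{\theintersectedwfa}{\theotherstring} \;=\; \stringsum{\thestringeditwfa}{\theotherstring} \otimes \stringsum{\thetropicaldfa}{\theotherstring}
\qquad \text{for all } \theotherstring \in \kleene{\thealphabet},
\]
where $\otimes$ is the tropical product, i.e., ordinary addition $+$ with $\infty$ absorbing. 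I would first invoke this factorization. The one point worth a sentence is that $\thestringeditwfa$ has $\emptystring$-transitions (the simulated deletions), whereas $\thetropicaldfa$ has none; because $\emptystring$-transitions occur on a single side only, the intersection needs no $\emptystring$-filter, and a path of $\theintersectedwfa$ scanning $\theotherstring$ is exactly a pair consisting of a path of $\thestringeditwfa$ scanning $\theotherstring$ and a path of $\thetropicaldfa$ scanning $\theotherstring$, whose weight is the tropical product of the two. Summing (with $\bigoplus = \min$) over all such pairs yields the display; alternatively, one may simply cite it as the stringsum property of the intersection algorithm.

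With the factorization in hand, the lemma follows by a two-way case split. If $\theotherstring \in \thelanguage$, then \cref{lemma:string-automaton} gives $\stringsum{\thestringeditwfa}{\theotherstring} = \stringeditdistance{\theotherstring}{\thestring}$ and \cref{lemma:tropical-automaton} gives $\stringsum{\thetropicaldfa}{\theotherstring} = 0$, so
\[
\stringsum{\theintersectedwfa}{\theotherstring} = \stringeditdistance{\theotherstring}{\thestring} + 0 = \stringeditdistance{\theotherstring}{\thestring},
\]
using that $0$ is the tropical multiplicative identity. If $\theotherstring \notin \thelanguage$, then \cref{lemma:tropical-automaton} gives $\stringsum{\thetropicaldfa}{\theotherstring} = \infty$, and since $\infty$ is absorbing for $\otimes$ (and $\stringeditdistance{\theotherstring}{\thestring}$ is in any case finite), $\stringsum{\theintersectedwfa}{\theotherstring} = \stringeditdistance{\theotherstring}{\thestring} + \infty = \infty$. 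This is exactly the claimed case analysis.

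The only real obstacle is the $\emptystring$-transition bookkeeping in the intersection step: one must ensure the cited construction is used in a regime where its stringsum-factorization property genuinely holds, i.e., that the single-sided $\emptystring$-moves neither introduce spurious paths nor drop legitimate ones. I would discharge this by stating the one-sided $\emptystring$-move rule explicitly (advance along an $\emptystring$-transition in $\thestringeditwfa$ while remaining at the current state of $\thetropicaldfa$, incurring the corresponding cost) and observing that this makes $\stringpathset{\theintersectedwfa}{\theotherstring}$ coincide, weight-additively, with $\stringpathset{\thestringeditwfa}{\theotherstring} \times \stringpathset{\thetropicaldfa}{\theotherstring}$. Everything after that is the one-line case computation above.
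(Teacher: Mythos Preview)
Your proposal is correct and follows essentially the same approach as the paper: invoke the stringsum-factorization property of intersection, then apply \cref{lemma:string-automaton} and \cref{lemma:tropical-automaton} in a two-case split. Your extra care about one-sided $\emptystring$-transitions is more detail than the paper provides (it simply cites the factorization as the definition of intersection), but it is a reasonable point to address and does not change the structure of the argument.
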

\begin{proof}
    By definition of intersection, the stringsum of the intersected automaton is
    \begin{equation}
        \label{eq:wfa-intersection-definition}
        \stringsum{\theintersectedwfa}{\theotherstring} \defeq \stringsum{\thestringeditwfa}{\theotherstring} \otimes \stringsum{\thetropicaldfa}{\theotherstring}.
    \end{equation}
    Using \cref{lemma:string-automaton,lemma:tropical-automaton,eq:wfa-intersection-definition}, we have
    \begin{subequations}
    \begin{align}
        \stringsum{\theintersectedwfa}{\theotherstring} &= \begin{cases}
            \stringeditdistance{\theotherstring}{\thestring} + 0 & \text{if } \theotherstring \in \thelanguage \\
            \stringeditdistance{\theotherstring}{\thestring} + \infty & \text{otherwise}
        \end{cases} \\
        &= \begin{cases}
            \stringeditdistance{\theotherstring}{\thestring} & \text{if } \theotherstring \in \thelanguage \\
            \infty & \text{otherwise.}
        \end{cases}
    \end{align}
    \end{subequations}
\end{proof}

Finally, we compute the allsum of $\theintersectedwfa$, which gives us the minimum edit distance from $\thestring$ to any string $\theotherstring \in \thelanguage$. Let $\theintersectedwfa = (\thestateset, \thealphabet, \thetransitionfunc, \thestartstate, \theacceptweightfunc)$. To compute the allsum, we first use the Floyd--Warshall all-pairs shortest path algorithm.\footnote{This is a special case of Lehmann's algorithm (\cref{alg:lehmann}) The only difference is that we do not need to compute the star operation in \cref{alg:lehmann}, line \ref{line:lehmann-star}, which is always 0 in the tropical semiring.} to compute the shortest path weight from $\thestartstate$ to $\thestateto$, denoted $\lehmanntablepair{\thestartstate}{\thestateto}$, for every $\thestateto \in \thestateset$. We then compute the allsum as
\begin{subequations}
\begin{align}
    \allsum{\theintersectedwfa} &= \bigoplus_{\thestateto \in \thestateset} \lehmanntablepair{\thestartstate}{\thestateto} \otimes \theacceptweightfunc(\thestateto) \\
    &= \min_{\thestateto \in \thestateset} \lehmanntablepair{\thestartstate}{\thestateto} + \theacceptweightfunc(\thestateto).
\end{align}
\end{subequations}
This gives us the edit distance $\languageeditdistance{\thelanguage}{\thestring}$.

\begin{theorem}
    $\allsum{\theintersectedwfa} = \languageeditdistance{\thelanguage}{\thestring}$.
\end{theorem}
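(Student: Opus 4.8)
The plan is to unfold the two-line definition of $\allsum{\theintersectedwfa}$, evaluate each stringsum with \cref{lemma:intersect-automaton}, and collapse the result in the tropical semiring to the defining expression for $\languageeditdistance{\thelanguage}{\thestring}$.

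First I would use the string-indexed form of the allsum together with $\oplus = \min$ in the tropical semiring to write
\[
\allsum{\theintersectedwfa} = \bigoplus_{\theotherstring \in \kleene{\thealphabet}} \stringsum{\theintersectedwfa}{\theotherstring} = \min_{\theotherstring \in \kleene{\thealphabet}} \stringsum{\theintersectedwfa}{\theotherstring}.
\]
By \cref{lemma:intersect-automaton}, the summand equals $\stringeditdistance{\theotherstring}{\thestring}$ when $\theotherstring \in \thelanguage$ and $\infty$ otherwise. Since $\infty$ is the $\min$-identity of the semiring, every term with $\theotherstring \notin \thelanguage$ is inert and can be dropped from the minimum, so
\[
\allsum{\theintersectedwfa} = \min_{\theotherstring \in \thelanguage} \stringeditdistance{\theotherstring}{\thestring} = \languageeditdistance{\thelanguage}{\thestring}
\]
by \cref{def:language-edit-distance}. (When $\thelanguage = \emptyset$ both sides are $\infty$, so the identity still holds.)

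Two small points would then need a sentence each. First, these tropical sums range over infinite sets (infinitely many $\theotherstring$, and infinitely many paths per string because of the self-loops in \cref{fig:chain-fsa}), so I would observe that they are nonetheless well-defined: $\stringeditdistance{\theotherstring}{\thestring} \geq \bigl|\stringlength{\theotherstring} - \stringlength{\thestring}\bigr|$ and the alphabet is finite, hence only finitely many $\theotherstring$ have edit distance below any fixed bound, and the infimum over $\thelanguage$ (when nonempty) is attained. Second, to close the loop with the algorithmic description preceding the theorem, I would note that $\allsum{\theintersectedwfa} = \bigoplus_{\thestateto \in \thestateset} \lehmanntablepair{\thestartstate}{\thestateto} \otimes \theacceptweightfunc(\thestateto)$ is just the correctness of Floyd--Warshall (a specialization of \cref{alg:lehmann}) applied to $\theintersectedwfa$: $\lehmanntablepair{\thestartstate}{\thestateto}$ is the minimal inner weight of a path from $\thestartstate$ to $\thestateto$, and grouping $\allpathset{\theintersectedwfa}$ by final state and folding in the accept weights reproduces the path-indexed form of the allsum.

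The derivation above is essentially a one-line semiring computation, so the real work lives in the lemmas it invokes. I expect the main obstacle to be \cref{lemma:intersect-automaton} — more precisely, justifying the stringsum factorization $\stringsum{\theintersectedwfa}{\theotherstring} = \stringsum{\thestringeditwfa}{\theotherstring} \otimes \stringsum{\thetropicaldfa}{\theotherstring}$ for the standard WFA intersection construction when one factor ($\thestringeditwfa$) carries $\emptystring$-transitions — together with \cref{lemma:string-automaton}, i.e.\ the combinatorial correspondence between paths of \cref{fig:chain-fsa} that scan $\theotherstring$ and edit scripts transforming $\thestring$ into $\theotherstring$. Those are the places where I would be most careful; everything downstream is bookkeeping in the tropical semiring.
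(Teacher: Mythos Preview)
Your proposal is correct and follows essentially the same route as the paper: unfold the allsum as $\bigoplus_{\theotherstring \in \kleene{\thealphabet}} \stringsum{\theintersectedwfa}{\theotherstring}$, interpret $\oplus$ as $\min$, invoke \cref{lemma:intersect-automaton} to reduce to $\min_{\theotherstring \in \thelanguage} \stringeditdistance{\theotherstring}{\thestring}$, and finish with \cref{def:language-edit-distance}. The paper's proof is exactly this three-line chain, without your additional remarks on well-definedness of the infinite minimum, the $\thelanguage = \emptyset$ edge case, or the Floyd--Warshall correctness link; those are reasonable additions but not part of the paper's argument.
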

\begin{proof}
    By definition, the allsum is
    \begin{equation}
        \allsum{\theintersectedwfa} \defeq \bigoplus_{\theotherstring \in \kleene{\thealphabet}} \stringsum{\theintersectedwfa}{\theotherstring}.
    \end{equation}
    Using \cref{lemma:intersect-automaton,def:language-edit-distance}, we have
    \begin{subequations}
    \begin{align}
        \allsum{\theintersectedwfa} &= \min_{\theotherstring \in \kleene{\thealphabet}} \stringsum{\theintersectedwfa}{\theotherstring} \\
        &= \min_{\theotherstring \in \thelanguage} \stringeditdistance{\theotherstring}{\thestring} \\
        &= \stringeditdistance{\thelanguage}{\thestring}.
    \end{align}
    \end{subequations}
\end{proof}

\end{document}